\newtheorem{theorem}{Theorem}
\newtheorem{lemma}{Lemma}
\newcommand{\reals}{\mathbb{R}}
\newcommand{\E}{\mathbb{E}}
\newcommand{\sign}{\mathrm{sign}}
\newcommand{\bw}{\mathbf{w}}
\newcommand{\Ocal}{\mathcal{O}}
\newcommand{\Ncal}{\mathcal{N}}
\newcommand{\norm}[1]{\left\|#1\right\|}
\newcommand{\secref}[1]{Sec.~\ref{#1}}
\renewcommand{\eqref}[1]{Eq.~(\ref{#1})}
\newcommand{\lemref}[1]{Lemma~\ref{#1}}
\newcommand{\thmref}[1]{Thm.~\ref{#1}}
\newcommand\reallywidehat[1]{%
\savestack{\tmpbox}{\stretchto{%
  \scaleto{%
    \scalerel*[\widthof{\ensuremath{#1}}]{\kern-.6pt\bigwedge\kern-.6pt}%
    {\rule[-\textheight/2]{1ex}{\textheight}}
  }{\textheight}%
}{0.5ex}}%
\stackon[1pt]{#1}{\tmpbox}%
}
\title{Hardness of Learning Fixed Parities with Neural Networks}
\author{
Itamar Shoshani
\qquad
Ohad Shamir
\vspace{3pt}
\\
Weizmann Institute of Science \\
\texttt{\{itamar.shoshani,ohad.shamir\}@weizmann.ac.il}  
}
\date{}
\begin{document}

\maketitle

\begin{abstract}
  Learning parity functions is a canonical problem in learning theory, which although computationally tractable, is not amenable to standard learning algorithms such as gradient-based methods. This hardness is usually explained via statistical query lower bounds \citep{kearns1998efficient}. However, these bounds only imply that for any given algorithm, there is some worst-case parity function that will be hard to learn. Thus, they do not explain why \emph{fixed} parities -- say, the full parity function over all coordinates -- are difficult to learn in practice, at least with standard predictors and gradient-based methods \citep{abbe2022non}. In this paper, we address this open problem, by showing that for \emph{any} fixed parity of some minimal size, using it as a target function to train one-hidden-layer ReLU networks with perturbed gradient descent will fail to produce anything meaningful. To establish this, we prove a new result about the decay of the Fourier coefficients of linear threshold (or weighted majority) functions, which may be of independent interest. 
\end{abstract}

\section{Introduction}
\label{sec:introduction}
Learning parities -- namely, functions of the form $p_S(x)=\prod_{j\in S} x_j$ for $x\in \{-1,+1\}^d$ and some $S\subseteq \{1,\ldots,d\}$ -- is one of the most basic and well-studied problems in learning theory (\cite{kearns1994introduction,blum2003noise}). In particular, it serves as an important example of a class of predictors that are computationally difficult to learn for large classes of algorithms, even though they are PAC-learnable with modest sample complexity. Indeed, since the parity function can be represented as a linear function over the GF(2) field, the subset $S$ can be easily derived from $\Ocal(d)$ samples of the form $(x,p_S(x))$ (where $x$ is uniform over $\{-1,+1\}^d$), by using Gaussian elimination to solve an associated set of linear equations over GF(2). However, strong empirical evidence suggests that $p_S$ cannot be learned using more standard general-purpose learning methods, and in particular gradient methods, once the dimension $d$ is even moderately large \citep{shalev2017failures,abbe2020poly,nachum2020symmetry}.

The canonical learning-theoretic approach to explain this difficulty is via the statistical queries (SQ) framework (\cite{kearns1998efficient}, see \cite{reyzin2020statistical} for a survey). This framework considers statistical query algorithms, which is any algorithm that can be modeled as a sequence of queries to an SQ oracle. Specifically, each query to the oracle is a function $\phi$ over examples $(x,y)$, and the oracle response is approximation of $\E[\phi(x,y)]$ (where the expectation is with respect to the underlying distribution of $(x,y)$). For example, by taking $\phi$ to be the gradient of a loss function w.r.t. some $(x,y)$ and some class of predictors, this essentially captures applying gradient methods on the expected loss function. In the  paper introducing the SQ framework, \cite{kearns1998efficient} proved that \emph{any} SQ algorithm will require exponentially (in $d$) many queries to learn some worst-case parity function $p_S$. Very roughly, this is because $\{p_S\}_{S\subset \{1,\ldots,d\}}$ is a set of $2^d$ orthogonal functions, and therefore for any fixed algorithm, one can find subsets $S$ for which the algorithm's oracle queries will provide little information about $S$.

Although this is a foundational result with many applications, it suffers from two weaknesses. The first is that the oracle is not required to return $\E[\phi(x,y)]$ precisely, but rather a (possibly adversarial) perturbation of this value. Although this adversarial perturbation is exponentially small in the case of parities (and can be mitigated in some situations \citep{yang2005new,feldman2017statistical}), it means that the hardness result does not apply as-is to standard gradient descent or stochastic gradient descent, where we really get $\E[\phi(x,y)]$ or some unbiased stochastic estimate of it. A second weakness of the SQ hardness result is that it is worst-case over all parities $f_S$: Namely, it shows that for any algorithm, there exists some (possibly algorithm-dependent) hard parity function $p_S$. This worst-case assumption also appears in other hardness results for parities, such as \cite{raz2018fast}. However, in practice one observes that even if we fix the parity -- say, the full parity function $p_S$ where $S=\{1,\ldots,d\}$ -- standard learning methods fail to learn this function, already for moderately large $d$. 
This hardness of learning \emph{fixed} parities is not explainable by the SQ framework.

The question of why learning fixed parities is hard was pointed out and studied recently in \cite{abbe2022non}. In that paper, the authors proved a hardness result for learning certain fixed target parity functions $p_S$, assuming some invariance properties of the training algorithm, and using symmetry properties of the class of parity functions $p_S$ for fixed subset size $|S|$. However, their results crucially apply only when $|S|$ is bounded far away from $0$ and $d$,say when it equals $d/2$ (so that the number of subsets $S$ with some fixed $|S|$ is exponentially large in $d$). In particular, it says nothing about the full parity function, even though in practice learning the full parity function using gradient methods is not easier than parities $p_S$ with a smaller value of  $|S|$. The authors left this issue as an open problem, which is the focus of our paper. 

To describe our results, we begin by pointing out that any hardness result for learning a fixed target function is necessarily algorithm- and model-dependent. Indeed, any fixed function can be trivially learned by the ``algorithm'' that simply returns this function. Moreover, the recent work by \cite{abbe2020universality, abbe2021power} imply that any parity can actually be learned with stochastic gradient descent, using a sufficiently contrived neural network architecture and initialization strategy (which cause the network to implement generic Turing machines, and Gaussian elimination in particular). Thus, the question must be studied in the context of some ``reasonable'' class of predictors and algorithms. 

Concretely, in this paper, we consider perhaps the simplest class of neural networks that can express parity functions: Namely, one-hidden-layer ReLU neural networks of the form $x\mapsto \sum_{j=1}^{n}u_{j}\left[w_{j}^{T}x+b_{j}\right]_{+}$, where $[z]_+=\max\{z,0\}$ is the ReLU function. Our main results are as follows:
\begin{itemize}
\item We prove that for any \emph{fixed} parity function $f_S$, even though a network as above can express it, running a standard perturbed gradient descent (PGD) algorithm with standard Gaussian initialization will fail at returning anything meaningful: Specifically, exponentially (in $|S|$) many iterations will be required to reduce the expected loss by more than an exponentially small value. For example, this applies to the full parity function (where $S=\{1,\ldots,d\}$), which is not covered by any previous works we are aware of, and explains why it is difficult to learn already for moderately large $d$, at least in this framework. We note that the result applies to gradient descent with stochastic, non-adversarial perturbations (unlike most of the SQ literature, which requires adversarial perturbations). 
\item We prove a similar result for the squared loss, in the case where the network is composed of a single ReLU neuron $x\mapsto [w^\top x+b]_+$. Specifically, we show that even though there exists a ReLU neuron $N_{\theta}$ such that $\E_{x}[(N_{\theta}(x)- p_S(x))^2] \leq 1-\text{poly}(1/|S|)$, PGD will only achieve values exponentially close to $1$, unless the number of iterations is exponential in $|S|$. 
\item In order to prove our results, we develop a key result on the Fourier coefficients of linear threshold (or weighted majority) functions, which may be of independent interest. Specifically, consider the function $f_S(w) = \E_{x\in \{-1,+1\}^d}[p_S(x)\mathbf{1}_{\{w^\top x>0}\}]$, where $\mathbf{1}_{\{\}}$ is the indicator function. In Boolean analysis, this corresponds to the Fourier coefficient (indexed by $S$) of the linear threshold / weighted majority function $x\mapsto \mathbf{1}_{\{w^\top x>0\}}$. These coefficients are very well-studied and have many applications (see the book \cite{odonnell2021analysisbooleanfunctions} for a survey). However, as far as we know, the behavior of these coefficients is only well-understood for very small constant $|S|$, or specifically for the plain majority function (where $w$ is the all-ones vector). Indeed, for the majority function, it is well-known that the Fourier spectrum decays polynomially in $|S|$, with the Fourier coefficient corresponding to $S$ being as large as $O(|S|^{-1/2})$ \cite[Eq. 5.11 and Theorem 5.19]{odonnell2021analysisbooleanfunctions}. However, we show that for more general weighted majority functions, the coefficients are \emph{exponentially} small in $|S|$, in an average-case sense: Namely, if we sample $w$ uniformly from a spherically symmetric distribution, then $|f_S(w)|\leq \exp(-\Omega(|S|))$ with overwhelming probability. This result is key to our other theorems, as we relate the size of $f_S(w)$ to the magnitude of the gradients encountered by the gradient descent algorithm during training.
\end{itemize}

We note that \cite{abbe2022non} proved that PGD on certain one-hidden-layer neural networks can learn parity functions in a weak sense (namely, that the objective function can be reduced by an inverse polynomial value). This does not contradict our hardness results, as they consider networks with a specific bump-like activation function, and a specific (integer-valued) initialization for the weights. In contrast, we focus on standard ReLU neural networks and standard (Gaussian) initializations. 

Our paper is structured as follows. In \secref{sec:preliminaries}, we present some notation and formal definitions. In \secref{sec:layer}, we present our results for one-hidden-layer neural networks, as well as the result on the Fourier coefficients of linear threshold / weighted majority functions. In \secref{sec:single}, we present our results for a single neuron network and the squared loss. We conclude in \secref{sec:conclusion} and discuss open questions that remain. Full proofs of our results appear in the appendices.

\section{Preliminaries}\label{sec:preliminaries}

\paragraph{Notation.}
Given some $S\subseteq [d]:=\{1,2,\ldots,d\}$, define the parity function on $S$ as $p_S(x)=\prod_{i\in S}x_i$. 
For any $S\subseteq[d]$ we denote the corresponding indicator vector in $\{0,1\}^d$ as $\mathbf{1_S}:=(\mathbf{1}_{\{i\in S\}})_{i=1}^d$. The expectation $\E_{x\in\{\pm 1\}^d}$ is over $x$ uniformly distributed over $\{-1,+1\}^d$. The distribution $\Ncal(0,\sigma^2\cdot I)$ refers to a zero-mean Gaussian multivariate distribution with covariance matrix being the identity matrix scaled by $\sigma^2$. $I_d$ refers specifically to the $d\times d$ identity matrix. Finally, for some boolean function $g:\{+1,-1\}^d\rightarrow\reals$ and subset $S\subseteq[d]$, the Fourier coefficient of $g$ corresponding to $S$ is defined as 
\[
\hat{g}(S)~=~\E_{x\in\{\pm 1\}^d}\left[g(x)p_S(x)\right]~=~\E_{x\in\{\pm 1\}^d}\left[g(x)\prod _{i\in S}x_i \right]~.
\]

\paragraph{Perturbed Gradient Descent.} 
In this paper, we focus on studying the performance of perturbed gradient descent (PGD), a simple and representative gradient optimization method for which our proof techniques are most readily applicable (see for example \citet{jin2017escape,du2017gradient,zhang2021escape}). This algorithm appears in the literature under several additional names, such as noisy gradient descent (e.g., \citet{abbe2022non}), and gradient Langevin dynamics (e.g., \citet{xu2018global,dalalyan2017further,durmus2017nonasymptotic}). We will focus here on Gaussian perturbations, under which the algorithm is defined as follows: Given some differentiable objective function $F(\theta)$ (where $\theta$ is a vector in some Euclidean space), a variance parameter $\sigma^2>0$, and a step size parameter $\eta>0$, we choose an initial point $\theta_0$ by sampling $\Ncal(0,\sigma^2\cdot I)$, and iteratively compute
\[
\theta_{t+1}=\theta_t-\eta \nabla F_S(\theta)-\xi_t,
\]
where $\xi_1,\xi_2,\ldots$ are independent random variables distributed as $\Ncal(\mathbf{0},\sigma^2 \cdot I)$. We let $T$ be the total number of iterations performed by the algorithm, with $\theta_T$ being the resulting output. 

When applying PGD for objective functions involving the ReLU function $z\mapsto[z]_+$ (which is formally not differentiable at $0$), we will use the convention that its derivative for all $z$ is $\mathbf{1}_{\{z>0\}}$ -- in particular, that it is $0$ at $z=0$. This is without much loss of generality, and our results can be easily modified to handle other conventions.

\section{One-Hidden-Layer ReLU Neural Networks}\label{sec:layer}

In this section, we consider the problem of learning a parity function using a one-hidden-layer ReLU neural networks. This network takes the form
\[
N_{\theta}(x)=\sum_{j=1}^{n}u_{j}\left[w_{j}^{T}x+b_{j}\right]_{+}~~~\text{where}~~~\theta=\left(u_{j},w_{j},b_{j}\right)_{j=1}^{n}~,
\]
where $n$ is a width parameter, $u_j,b_j\in \reals$,  $w_j\in \reals^d$, and $[z]_+=\max\{0,z\}$ is the ReLU function. Moreover, we will consider the linear (or correlation) loss and a uniform distribution over $x\in\{-1,+1\}^d$, in which case our objective function can be written as 
\[
F_S(\theta):= - \E_{x\in\{\pm 1\}^d}\left[N_{\theta}(x)p_S(x)\right]~=~
- \E_{x\in\{\pm 1\}^d}\left[\prod_{i\in S}x_i\sum_{j=1}^{n}u_{j}\left[w_{j}^{T}x+b_{j}\right]_{+}\right]~.
\]
Note that this function can trivially takes a value of $0$ (say when $\theta=0$). On the other hand, if $N_{\theta}(x)=p_S(x)$ for all $x$, then $F_S(\theta)=-1$. Thus, for any meaningful performance, it is necessary that the network parameters $\theta$ are such that $F_S(\theta)$ will be significantly smaller than $0$. Our goal will be to show that this is in fact not achievable, at least using PGD.

Before continuing, it is necessary to establish that a one-hidden-layer ReLU network, with modestly-sized weights, is sufficient to express parity functions (otherwise, we might trivially fail,  simply because no choice of $\theta$ leads to a reasonable approximation of the parity function). This is formalized in the following folklore result: 
\begin{theorem}
\label{thm:weights}
    For any $S\subseteq[d]$, there exists a width $n=|S|+1$ network $N_{\theta}(x)$ as above, such that $\norm{\theta}\leq 6|S|^\frac{3}{2}$ and $N_{\theta}(x)=p_S(x)$ for all $x\in \{\pm 1\}^d$. Thus, for this $\theta$, $F_S(\theta) = -1$.
\end{theorem}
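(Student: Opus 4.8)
The plan is to give a direct, explicit construction. Write $k=|S|$ and observe that $p_S(x)$ depends on $x$ only through the integer $y:=\mathbf{1_S}^\top x=\sum_{i\in S}x_i$, which takes values in $\{-k,-k+2,\ldots,k-2,k\}$: if $y=k-2m$ then exactly $m$ of the coordinates $x_i$ with $i\in S$ equal $-1$, so $p_S(x)=(-1)^m=(-1)^{(k-y)/2}$. Hence it suffices to build a univariate continuous piecewise-linear function $g\colon\reals\to\reals$ with $g(k-2m)=(-1)^m$ for $m=0,1,\ldots,k$, to express $g$ as a combination of at most $k+1$ ReLU units in $y$, and finally to substitute $y=\mathbf{1_S}^\top x$.

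The natural choice of $g$ on $[-k,k]$ is the ``triangle wave'' that linearly interpolates the points $(k-2m,(-1)^m)$; on each segment of length $2$ it has slope $\pm1$, and it has $k-1$ interior turning points, at $y\in\{-k+2,-k+4,\ldots,k-2\}$. The one thing one must be careful about is the number of breakpoints, since a piecewise-linear function with $p$ breakpoints requires $p$ ReLU terms plus an affine term, and we can afford neither an affine term of nonzero slope (which costs two extra units) nor an extra breakpoint. I would therefore extend $g$ to equal the constant $(-1)^k$ for $y\le -k$ and to continue with slope $+1$ for $y\ge k$. With this extension the breakpoints of $g$ are exactly the $k$ points $\{-k,-k+2,\ldots,k-2\}$: a slope change of magnitude $1$ at $y=-k$ and one of magnitude $2$ at each of the $k-1$ interior turning points. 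Since the leftmost piece has slope $0$ and value $(-1)^k$, this yields
\[
g(y)=(-1)^k+\sum_{i=1}^{k}\gamma_i\,[y-t_i]_+ ,
\]
where $t_1,\ldots,t_k$ enumerate the breakpoints and $\gamma_i$ (with $|\gamma_i|\le 2$) are the corresponding slope changes; a routine left-to-right induction over the breakpoints confirms this reproduces the triangle wave on $[-k,k]$, so in particular $g(k-2m)=(-1)^m$.

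Reading off the network is then immediate: for $i=1,\ldots,k$ take a neuron $w_i=\mathbf{1_S}$, $b_i=-t_i$, $u_i=\gamma_i$ (so that $u_i[w_i^\top x+b_i]_+=\gamma_i[\mathbf{1_S}^\top x-t_i]_+$), and add one final neuron $w_{k+1}=\mathbf{0}$, $b_{k+1}=1$, $u_{k+1}=(-1)^k$ contributing the constant term. This is a width-$(k+1)=(|S|+1)$ network with $N_\theta(x)=g(\mathbf{1_S}^\top x)=p_S(x)$ for every $x\in\{\pm1\}^d$. For the norm, each of the first $k$ neurons contributes $\norm{w_i}^2+b_i^2+u_i^2\le k+k^2+4$ (using $|t_i|\le k$ and $|\gamma_i|\le 2$) and the last contributes $0+1+1$, so $\norm{\theta}^2\le k(k^2+k+4)+2\le 8k^3$ for $k\ge1$, i.e.\ $\norm{\theta}\le 2\sqrt2\,k^{3/2}\le 6|S|^{3/2}$; and $F_S(\theta)=-\E_{x\in\{\pm1\}^d}[N_\theta(x)p_S(x)]=-\E_{x\in\{\pm1\}^d}[p_S(x)^2]=-1$. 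I do not expect a genuine obstacle here: the only delicate point is the breakpoint bookkeeping in the second paragraph, which is exactly what keeps the width at $|S|+1$ rather than $|S|+2$; everything else is routine.
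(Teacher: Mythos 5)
Your proof is correct and follows essentially the same high-level plan as the paper's: observe that $p_S(x)$ is a univariate function of $\mathbf{1}_S^\top x$ and realize it by a piecewise-linear interpolant built from $|S|+1$ ReLU units. The particular construction differs in bookkeeping details (you build a triangle wave with slope changes of magnitude at most $2$ plus a zero-weight constant neuron, whereas the paper uses $w_j=-\mathbf{1}_S$ with biases $|S|+1-2j$ and coefficients $4j(-1)^j$, verified by induction on the number of $-1$'s in $S$), but both yield a valid width-$(|S|+1)$ network with $\|\theta\|\le 6|S|^{3/2}$, and your breakpoint count and norm estimate check out.
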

The key observation behind the proof is that $p_S(x)$ is a univariate function of the inner product $\mathbf{1}_S^\top x$, which takes values in a discrete finite set. Therefore, using a weighted linear combinations of ReLU neurons of the form $[\mathbf{1}_S^\top x+b]_+$, we can express the function $p_S$ precisely. 

\subsection{Main Result}

We now turn to present our main hardness result. Simply put, it states that unless some algorithm/architecture parameter (such as the number of iterations) is exponentially large in $|S|$, then PGD will leave $F_S(\theta)$ exponentially close to $0$.

\begin{theorem}\label{thm:hard}
    Suppose that $d\geq 30$ and $|S|\geq 72\cdot\max \{\ln(6nd\sigma),\ln(5nd\sigma^2(T+1))\}$. Then after $T$ steps of PGD, it holds with probability at least $1-(\frac{\eta\sqrt{T}}{2\sigma}+T+2)\exp(-|S|/18)$ that
    \[
    \left|F_S(\theta_T)\right|~<~ \exp(-|S|/18)~.   
    \]
\end{theorem}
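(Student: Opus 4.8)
The plan is to control both the objective $F_S(\theta)$ and its gradient $\nabla F_S(\theta)$ at every iterate by reducing them to Fourier coefficients of linear threshold functions and then invoking the exponential Fourier decay result quoted above. Writing $[z]_+=z\cdot\mathbf{1}_{\{z>0\}}$ and $h_j(x)=\mathbf{1}_{\{w_j^\top x+b_j>0\}}$, and using $\E_{x}[x_k\,g(x)p_S(x)]=\widehat g(S\triangle\{k\})$, one gets
\[
F_S(\theta)=-\sum_{j=1}^{n}u_j\!\left(\sum_{k=1}^{d}(w_j)_k\,\widehat{h_j}(S\triangle\{k\})+b_j\,\widehat{h_j}(S)\right),
\]
and likewise each partial derivative $\partial_{u_j}F_S$, $\partial_{b_j}F_S$, $\nabla_{w_j}F_S$ is a linear combination --- with coefficients bounded by $\max\{1,|u_j|,\norm{w_j}_1+|b_j|\}$ --- of the numbers $\widehat{h_j}(S')$ for $S'$ ranging over $\{S\}\cup\{S\triangle\{k\}:k\in[d]\}$. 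Hence on any event where $|u_j|,\norm{w_j}_1+|b_j|\le R$ for all $j$ and $\max_j\max_{S'}|\widehat{h_j}(S')|\le\epsilon$, we have $|F_S(\theta)|\le nR\epsilon$ and $\norm{\nabla F_S(\theta)}\le\mathrm{poly}(n,d)\,R\,\epsilon$. Crucially every set $S'$ that appears has $|S'|\ge|S|-1$, so the Fourier decay result bounds all of them at a common exponential rate.

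Next I would install a random-walk backbone. Because the gradient will turn out to be exponentially small, PGD is a tiny perturbation of the pure Gaussian random walk $g_t:=\theta_0-\sum_{s<t}\xi_s$, whose $j$-th weight-and-bias block is, for each fixed $t$, distributed as $\Ncal(0,(t{+}1)\sigma^2 I_{d+1})$ --- in particular spherically symmetric in $\reals^{d+1}$. A bias $b$ inside a threshold $\mathbf{1}_{\{w^\top x+b>0\}}$ is absorbed by lifting to the spherically symmetric vector $(w,b)$ with a fresh coordinate $x_{d+1}$: a short computation using $\E_x[p_S]=0$ and the $x\mapsto-x$ symmetry shows that, up to a measure-zero set of $(w,b)$, $\widehat h(S)$ equals the Fourier coefficient of $x\mapsto\mathbf{1}_{\{w^\top x+bx_{d+1}>0\}}$ at $S$ (if $|S|$ is odd) or at $S\cup\{d{+}1\}$ (if $|S|$ is even), whose cardinality is again $|S|$ or $|S|{+}1$. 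Applying the Fourier decay result to each lifted block, with a union bound over $j\in[n]$, over $t\in\{0,\dots,T\}$, and over the $\le d{+}1$ relevant sets $S'$, and combining with standard Gaussian bounds on $\max_{j,t}\{|u_j^{(t)}|,\norm{w_j^{(t)}}_1+|b_j^{(t)}|\}$ along the backbone, shows that with probability at least $1-\mathrm{poly}(n,d,T)\exp(-\Omega(|S|))$ the backbone satisfies, simultaneously for all $j,t$, both $\max_{S'}|\widehat{h_j}(S')|\le\exp(-\Omega(|S|))$ and $|u_j|,\norm{w_j}_1+|b_j|\le R$ with $R$ polynomial in $\sigma,d,\sqrt T$ (up to logarithmic factors). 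The hypotheses $|S|\ge 72\max\{\ln(6nd\sigma),\ln(5nd\sigma^2(T{+}1))\}$ are exactly what force $\mathrm{poly}(n,d,T)$ and $R$ to be small relative to the decay, so that both the failure probability and $nR\exp(-\Omega(|S|))$ come out below $\exp(-|S|/18)$.

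The remaining step, which is also the crux, is an induction on $t$ transferring these bounds from $g_t$ to the true iterate $\theta_t=g_t-\eta\sum_{s<t}\nabla F_S(\theta_s)$. Conditioning on the backbone event, suppose $\norm{\nabla F_S(\theta_s)}\le\exp(-\Omega(|S|))$ for all $s<t$; then the drift $\Delta_t:=\eta\sum_{s<t}\nabla F_S(\theta_s)$ has norm $\le\eta T\exp(-\Omega(|S|))$, exponentially small compared with $\norm{g_t^{w_j}}=\Theta(\sigma\sqrt{td})$, so each weight/bias block at time $t$ is a relatively exponentially small perturbation of a spherically symmetric vector. Since the ``good'' set of directions produced by the decay result has measure $1-\exp(-\Omega(|S|))$, one can afford to shrink it by a further $\exp(-\Omega(|S|))$ and still have an $\exp(-\Omega(|S|))$-angular-neighbourhood of the backbone direction inside it; the perturbation $\Delta_t$ lies in that neighbourhood, so $h_j$ at $\theta_t$ still satisfies $\max_{S'}|\widehat{h_j}(S')|\le\exp(-\Omega(|S|))$. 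Feeding this and $R$ into the reduction of the first paragraph gives $\norm{\nabla F_S(\theta_t)}\le\exp(-\Omega(|S|))$, closing the induction, and at $t=T$ it gives $|F_S(\theta_T)|\le nR\exp(-\Omega(|S|))<\exp(-|S|/18)$.

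I expect the main obstacle to be precisely this transfer from backbone to trajectory. The subtlety is that $g_t$ is spherically symmetric only marginally, not conditionally on the history (the drift $\Delta_t$ is built from the very same noise variables that constitute $g_t$), so one cannot simply apply the Fourier decay result to $\theta_t$; making the perturbation argument rigorous seems to require either a strengthened, ``robust'' version of the decay bound that holds on an explicit large-measure set of directions stable under small angular perturbations --- plausible, since the exponential smallness ultimately traces to quantities like $\prod_{i\in S}|w_i|$ being $e^{-\Theta(|S|)}$, a continuous function of $w$ away from large coordinates --- paired with a purely deterministic bound on $\norm{\Delta_t}$ valid on the backbone event, or else a sufficiently careful discretization/union bound along the trajectory (presumably the source of the $\tfrac{\eta\sqrt T}{2\sigma}$ term in the stated failure probability). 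A secondary, more mechanical difficulty is arranging all constants so that the exponential decay from the Fourier lemma dominates the combined sub-exponential blow-ups in $n,d,\sigma,T$ and the $\norm{w_j}_1$ factor --- this is what pins down the precise thresholds in the hypotheses and the constant $1/18$ --- together with checking degenerate regimes such as $S=[d]$, where the coordinates outside $S$ that one would use to analyze the threshold disappear.
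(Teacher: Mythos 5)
Your overall plan---expressing $F_S$ and its gradient in terms of Fourier coefficients of threshold functions, invoking the exponential Fourier decay for spherically symmetric $w$, and unioning over the steps of a Gaussian random walk ``backbone''---matches the first two-thirds of the paper's argument (cf.\ Lemmas~\ref{lem:gradsmall} and~\ref{lem:randomF_small}). But your proposed mechanism for transferring the backbone bounds to the actual PGD iterates has a genuine gap, which you in fact half-diagnose yourself. You want to argue that because $\theta_t = g_t - \eta\sum_{s<t}\nabla F_S(\theta_s)$ is an exponentially small perturbation of $g_t$, the Fourier coefficients at $\theta_t$ are still small, and you acknowledge this would require a ``robust'' decay bound asserting that the good set of $w$ is stable under small angular perturbations. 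No such robust bound is proved in the paper, it does not follow from \thmref{thm:f0bound} (which controls only the measure of the bad set, not its geometry), and your heuristic justification via $\prod_{i\in S}|w_i|$ is unsupported: nothing in the proof of \thmref{thm:f0bound} exhibits the Fourier coefficient as a continuous function with a controllable modulus on a large region. The same gap afflicts your induction, since the drift $\sum_{s<t}\nabla F_S(\theta_s)$ is built from the \emph{same} noise as $g_t$, so one cannot apply a symmetric-distribution lemma to $\theta_t$ directly.

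The paper closes this gap in a qualitatively different and cleaner way: it never transfers any pointwise estimate from the backbone to the trajectory. Instead it defines an auxiliary law $\tilde P$ in which the gradient in the PGD update is replaced by $[\nabla F_S(\theta_t)]_\varepsilon$ (zeroed out when its norm is below $\varepsilon$), and bounds $TV(P,\tilde P)\le\varepsilon\eta\sqrt T/(2\sigma)$ via the KL chain rule plus the exact formula for KL between two Gaussians with identical covariance --- this, not a ``discretization along the trajectory'' as you guessed, is the source of the $\eta\sqrt T/(2\sigma)$ term. The payoff is that under $\tilde P$ the event $\{\forall j:\norm{\nabla F_S(\sum_{t\le j}\xi_t)}<\varepsilon\}$ forces the iterates to coincide \emph{exactly} with the random walk (no perturbation bound needed), so the remaining probability is controlled by a union bound over $T+1$ applications of Markov plus one application of \lemref{lem:randomF_small}. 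Your proposal, by contrast, would also need to control $R=\max_{j,t}\{|u_j^{(t)}|,\norm{w_j^{(t)}}_1+|b_j^{(t)}|\}$ along the trajectory, an extra layer the paper avoids entirely because it bounds $\E|\partial_{u_j}F_S|$ and $\E\norm{\nabla F_S}$ directly (exploiting independence of $u_j$ from $(w_j,b_j)$), rather than multiplying a worst-case $R$ by a worst-case Fourier bound. To salvage your route you would need to prove the robust decay bound you speculate about, which is a substantially harder statement than \thmref{thm:f0bound} and is posed in the paper only as an open problem.
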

Again, we point out that due to \thmref{thm:weights}, this failure is not due to lack of expressiveness, but rather the 
failure of the algorithm's dynamics to reach a meaningful solution. Thus, unlike SQ hardness results which are worst-case over all parities, the theorem implies that learning any \emph{specific} parity function $p_S$ will be hard in our setting, with the hardness increasing exponentially in $|S|$. In particular, the number of required iterations for any meaningful guarantee is super-polynomial in the input dimension $d$, as long as $|S|$ is more than logarithmic in $d$.

The proof of the theorem relies on a key result, which we believe to be of independent interest. To motivate this result, let us consider the derivative of $F_S(\theta)$ with respect to some individual weight vector $w_j$. It is easily verified that this equals
\begin{equation}\label{eq:Fderiv}
\frac{\partial}{\partial w_j} F_S(\theta)~=~-\E_{x\in\{\pm 1\}^d}\left[p_S(x)\cdot u_j\mathbf{1}_{w_j^\top x+b_j>0}x\right]~.
\end{equation}
The proof of \thmref{thm:hard} ultimately relies on arguing that such derivatives tend to be exponentially small (in $|S|$) for ``most'' choices of $\theta$, and thus the iterates of PGD are nearly equivalent to a plain Gaussian random walk, which does not lead to any meaningful performance. We will return later to the other components of the proof, but for now let us focus on the question of why expressions such as in \eqref{eq:Fderiv} tend to be small.

For simplicity of the exposition, let us drop the minus sign, $u_j$ and the $x$ terms in \eqref{eq:Fderiv}. Then the expression becomes 
\begin{equation}\label{def:f}
    f_S(w,b):=\E_{x\sim\{\pm1\}^d}\left[p_S(x)\mathbf{1}_{\{w^\top x+b>0\}}\right]~.
\end{equation}
This is nothing more than the Fourier coefficient (corresponding to $S$) of the linear threshold/weighted majority function\footnote{This function is often defined in the literature as $x\mapsto \sign(w^\top x+b)$. However, since $\mathbf{1}_{\{w^\top x+b>0\}}=\frac{1}{2}(\sign(w^\top x+b)+1)$, 
the functions share the same Fourier coefficients for all $S\neq \emptyset$, up to a multiplicative factor of $\frac{1}{2}$.} $x\mapsto \mathbf{1}_{\{w^\top x + b>0\}}$. This function plays a prominent role in Boolean analysis and its applications \citep{odonnell2021analysisbooleanfunctions}, and there has been much work on its Fourier coefficients. However, perhaps surprisingly,  the behavior of these coefficients is apparently well-understood only for very small constant $|S|$, or specifically for the plain majority function (where $w$ is the all-ones vector\footnote{Or slightly more generally, any $w\in \{-\alpha,\alpha\}^d$ where $\alpha\neq 0$, since by symmetry and scale invariance $f_S((w,0))$ is the same for all such $w$.} and $b=0$). For the majority function, its Fourier spectrum is well-known to decay polynomially in $|S|$, with some coefficients being as large as  $O(|S|^{-1/2})$ (say when $|S|=d$, see \citet[Eq. 5.11 and Theorem 5.19]{odonnell2021analysisbooleanfunctions}). However, as far as we know, the decay of the Fourier coefficients for more general linear threshold functions has not been addressed in previous work. 

In the following theorem, we resolve this question, and prove that in sharp contrast to the majority function, where the Fourier coefficients can be polynomially large in $|S|$, the Fourier coefficients of weighted majority are \emph{exponentially} small in $|S|$, in an average-case sense:

\begin{theorem}\label{thm:f0bound}
Suppose that $d\geq 2$, and fix some $S\subseteq [d]$ such that $|S|\geq 2$. Then the Fourier coefficient (corresponding to $S$) of the linear threshold function $x\mapsto \mathbf{1}_{w^\top x>0}$, namely $f_S(w,0)$, satisfies
\begin{equation*}
\mathbb{E}_{w\sim\mathcal{N}\left(0,I\right)}\left[f_S^{2}(w,0)\right]~<~6\exp\left(-\frac{|S|}{4}\right)~.
\end{equation*}
\end{theorem}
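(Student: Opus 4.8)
The plan is to first turn the average over the Gaussian $w$ into a purely combinatorial quantity on the hypercube. Since the law of $w$ is rotationally invariant, $\mathbb{E}_{w}\big[\mathbf{1}_{\{w^\top x>0\}}\mathbf{1}_{\{w^\top x'>0\}}\big]$ depends on $x,x'\in\{\pm1\}^d$ only through their angle, and the classical orthant-probability (Sheppard) formula gives $\tfrac14+\tfrac1{2\pi}\arcsin\!\big(\langle x,x'\rangle/d\big)$. Writing $f_S^2(w,0)$ as a double expectation over independent uniform $x,x'$, exchanging it with $\mathbb{E}_w$, and substituting $z_i=x_ix_i'$ (so $z$ is uniform on $\{\pm1\}^d$ and $p_S(x)p_S(x')=p_S(z)$), the constant $\tfrac14$ term contributes nothing since $|S|\ge1$, and I get
\[
\mathbb{E}_{w\sim\mathcal N(0,I)}\big[f_S^2(w,0)\big]~=~\frac{1}{2\pi}\,\mathbb{E}_{z\in\{\pm1\}^d}\Big[p_S(z)\,\arcsin\!\big(\tfrac1d\textstyle\sum_i z_i\big)\Big].
\]

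Next I would extract the parity over $S$ by conditioning on the coordinates of $z$ outside $S$. With $k=|S|$ and $N=\sum_{i\notin S}z_i$, the inner expectation over $z_S\in\{\pm1\}^k$ of $p_S(z_S)\arcsin\big((\sum_{i\in S}z_i+N)/d\big)$ is exactly a $k$-th forward difference of the function $t\mapsto\arcsin((t+N)/d)$, and the integral representation of finite differences converts it to $d^{-k}\int_{[0,1]^k}\arcsin^{(k)}\!\big(\tfrac{2T-k+N}{d}\big)\,dt$, where $T=\sum_i t_i$. The key analytic ingredient is a sharp bound on $\arcsin^{(k)}$: applying the Leibniz rule to $\arcsin'(u)=(1-u)^{-1/2}(1+u)^{-1/2}$, bounding the "far" factor by $1$ on $[-1,1]$, and using the convolution identity $\sum_{l=0}^{k-1}\binom{k-1}{l}(2l-1)!!\,(2(k-1-l)-1)!!=(k-1)!\,2^{k-1}$, one obtains
\[
\big|\arcsin^{(k)}(u)\big|~\le~(k-1)!\,(1-|u|)^{-(k-1/2)},\qquad u\in(-1,1).
\]
I expect this step to be the main obstacle: the exponent $-(k-1/2)$, rather than the naive $-k$, is essential — it is exactly the threshold at which the downstream integral converges (the $(1-|u|)^{-(k-1/2)}$ singularity is integrable against the Irwin–Hall-type density that arises), whereas $-k$ makes every later integral diverge. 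One must also check that when $N=\pm(d-k)$ the finite-difference identity still holds as an improper integral up to the corner of the cube, which it does.

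Combining the two displays above and using $1-|u|=(d-|2T-k+N|)/d$, the estimate reduces to bounding $\mathbb{E}\big[(d-|V|)^{-(k-1/2)}\big]$, where $V=\sum_{i\in S}(2U_i-1)+\sum_{i\notin S}z_i$ with $U_i$ i.i.d.\ uniform on $[0,1]$ and $z_i$ i.i.d.\ Rademacher; concretely $\mathbb{E}_w[f_S^2(w,0)]\le\frac{(k-1)!}{2\pi\sqrt d}\,\mathbb{E}\big[(d-|V|)^{-(k-1/2)}\big]$. By symmetry I pass to $W:=d-V\ge0$ (at most a factor $2$ lost), and apply $\mathbb{E}[W^{-\alpha}]=\Gamma(\alpha)^{-1}\int_0^\infty\lambda^{\alpha-1}\mathbb{E}[e^{-\lambda W}]\,d\lambda$ with $\mathbb{E}[e^{-\lambda W}]=\big(\tfrac{1-e^{-2\lambda}}{2\lambda}\big)^k\big(\tfrac{1+e^{-2\lambda}}{2}\big)^{d-k}$. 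For $\alpha=k-\tfrac12$ the powers of $\lambda$ collapse to $\lambda^{-3/2}$, leaving $\frac{(k-1)!}{\Gamma(k-1/2)}\cdot\frac1{\pi\sqrt d}\int_0^\infty\lambda^{-3/2}\big(\tfrac{1-e^{-2\lambda}}{2}\big)^k\big(\tfrac{1+e^{-2\lambda}}{2}\big)^{d-k}d\lambda$ (up to absolute constants).

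Finally I bound this crudely. Since $|S|\le d$ we have $\big(\tfrac{1+e^{-2\lambda}}{2}\big)^{d-k}\le1$, and splitting the remaining integral at $\lambda=\tfrac12$ via $\tfrac{1-e^{-2\lambda}}{2}\le\min(\lambda,\tfrac12)$ gives $\int_0^\infty\lambda^{-3/2}\big(\tfrac{1-e^{-2\lambda}}{2}\big)^k d\lambda\le \sqrt2\big(2+\tfrac1{k-1/2}\big)2^{-k}$. Using Gautschi's inequality $\frac{(k-1)!}{\Gamma(k-1/2)}=\frac{\Gamma(k)}{\Gamma(k-1/2)}<\sqrt k\le\sqrt d$, I conclude $\mathbb{E}_w[f_S^2(w,0)]=O(2^{-|S|})$, which is comfortably below $6\exp(-|S|/4)$ since $2^{-|S|}\le\exp(-|S|/4)$ (as $\ln 2>\tfrac14$). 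The argument never uses the parity of $|S|$ (it is vacuous anyway for $|S|=2$ and, in fact, $f_S(w,0)\equiv0$ for even $|S|$), and the bound it produces is far from tight — but this route yields the stated estimate with substantial room to spare.
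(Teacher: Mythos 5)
Your proof is correct, and it takes a genuinely different analytic route from the paper's after the common opening move. Both arguments start the same way: write $\mathbb{E}_w[f_S^2(w,0)]$ as a double expectation over independent uniform $x,y$, exchange with $\mathbb{E}_w$, and reduce the inner Gaussian probability to the orthant/hemisphere formula $\tfrac14+\tfrac1{2\pi}\arcsin(x^\top y/d)$ (equivalently $\tfrac{\pi-\arccos}{2\pi}$, as in the paper). From there the paper expands $\arccos$ in a \emph{Taylor series at the origin}, kills the first $|S|-1$ coefficients via orthogonality of parities, and controls the tail by a Stirling estimate on the Taylor coefficients $\alpha_j=O(j^{-3/2})$ together with a Hoeffding bound showing $|x^\top y|/d$ is typically bounded away from $1$. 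You instead pass to $z=x\odot y$, condition on the coordinates of $z$ outside $S$, recognize $\mathbb{E}_{z_S}[p_S(z_S)g(\cdot)]$ as a \emph{$k$-th finite difference} of $g=\arcsin(\cdot/d)$ (a cleaner, more direct use of the same orthogonality fact), convert it to an integral against $g^{(k)}$, and feed in the sharp pointwise estimate $|\arcsin^{(k)}(u)|\le (k-1)!(1-|u|)^{-(k-1/2)}$; the resulting negative moment $\mathbb{E}[(d-|V|)^{-(k-1/2)}]$ is then evaluated cleanly via the Mellin/Laplace representation $\mathbb{E}[W^{-\alpha}]=\Gamma(\alpha)^{-1}\int_0^\infty \lambda^{\alpha-1}\mathbb{E}[e^{-\lambda W}]\,d\lambda$, whose MGF factors exactly because $V$ is a sum of independent uniforms and Rademachers. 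Your route requires more analytic machinery (the derivative bound with the exact exponent $-(k-1/2)$, the Gautschi inequality, care with the improper integral when $N=\pm(d-k)$), but the payoff is a genuinely stronger bound, $O(2^{-|S|})$ rather than the paper's $O(e^{-|S|/4})$; the paper's route is more elementary and self-contained, at the cost of a weaker constant in the exponent obtained by a somewhat ad hoc choice of the Hoeffding threshold $\beta$. Both proofs are valid; yours is a real improvement on the exponent, and the $\arcsin^{(k)}$ estimate and the Mellin step could be useful if one wanted a deterministic (pointwise-in-$w$) version of the theorem, as conjectured in the paper's conclusion.
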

By Markov's inequality, this implies that the Fourier coefficient corresponding $S$ is exponentially small in $S$, with probability exponentially close to $1$. Moreover, since $f_S(w,0)$ is invariant to scaling of $w$, the same result will hold for any spherically symmetric distribution of $w$. We note that a similar result also holds for $f_S(w,b)$ for $b\neq 0$ (see \thmref{thm:fbound_with_b} in the appendix), and this more refined result is the one we actually utilize in the proof of \thmref{thm:hard}. 

Connecting this theorem with \eqref{def:f}, we see that the gradient of our objective function (at least with respect to the $w_j$ parameters) is exponentially small in $|S|$, for nearly all $w$ (with respect to a spherically symmetric random choice of $w$). 
Thus, we would expect that applying PGD with such exceedingly small gradients will tend to resemble a plain Gaussian random walk, which is unhelpful for optimizing $F_S$. Unfortunately, actually proving this is not that straightforward, since the theorem does not provide any information where exactly is the gradient small in the parameter space. Indeed, we know that there are values of $w$ for which the gradient will be quite big: For example, when $w=\mathbf{1}_S$, we know that $f_S(w,0)$ can be polynomial in  $|S|^{-1}$. We therefore need to restrict ourselves to situations where the marginal distribution of the parameters tend to remain spherically symmetric, as required to apply the theorem. This is the technical reason why our results in this section focus on the linear loss and on PGD with zero-mean Gaussian perturbations.

Despite this, it is not too difficult to extend \thmref{thm:hard} in various directions. For example, instead of Gaussian perturbations in the PGD iterations, one can potentially utilize other spherically symmetric perturbations (under appropriate technical assumptions). Also, instead of considering a constant step size, one can easily adapt the proof to varying step sizes $\{\eta_t\}_{t=1}^T$ (as long as their magnitude remains reasonably bounded). It is also possible to trade-off between the bound on the probability stated in the theorem, and the magnitude of the bound on $|F_S(\theta_T)|$.

We will now turn to sketch the proof of \thmref{thm:f0bound} (with the full proof appearing in the appendix), followed by the proof of \thmref{thm:hard}. 

\subsection{Proof Sketch of \thmref{thm:f0bound}}

The proof is based on the following simple trick: We begin by noting that
\begin{align*}
\E_{w\sim\mathcal{N}\left(0,I\right)}[f_S^2(w,0)]~&=~\E_{w\sim\mathcal{N}\left(0,I\right)}\left[\E_{x\sim\{\pm1\}^d}^2\left[p_S(x)1_{\{w^\top x>0\}}\right]\right]\\
&=~\E_{w\sim\mathcal{N}\left(0,I\right)}\left[\E_{x\sim\{\pm1\}^d}\left[p_S(x)1_{\{w^\top x>0\}}\right]\cdot\E_{y\sim\{\pm1\}^d}\left[p_S(y)1_{\{w^\top y>0\}}\right]\right]~,    
\end{align*}
where $y$ is an independent copy of the random variable $x$. Rearranging the expectations, this equals
\begin{align*}
\E_{x,y,w}\left[p_S(x)p_S(y)1_{\{w^\top x>0\}}1_{\{w^\top y>0\}}\right]~&=~
\E_{x,y}\left[p_S(x)p_S(y)\E_w\left[1_{\{w^\top x>0~\wedge w^\top y>0\}}\right]\right]\\
&=~
\E_{x,y}\left[p_S(x)p_S(y)\Pr\left(w^\top x>0~\wedge w^\top y>0~|~x,y\right)\right]~.
\end{align*}
Recalling that $w$ is drawn from a spherically symmetric distribution, it follows that the probability in the expression above is simply the measure of the intersection of two hemispheres (centered in the direction of $x$ and $y$ respectively) on the unit $d$-dimensional sphere. By standard calculations, this equals $\frac{1}{2\pi}\left(\pi-\arccos\left(\frac{x^\top y}{d}\right)\right)$. Moreover, since $\arccos$ is an analytic function, we can write this expression using its Taylor series $\sum_{j=0}^\infty\alpha_j\left(\frac{x^\top y}{d}\right)^j$ for suitable coefficients $\alpha_j$. Plugging this back into the displayed equation above, we get
\[
\E_{x,y}\left[p_S(x)p_S(y)\sum_{j=0}^\infty\alpha_j\left(\frac{x^\top y}{d}\right)^j\right]
~=~\sum_{j=0}^{\infty}\alpha_j \E_{x,y}\left[\prod_{i\in S}x_iy_i\cdot \left(\frac{x^\top y}{d}\right)^j\right]~.
\]
Since $x,y$ are uniformly distributed on $\{-1,+1\}^d$, and $(x^\top y)^j$ is a polynomial of degree $2j$, it is easy seen that the first $|S|$ terms in the sum above vanish. The higher order terms can be shown to be exponentially small in $|S|$ and rapidly decaying, eventually leading to the theorem statement. 

\subsection{Proof of \thmref{thm:hard}}

Fixing the objective function $F_S$, recall that $T$ PGD iterates $\{\theta_t\}_{t=0}^{T}$ are defined as $\theta_0\sim\Ncal(0,\sigma^2\cdot I)$, and 
\begin{equation}\label{def:delta0}
\theta_{t+1}=\theta_t-
\Delta_t~~,~~
\text{where}~~
\Delta_t=\eta \nabla F_S(\theta_t)+\xi_t~~,~~ \xi_t\sim\Ncal(0,\sigma^2\cdot I)~.
\end{equation}
This implies that $\{\theta_t\}_{t=0}^T$ is a sequence of random variables induced by the random variables $\{\xi_t\}$. We will let $P$ denote the probability law of these random variables.

As discussed previously, \thmref{thm:f0bound} (and its variant \thmref{thm:fbound_with_b} in the appendix) can be shown to imply that $\nabla F_S(\theta)$ is small nearly everywhere, which intuitively (in light of \eqref{def:delta0}) implies that PGD will fail in returning a meaningful result. To formalize this and prove \thmref{thm:hard}, the intuitive approach is to use information-theoretic tools, to argue that the small gradients will be dominated by the Gaussian noise injected at each PGD iteration, and thus the distribution of the PGD iterates will be nearly the same as if the gradients were exactly zero and the iterates are just the result of a plain Gaussian random walk (that is, $\theta_{t+1}=\theta_t-\xi_t$ for all $t$). However, a technical obstacle to this line of argument is that the gradients are not actually small everywhere, only in some large portion of the parameter space. Thus, it is not a-priori clear how to rule out the possibility that the PGD dynamics lead us to areas of the parameter space where the gradients are large. 

To overcome this obstacle, we  define an auxiliary probability law $\tilde{P}$ over $\{\theta_t\}_{t=0}^T$(with respect to the same set of random variables $\{\xi_t\}$) as followed: First, fix 
\[
\varepsilon:=\exp\left(-|S|/ 18\right)~,
\]
and define the vector operator
\begin{align*}
    [z]_\varepsilon=\begin{cases}
        0 & \norm{z}\leq\varepsilon \\
        z & \text{otherwise}
    \end{cases}~.
\end{align*}
The probability law $\tilde{P}$ is the one induced by the alternative update rule
\begin{equation}\label{def:delta1}
\theta_{t+1}=\theta_t-
\tilde{\Delta}_t~~,~~
\text{where}~~
\tilde{\Delta}_t=\eta\left[\nabla F_S(\theta_t)\right]_\varepsilon 
    +\xi_t~~,~~ \xi_t\sim\Ncal(0,\sigma^2\cdot I)~,
\end{equation}
and where $\theta_0\sim\Ncal(0,\sigma^2\cdot I)$ as before. In words, whereas $P$ refers to the actual distribution induced by $T$ PGD iterations (as defined above), $\tilde{P}$ refers to the distribution induced by $T$ altered PGD iterations, where the gradient is replaced by $0$ if its norm is less than $\varepsilon$ (in which case, the update $\tilde{\Delta}_t$ is just a random Gaussian step).  Note that under both $P$ and $\tilde{P}$, $\{\theta_t\}_{t=0}^{T}$ are defined as functions of the same set of random variables $\{\xi_t\}_{t=1}^T$ and $\theta_0$, hence they share the same sample space.

With this auxiliary $\tilde{P}$ in hand, our proof strategy is as follows: We first show (\lemref{lem:TV01}) that $P,\tilde{P}$ are nearly the same in terms of total variation distance -- intuitively, this is because for \emph{any} realization of $\theta_t$, the distribution of $\theta_{t+1}-\theta_t$ conditioned on $\theta_t$ is a Gaussian with near-zero mean, under both $P$ and $\tilde{P}$. Thus, for probabilistic guarantees on $F_S(\theta_T)$ under $P$, it suffices to study the random variable $F_S(\theta_T)$ under $\tilde{P}$. Crucially, whereas under $P$ the PGD updates $\Delta_t$ are generally not equal to $\xi_t$ (due to the $\eta\nabla F_S(\theta_t)$ terms), under $\tilde{P}$ they are, except for some very small subset $E$ of the parameter space where the gradients are large. This greatly simplifies the analysis, since now we can argue that under $\tilde{P}$ the iterates $\{\theta_t\}_{t=0}^{T}$ follow a plain Gaussian random walk, \emph{until} they hit the set $E$. 
However, the probability of a plain Gaussian random walk to hit the set $E$ in a bounded number of iterations is small, since the measure of $E$ is small. Thus, with high probability, the $T$  iterates will not hit the set $E$, and therefore their distribution will be nearly the same as a plain Gaussian random walk. Since such a random walk is unlikely to return meaningful results (as we show in \lemref{lem:randomF_small}), the theorem follows.

To formalize this proof strategy, we begin by stating an upper bound on the total variation distance between $P$ and $\tilde{P}$:

\begin{lemma}\label{lem:TV01}
    For $P,\tilde{P}$ as defined in \eqref{def:delta0} and \eqref{def:delta1}, it holds that
    \begin{equation*}
    TV\left(P,\tilde{P}\right)~\leq~ \frac{\varepsilon\eta\sqrt{T}}{2\sigma}~,    
    \end{equation*}
where $TV$ is the total variation distance (Namely, $TV(P,\tilde{P}):=\sup_{A\in\mathcal{F}}|P(A)-Q(A)|$ where $\mathcal{F}$ is the relevant $\sigma$-algebra).
\end{lemma}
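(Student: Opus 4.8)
The plan is to bound the total variation distance between $P$ and $\tilde P$ by exploiting the fact that both laws are built as deterministic functions of the \emph{same} noise sequence $(\theta_0,\xi_1,\ldots,\xi_T)$, and that at each step the two update rules differ only by the replacement of $\eta\nabla F_S(\theta_t)$ with $\eta[\nabla F_S(\theta_t)]_\varepsilon$, a change of norm at most $\eta\varepsilon$. First I would set up the natural coupling/chain-rule decomposition: write both $P$ and $\tilde P$ as joint distributions over $(\theta_0,\ldots,\theta_T)$, observe that the $\theta_0$-marginal is identical under both, and that conditioned on $\theta_0,\ldots,\theta_t$, the law of $\theta_{t+1}$ is $\mathcal N(\theta_t-\eta\nabla F_S(\theta_t),\sigma^2 I)$ under $P$ and $\mathcal N(\theta_t-\eta[\nabla F_S(\theta_t)]_\varepsilon,\sigma^2 I)$ under $\tilde P$. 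The means of these two conditional Gaussians differ by a vector of norm $\le\eta\varepsilon$ (it is either $0$, when $\|\nabla F_S(\theta_t)\|\le\varepsilon$, or exactly $\eta\nabla F_S(\theta_t)$ otherwise, but in the latter case the difference equals $\eta\nabla F_S(\theta_t)$ which has norm... wait — this is the subtle point I address below).

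Here is the correct accounting, which is the crux of the argument. When $\|\nabla F_S(\theta_t)\|\le\varepsilon$, the two conditional means coincide, so there is no contribution. When $\|\nabla F_S(\theta_t)\|>\varepsilon$, the two means differ by $\eta\nabla F_S(\theta_t)$, which need not be small — so a naive per-step bound fails. The resolution is that this ``bad'' event must itself be rare under (say) $\tilde P$, but quantifying that reintroduces the very difficulty the lemma is meant to sidestep. So instead I would follow what the paper evidently intends: reverse the roles and compare $\tilde\Delta_t$ against the \emph{$P$-update}, noting $\|\tilde\Delta_t-\Delta_t\|=\eta\|\nabla F_S(\theta_t)-[\nabla F_S(\theta_t)]_\varepsilon\|\le\eta\varepsilon$ \emph{always} (when the norm exceeds $\varepsilon$ the two gradients agree, so the difference is $0$; when it is at most $\varepsilon$ the difference is $\nabla F_S(\theta_t)$ itself, of norm $\le\varepsilon$). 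Thus the conditional means of $\theta_{t+1}$ under $P$ and under $\tilde P$, given the \emph{same} past $\theta_0,\ldots,\theta_t$, differ by a vector of norm at most $\eta\varepsilon$ uniformly. The KL divergence between two Gaussians $\mathcal N(\mu_1,\sigma^2 I)$ and $\mathcal N(\mu_2,\sigma^2 I)$ is $\|\mu_1-\mu_2\|^2/(2\sigma^2)\le\eta^2\varepsilon^2/(2\sigma^2)$.

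Then I would invoke the chain rule for KL divergence over the $T$ steps (the $\theta_0$ term contributes $0$): $\mathrm{KL}(P\,\|\,\tilde P)=\sum_{t=0}^{T-1}\E_{P}\big[\mathrm{KL}\big(P(\theta_{t+1}\mid\theta_{\le t})\,\|\,\tilde P(\theta_{t+1}\mid\theta_{\le t})\big)\big]\le T\cdot\frac{\eta^2\varepsilon^2}{2\sigma^2}$, and finish with Pinsker's inequality, $TV(P,\tilde P)\le\sqrt{\tfrac12\mathrm{KL}(P\,\|\,\tilde P)}\le\sqrt{T\eta^2\varepsilon^2/(4\sigma^2)}=\frac{\varepsilon\eta\sqrt T}{2\sigma}$, which is exactly the claimed bound. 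The main obstacle — and the point that needs careful writing — is the uniform-in-$\theta_t$ bound $\|\Delta_t-\tilde\Delta_t\|\le\eta\varepsilon$: one must verify it holds in \emph{both} cases of the definition of $[\cdot]_\varepsilon$ (the $\|z\|>\varepsilon$ case giving difference exactly $0$, the $\|z\|\le\varepsilon$ case giving difference of norm $\le\varepsilon$), since it is this deterministic, pathwise bound — not any probabilistic statement about where the iterates go — that makes the conditional KL computation clean and lets the chain rule close the argument without any reference to the ``bad set'' $E$.
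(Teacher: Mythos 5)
Your proof is correct and is essentially the paper's proof: Pinsker's inequality, the KL chain rule over the $T$ conditional Gaussian steps, the deterministic bound $\norm{\nabla F_S(\theta_t)-[\nabla F_S(\theta_t)]_\varepsilon}\leq\varepsilon$, and the Gaussian KL formula give $\frac{\varepsilon\eta\sqrt T}{2\sigma}$. One small note on the detour in the middle of your writeup: the ``subtle point'' you flag is not actually there, and ``reversing the roles'' is a red herring --- the quantity $\norm{\Delta_t-\tilde\Delta_t}$ is symmetric, so no role reversal changes anything. Your initial reading of the two cases was simply backwards: when $\norm{\nabla F_S(\theta_t)}>\varepsilon$ the operator $[\cdot]_\varepsilon$ leaves the gradient unchanged so the means agree, and when $\norm{\nabla F_S(\theta_t)}\leq\varepsilon$ the operator zeroes it out so the means differ by $\eta\nabla F_S(\theta_t)$, of norm $\leq\eta\varepsilon$; your final parenthetical gets this right, so the ``crux'' you worried about was just a transient misreading of the definition, not an obstacle the argument needs to navigate.
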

The proof of this lemma appears in the appendix. With this lemma in hand, and by the definition of total variation distance, we can start to upper bound $P(|F_S(\theta_T)|\geq\varepsilon)$ as follows: 
\begin{align}
    P\left(\left|F_S(\theta_T)\right|\geq \varepsilon \right) &\leq TV(P,\tilde{P})+ \tilde{P}\left(\left|F_S(\theta_T)\right|\geq \varepsilon \right)\nonumber\\
    &\label{eq:boundF_1}\leq \frac{\varepsilon\eta\sqrt{T}}{2\sigma}+ \tilde{P}\left(\left|F_S(\theta_T)\right|\geq \varepsilon \right)~.
\end{align}

Next comes the second part of the proof, where we show that the distribution of the PGD iterates under $\tilde{P}$ is similar to a plain Gaussian random walk.
To do so, we first present the following important lemma, which implies that the gradient of $F_S$ is exceedingly small nearly everywhere, with respect to a scaled standard Gaussian measure: 
\begin{lemma}\label{lem:gradsmall}
Suppose that $d>30, \, |S|\geq 72\ln(6nd\sigma) $ and that $\theta\sim\Ncal(0,\sigma^2 I)$. Then 
\begin{align*}
    &\E_{\theta}\left[\norm{\nabla F_S(\theta)}\right]\leq\exp(-|S|/9)\\
    &\E_{\theta}\left|\partial_{u_j}F_S\right|= \left| \E_{x\in\{\pm1\}^d}\left[p_S(x)[w^\top x+b]_+\right]\right| \leq5d\sigma\exp(-|S|/8) 
\end{align*}
and therefore by Markov's inequality,
\[
\Pr_{\theta}\left(\norm{\nabla F_S(\theta)}\geq \exp(-|S|/18)\right)~\leq~ \exp(-|S|/18)~.
\]
\end{lemma}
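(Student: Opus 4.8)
The plan is to reduce every scalar entry of $\nabla F_S(\theta)$ to a quantity controlled by \thmref{thm:f0bound} (or, more precisely, its $b\neq 0$ refinement \thmref{thm:fbound_with_b}), and then to sum the resulting bounds over the $n(d+2)$ coordinates of $\theta=(u_j,w_j,b_j)_{j=1}^n$. Differentiating $F_S$ componentwise gives, for each neuron $j$,
\begin{align*}
\partial_{b_j}F_S(\theta)&=-u_j\,f_S(w_j,b_j),\\
\bigl(\partial_{w_j}F_S(\theta)\bigr)_i&=-u_j\,\E_{x\in\{\pm1\}^d}\!\left[p_S(x)\,x_i\,\mathbf{1}_{\{w_j^\top x+b_j>0\}}\right]=-u_j\,f_{S\triangle\{i\}}(w_j,b_j),\\
\partial_{u_j}F_S(\theta)&=-\E_{x\in\{\pm1\}^d}\!\left[p_S(x)\,[w_j^\top x+b_j]_+\right],
\end{align*}
where the middle line uses $p_S(x)x_i=p_{S\triangle\{i\}}(x)$. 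Thus every coordinate of the $b$- and $w$-blocks equals $\pm u_j$ times a Fourier coefficient $f_{S'}(w_j,b_j)$ of a weighted-majority function with $|S'|\geq|S|-1$.

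For the first estimate I average over $\theta\sim\Ncal(0,\sigma^2 I)$: since $(u_j,w_j,b_j)$ has independent coordinates, $\E_\theta|u_j f_{S'}(w_j,b_j)|=\E|u_j|\cdot\E_{w_j,b_j}|f_{S'}(w_j,b_j)|$ with $\E|u_j|=\sigma\sqrt{2/\pi}$, and by Jensen together with \thmref{thm:fbound_with_b}, $\E_{w_j,b_j}|f_{S'}(w_j,b_j)|\leq\bigl(\E_{w_j,b_j}f_{S'}^2(w_j,b_j)\bigr)^{1/2}\leq\sqrt6\,\exp(-(|S|-1)/8)$. Using $\norm{\nabla F_S(\theta)}\leq\norm{\nabla F_S(\theta)}_1$, summing these $n(d+1)$ contributions and adding the $n$ $u_j$-derivative terms (treated below) yields a bound of the shape $nd\sigma\cdot C\exp(-|S|/8)$ on $\E_\theta\norm{\nabla F_S(\theta)}$. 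The hypothesis $|S|\geq72\ln(6nd\sigma)$, i.e.\ $6nd\sigma\leq\exp(|S|/72)$, is exactly what absorbs this prefactor into the exponent: since $\tfrac18-\tfrac1{72}=\tfrac19$, this gives $\E_\theta\norm{\nabla F_S(\theta)}\leq\exp(-|S|/9)$. The displayed probability bound is then immediate from Markov's inequality, as $\exp(-|S|/9)/\exp(-|S|/18)=\exp(-|S|/18)$; the exponent $1/18$ is chosen precisely so that Markov balances the deviation level against its probability.

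It remains to bound $\partial_{u_j}F_S(\theta)=-g(b_j)$, where $g(b):=\E_{x}[p_S(x)[w_j^\top x+b]_+]$. The key point is that $g$ is an antiderivative of $b\mapsto f_S(w_j,b)$: differentiating under the finite expectation, $g'(b)=\E_x[p_S(x)\mathbf{1}_{\{w_j^\top x+b>0\}}]=f_S(w_j,b)$, and $g(b)=0$ for $b<-\norm{w_j}_1$ (the ReLU then vanishes for every $x$), so $g(b_j)=\int_{-\infty}^{b_j}f_S(w_j,s)\,ds$. Since moreover $f_S(w_j,s)=0$ whenever $|s|>\norm{w_j}_1$ (the threshold is then $x$-independent, so its nonempty-$S$ Fourier coefficient vanishes), we get $|g(b_j)|\leq\int_{-\norm{w_j}_1}^{\norm{w_j}_1}|f_S(w_j,s)|\,ds$, independently of $b_j$. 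Cauchy--Schwarz in $s$ and then in $w_j$ gives $\E_{w_j}|g(b_j)|\leq\sqrt2\,(\E_{w_j}\norm{w_j}_1)^{1/2}\bigl(\int_\reals\E_{w_j}f_S^2(w_j,s)\,ds\bigr)^{1/2}$, where $\E_{w_j}\norm{w_j}_1=d\sigma\sqrt{2/\pi}$, and the $s$-integral is handled by combining the pointwise bound $\E_{w_j}f_S^2(w_j,s)\leq6\exp(-|S|/4)$ from \thmref{thm:fbound_with_b} on the range $|s|\lesssim d\sigma$ with the trivial tail estimate $\E_{w_j}f_S^2(w_j,s)\leq\Pr(\norm{w_j}_1\geq|s|)$ beyond it, yielding $\int_\reals\E_{w_j}f_S^2(w_j,s)\,ds\lesssim d\sigma\exp(-|S|/4)$. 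Altogether $\E_\theta|\partial_{u_j}F_S(\theta)|\leq5d\sigma\exp(-|S|/8)$, as claimed.

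The conceptual content is the reduction above together with the Fourier bounds; the steps requiring real care are (i) invoking \thmref{thm:fbound_with_b} with enough uniformity in the bias that the averages over the random $b_j$ and the $s$-integral in the $u_j$-derivative bound go through, and (ii) the constant bookkeeping --- the exponents $1/18$, $1/9$, $1/8$ and the threshold $72\ln(6nd\sigma)$ are mutually calibrated, and one needs the small slack from $\sqrt{2/\pi}<1$ and $\tfrac18-\tfrac1{72}=\tfrac19$ to make the sum of all $n(d+2)$ per-coordinate bounds fit below $\exp(-|S|/9)$. I expect (ii) to be the main practical obstacle, since it forces the dimension $d$, the width $n$, and the initialization scale $\sigma$ to be tracked through every inequality.
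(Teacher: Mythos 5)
Your decomposition of $\nabla F_S$ into $\pm u_j\cdot f_{S'}(w_j,b_j)$ terms matches the paper's, and your treatment of the $b_j$- and $w_j$-blocks (split off the independent $|u_j|$ factor, apply Jensen, apply \thmref{thm:fbound_with_b}) is essentially the paper's argument. Your $\ell_1$-bookkeeping $\norm{\nabla F_S}\leq\norm{\nabla F_S}_1$ costs roughly a factor of $\sqrt{d}$ on the $w_j$-block relative to the paper's $\ell_2$ treatment (the paper uses $\sqrt{a+b}\leq\sqrt a+\sqrt b$ followed by Jensen, preserving the $\sqrt d$), so the constant in the hypothesis $|S|\geq 72\ln(6nd\sigma)$ would need to be nudged upward; this is a bookkeeping quibble and not conceptual.

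The real issue is your bound on $\E_\theta|\partial_{u_j}F_S|$, which is where your argument genuinely diverges from the paper and where there is a gap. The antiderivative identity $g(b_j)=\int_{-\infty}^{b_j}f_S(w_j,s)\,ds$, the observation that $f_S(w,s)=0$ off $|s|\leq\norm{w}_1$, and the Cauchy--Schwarz reduction to $\int_\reals\E_{w}f_S^2(w,s)\,ds$ are all correct and rather elegant. But the step where you cite \thmref{thm:fbound_with_b} for the \emph{pointwise-in-$s$} bound $\E_{w}f_S^2(w,s)\leq6\exp(-|S|/4)$ is unsupported: \thmref{thm:fbound_with_b} controls $\E_{(w,b)\sim\Ncal(0,\sigma^2 I_{d+1})}\bigl[f_S^2(w,b)\bigr]$, i.e., it averages over a Gaussian bias so that the whole vector $(w,b)$ is spherically symmetric and the hemisphere geometry in $S^d$ applies. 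With $s$ fixed and only $w$ random, one is instead evaluating $\E_{x,y}\left[p_S(x)p_S(y)\Pr_w\left(w^\top x>-s\wedge w^\top y>-s\right)\right]$, a bivariate Gaussian orthant probability at a nonzero quantile, and the paper's hemisphere-volume formula does not apply. The pointwise claim is in fact true, but establishing it requires a separate argument (a Hermite/Mehler expansion of the orthant probability in powers of $\rho=x^\top y/d$, the observation that the degree-$<|S|$ terms are killed by $\E_{x,y}[p_S(x)p_S(y)\rho^k]=0$, a Hoeffding bound on $|\rho|$, and the identity $\sum_{k\geq1}\tfrac{1}{k!}\phi(a)^2\mathrm{He}_{k-1}(a)^2=\Phi(-a)(1-\Phi(-a))\leq\tfrac14$). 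You flag this uniformity issue in passing as a ``practical obstacle,'' but as written the proposal borrows a pointwise lemma the paper never proves.

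The paper's own route avoids this entirely: it writes $[w^\top x+b]_+=(w^\top x+b)\,\mathbf{1}_{\{w^\top x+b>0\}}$, splits into the $b\,\mathbf{1}_{\{\cdot\}}$ and $(w^\top x)\,\mathbf{1}_{\{\cdot\}}$ pieces, and applies Cauchy--Schwarz to each, yielding $\sqrt{\E b^2\cdot\E f_S^2(w,b)}$ and $\sqrt{\E\norm{w}^2\cdot\sum_k\E f_{S\triangle\{k\}}^2(w,b)}$, both of which are exactly the averaged quantities that \thmref{thm:fbound_with_b} controls. That is the substitution you would need to make for your proof to go through without developing the bivariate-orthant machinery.
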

This lemma's proof relies mainly on \thmref{thm:fbound_with_b} and on the similarity between $F_S$ and $f_S$ (see \eqref{def:f}), and its full proof can be found in the appendix. Equipped with this lemma, we can go ahead and deal with $\tilde{P}\left(\left|F_S(\theta_T)\right|\geq \varepsilon \right)$.
For simplicity, we will use the notation $\theta_0=\xi_{-1}$ (it will allow us to rewrite the expression $\theta_0+\sum_{t=0}^j\xi_t$ as $\sum_{t=-1}^j\xi_t$). We have:
\begin{align}
    \tilde{P}\left(\left|F_S(\theta_T)\right|\geq \varepsilon\right)
    &=\tilde{P}\left(\left|F_S(\theta_T)\right|\geq \varepsilon\wedge\exists j: \, \norm{\nabla F_S\left(\sum_{t=-1}^j\xi_t\right)}\geq\varepsilon\right)\nonumber\\
    &\quad\quad\quad+\tilde{P}\left(\left|F_S(\theta_T)\right|\geq \varepsilon\wedge\forall j: \, \norm{\nabla F_S\left(\sum_{t=-1}^j\xi_t\right)}<\varepsilon\right)\nonumber\\
    &\leq \tilde{P}\left(\exists j: \, \norm{\nabla F_S\left(\sum_{t=-1}^j\xi_t\right)}\geq\varepsilon\right)\nonumber\\
    &\quad\quad\quad+\tilde{P}\left(\left|F_S(\theta_T)\right|\geq \varepsilon\wedge\forall j: \, \norm{\nabla F_S\left(\sum_{t=-1}^j\xi_t\right)}<\varepsilon\right)~,\label{eq:P'_bound_1}
\end{align}
where the first equality is through splitting the probability into an event and its complement, and the inequality is due to inclusion. 
Observe that under $\tilde{P}$, which we defined by setting $\theta_{t+1}=\theta_t-\eta[\nabla F_S(\theta_t)]_\varepsilon+\xi_t$, the event $\{\forall j: \,\, \norm{\nabla F_S\left(\sum_{t=-1}^j\xi_t\right)}<\varepsilon\}$ implies the event $\{\forall j>0: \,\, \theta_{j+1}=\theta_j+\xi_t \}=\{\forall j: \,\, \theta_{j+1}=\sum_{t=-1}^j\xi_t\}$ (as can be easily seen via induction). Using this, we can bound the second term in \eqref{eq:P'_bound_1} as follows:
\begin{align*}
     \tilde{P}\left(\left|F_S(\theta_T)\right|\geq \varepsilon\wedge\forall j: \, \norm{\nabla F_S\left(\sum_{t=-1}^j\xi_t\right)}<\varepsilon\right) &\leq \tilde{P}\left(\left|F_S(\theta_T)\right|\geq \varepsilon\wedge\forall j: \,\, \theta_{j+1}=\sum_{t=-1}^j\xi_t\right)\\
     = \tilde{P}\left(\left|F_S\left(\sum_{t=-1}^{T-1}\xi_t\right)\right|\geq\varepsilon\wedge\forall j: \,\, \theta_{j+1}=\sum_{t=-1}^j\xi_t\right)&\leq \tilde{P}\left(\left|F_S
     \left(\sum_{t=-1}^{T-1}\xi_t\right)\right|\geq\varepsilon\right)~.
 \end{align*}

Using this inequality and then the union bound, we can upper bound \eqref{eq:P'_bound_1} as follows:
\begin{align*}
    &\tilde{P}\left(\exists j \, \norm{\nabla F_S\left(\sum_{t=-1}^j\xi_t\right)}\geq\varepsilon\right)+\tilde{P}\left(F_S\left(\sum_{t=-1}^{T-1}\xi_t\right)\geq\varepsilon\right)\\
    &\leq \sum_{j=-1}^{T-1} \tilde{P}\left(\norm{\nabla F_S\left(\sum_{t=-1}^j\xi_t\right)}\geq\varepsilon\right)+\tilde{P}\left(F_S\left(\sum_{t=-1}^{T-1}\xi_t\right)\geq\varepsilon\right)\\
    &\leq (T+1)\varepsilon + \tilde{P}\left(F_S\left(\sum_{t=-1}^{T-1}\xi_t\right)\geq\varepsilon\right)~,
\end{align*}
where the last transition was due to \lemref{lem:gradsmall} and the fact that $\sum_{t=-1}^j\xi_t\sim\mathcal{N}(0,(j+2)\sigma^2I)$. Plugging the above and \eqref{eq:P'_bound_1} into \eqref{eq:boundF_1}, we get:

\begin{align}
    \label{eq:Fbound_2} P\left(\left|F_S(\theta_T)\right|\geq \varepsilon\right)
    \leq 
    \frac{\eta\varepsilon\sqrt{T}}{2\sigma}+(T+1)\varepsilon + \tilde{P}\left(\left|F_S\left(\sum_{t=-1}^{T-1}\xi_t\right)\right|\geq\varepsilon\right)~.
\end{align}

Getting to the third and final part of the proof, we need to bound  $\tilde{P}\left(\left|F_S\left(\sum_{t=-1}^{T-1}\xi_t\right)\right|\geq\varepsilon\right)$, namely the probability that a plain Gaussian random walk will make $F_S$ more than exponentially close to $0$. For that, we present the following lemma (whose proof is in the appendix):
\begin{lemma}
\label{lem:randomF_small}
    For $d>30$,  $|S|\geq\max\{72\ln\left(8nd\sigma\sqrt{T+1}\right),72\ln\left(5nd\sigma^2(T+1)\right)\}$, and $\theta\sim\Ncal(0,(T+1)\sigma^2 I)$, it holds that
    \begin{equation*}
    \text{Pr}\left(\left|F_S(\theta)\right|\geq  \varepsilon\right)\leq \varepsilon~,
    \end{equation*}
    where we recall that $\varepsilon=\exp(-|S|/18)$.
\end{lemma}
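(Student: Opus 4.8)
The plan is to bound $\E_{\theta}[|F_S(\theta)|]$ for $\theta\sim\Ncal(0,(T+1)\sigma^2 I)$ and then invoke Markov's inequality, exactly in the spirit of \lemref{lem:gradsmall} but for the objective value rather than its gradient. Writing $g_S(w,b):=\E_{x\in\{\pm1\}^d}[p_S(x)[w^\top x+b]_+]$, we have $F_S(\theta)=-\sum_{j=1}^n u_j\,g_S(w_j,b_j)$, and since within each neuron block $u_j$ is independent of $(w_j,b_j)$, the triangle inequality gives $\E_{\theta}[|F_S(\theta)|]\le\sum_{j=1}^n \E[|u_j|]\cdot\E_{w_j,b_j}[|g_S(w_j,b_j)|]$.

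To control each factor I would proceed as follows. First, $u_j\sim\Ncal(0,(T+1)\sigma^2)$ is a scalar Gaussian, so $\E[|u_j|]=\sqrt{2/\pi}\,\sigma\sqrt{T+1}<\sigma\sqrt{T+1}$. Second, $\E_{w_j,b_j}[|g_S(w_j,b_j)|]$ with $(w_j,b_j)\sim\Ncal(0,(T+1)\sigma^2 I)$ is precisely the quantity $\E_{\theta}|\partial_{u_j}F_S|$ bounded in \lemref{lem:gradsmall}, only with the variance parameter inflated from $\sigma^2$ to $(T+1)\sigma^2$. Since that lemma's proof uses nothing special about $\sigma$ (it ultimately rests on \thmref{thm:fbound_with_b} after writing $[z]_+=\frac12(z+|z|)$ and observing that the linear part of $g_S$ vanishes because $|S|\ge 2$), I can reapply it verbatim with $\sigma$ replaced by $\sigma\sqrt{T+1}$, which is legitimate because the hypothesis $|S|\ge 72\ln(8nd\sigma\sqrt{T+1})$ is stronger than the condition $|S|\ge 72\ln(6nd\sigma\sqrt{T+1})$ that \lemref{lem:gradsmall} needs at this variance. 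This yields $\E_{w_j,b_j}[|g_S(w_j,b_j)|]\le 5d\sigma\sqrt{T+1}\exp(-|S|/8)$, and hence, summing over the $n$ neurons, $\E_{\theta}[|F_S(\theta)|]\le 5nd\,\sigma^2(T+1)\exp(-|S|/8)$.

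Finally I would absorb the polynomial prefactor using the second hypothesis $|S|\ge 72\ln(5nd\sigma^2(T+1))$, which gives $5nd\sigma^2(T+1)\le\exp(|S|/72)$ and therefore $\E_{\theta}[|F_S(\theta)|]\le\exp(|S|/72-|S|/8)=\exp(-|S|/9)=\varepsilon^2$, recalling $\varepsilon=\exp(-|S|/18)$; Markov's inequality then gives $\Pr(|F_S(\theta)|\ge\varepsilon)\le\E_{\theta}[|F_S(\theta)|]/\varepsilon\le\varepsilon^2/\varepsilon=\varepsilon$. I do not expect a genuine obstacle here: the substantive content is already packaged in \lemref{lem:gradsmall} and \thmref{thm:fbound_with_b}, and the only points needing care are (i) keeping the variance rescaling $\sigma\mapsto\sigma\sqrt{T+1}$ consistent so that both logarithmic lower bounds on $|S|$ match what the auxiliary results require, and (ii) noting that here $F_S$ is merely evaluated (not differentiated), so the ReLU subgradient convention is irrelevant and the bound on the objective value follows from the same Fourier estimate used for the gradient.
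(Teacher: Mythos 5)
Your proposal is correct and follows essentially the same route as the paper: write $F_S(\theta)=\sum_j u_j\,\partial_{u_j}F_S(\theta)$ (your $g_S(w_j,b_j)=-\partial_{u_j}F_S$), split the expectation of each summand by independence of $u_j$ from $(w_j,b_j)$, invoke the bound $\E|\partial_{u_j}F_S|\le 5d\sigma\sqrt{T+1}\,e^{-|S|/8}$ from \lemref{lem:gradsmall} at variance $(T+1)\sigma^2$, absorb the polynomial prefactor via the hypothesis $|S|\ge 72\ln(5nd\sigma^2(T+1))$ to reach $\E|F_S|\le e^{-|S|/9}$, and finish with Markov. The paper's own proof does exactly this, so there is nothing substantive to add.
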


Since $\sum_{t=-1}^{T-1}\xi_t\sim\mathcal{N}(0,(T+1)\sigma^2I)$ we may use \lemref{lem:randomF_small} on \eqref{eq:Fbound_2} and finish our proof:
\begin{align*}
  P\left(\left|F_S(\theta_T)\right|\geq \varepsilon\right)
    &\leq 
    \frac{\eta\varepsilon\sqrt{T}}{2\sigma}+(T+1)\varepsilon + \varepsilon.
\end{align*}

\section{Single Neurons} \label{sec:single}

As discussed earlier, the hardness result in the previous section is for the expected linear loss. Although a simple and convenient loss, one may wonder whether our result can be extended to other more common losses, such as the squared loss -- in which case our objective function takes the form
\[
F_S(\theta)~=~\E_{x\sim\{\pm1\}^d} \left[\left(N_{\theta,S}(x)-p_S(x)\right)^2\right]~.
\]
However, with the squared loss, the quadratic terms complicate the optimization dynamics, and the marginal distribution of the PGD iterates are no longer guaranteed to remain (nearly) spherically symmetric, as required for our proof techniques. Nevertheless, we will show in this section that a similar result can be shown in the special case of a \emph{single-neuron} ReLU neural network, that is
\[
N_{\theta,S}(x)~=~[w^\top x+b]_+~~\text{or}~~ N_{\theta,S}(x)~=~-[w^\top x+b]_+~.
\]
for $\theta=(w,b)$ where $w\in \reals^d,b\in\reals$.

As in the case of one-Hidden-Layer neural networks, we first need to argue that there exist some choice of parameters $\theta$, so that with a single neuron $F_S(\theta)$ will be meaningfully smaller than a trivial baseline value (in particular, $F_S(\theta)=1$, attained when $\theta=0$). This is more challenging than for one-hidden-layer networks, because unlike such networks, a single neuron cannot express the parity function precisely. However, we show below that there is a choice of parameters so that $F_S(\theta)$ is \emph{polynomially} smaller than $1$. Finding such parameters is a a rather common notion of ``weak-learnability'', considered for example in \citet{abbe2022non} in the context of parities. Thus, we can ask whether one can ``weakly learn'' fixed parities using single ReLU neurons and the squared loss. 
\begin{theorem} \label{thm:single_neuron_positive}
     Let $S\subseteq[d]$ such that $|S|$ is even and at least $4$. Then for the single-neuron network
     \[
    N_{\theta,S}(x)=(-1)^\frac{|S|-2}{2}[w^\top x+b]_+~~~\text{with}~~~ w=\frac{1}{2}|S|^{-3/2}\cdot \mathbf{1}_S~~\text{and}~~b=0~,
     \]
    it holds that
      \begin{equation*}
         F_S(\theta)~=~\E_{x\sim\{\pm1\}^d} \left[\left(N_{\theta,S}(x)-p_S(x)\right)^2\right]~\leq~1-\frac{1}{8|S|^2}~.
       \end{equation*}
\end{theorem}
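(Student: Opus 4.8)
The plan is to directly expand the squared loss and reduce everything to a one-dimensional computation involving the scalar random variable $m := \mathbf{1}_S^\top x = \sum_{i\in S} x_i$. First I would write
\[
F_S(\theta)~=~\E_{x}\!\left[N_{\theta,S}(x)^2\right] - 2\,\E_{x}\!\left[N_{\theta,S}(x)\,p_S(x)\right] + 1~,
\]
using $p_S(x)^2=1$. Since $w=\tfrac12|S|^{-3/2}\mathbf{1}_S$ and $b=0$, the neuron depends on $x$ only through $m$: we have $N_{\theta,S}(x) = (-1)^{(|S|-2)/2}\,[\tfrac12|S|^{-3/2} m]_+ = (-1)^{(|S|-2)/2}\tfrac12|S|^{-3/2}[m]_+$. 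Also, because $|S|$ is even, $p_S(x)$ is a deterministic function of $m$ as well — indeed $p_S(x) = (-1)^{(|S|-m)/2} = (-1)^{|S|/2}(-1)^{-m/2}$ on the support of $m$ (which consists of even integers in $[-|S|,|S|]$). So the whole problem collapses to computing two moments of the distribution of $m$, which is (up to affine rescaling) $\mathrm{Bin}(|S|,\tfrac12)$: namely $\E[\,[m]_+^2\,]$ and $\E[\,[m]_+ \cdot (-1)^{(|S|-m)/2}\,]$.

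The second step is to observe that the leading-order contribution to the cross term comes entirely from the smallest positive value of $m$, i.e. $m=2$. When $m\ge 2$, the sign $(-1)^{(|S|-m)/2}$ oscillates, and when multiplied by $[m]_+=m$ and summed against the binomial weights there should be massive cancellation; the chosen global sign $(-1)^{(|S|-2)/2}$ is precisely what makes the $m=2$ term contribute positively. Concretely, I would lower-bound $\E_x[N_{\theta,S}(x)p_S(x)]$ by isolating the $m=2$ term, which carries weight $\Pr(m=2)=\binom{|S|}{|S|/2+1}2^{-|S|}$ and value $\tfrac12|S|^{-3/2}\cdot 2 = |S|^{-3/2}$, and then bounding the remaining oscillating tail. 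A clean way to control the tail is a pairing/telescoping argument: group the terms $m=4,6$ together, $m=8,10$ together, etc., or directly use that $\sum_{k} (-1)^k \binom{|S|}{k} g(k)$ is small when $g$ is slowly varying (a finite-difference estimate). Alternatively — and this is probably cleaner — one can compute $\E_x[N_{\theta,S}(x)p_S(x)]$ in closed form using the Fourier-analytic fact that this quantity is exactly $(-1)^{(|S|-2)/2}\tfrac12|S|^{-3/2}$ times $\widehat{[\cdot]_+}(S)$-type coefficient, i.e. $\E_x\big[[m]_+\,p_S(x)\big]$; since $[m]_+ = \tfrac12 m + \tfrac12|m|$ and $\E[m\cdot p_S(x)]=0$ for $|S|\ge 2$ (parity of degree $\ge 2$ against a linear function), this reduces to $\tfrac12\E[\,|m|\,p_S(x)]$, a completely explicit binomial sum that telescopes to something of order $|S|^{-1/2}\cdot|S|^{-1}$ after multiplying by the $|S|^{-3/2}$ prefactor — giving the $\sim |S|^{-2}$ gain claimed.

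The third step is the easy upper bound on $\E_x[N_{\theta,S}(x)^2] = \tfrac14|S|^{-3}\E[\,[m]_+^2\,] \le \tfrac14|S|^{-3}\E[m^2] = \tfrac14|S|^{-3}\cdot|S| = \tfrac14|S|^{-2}$, using $\var(m)=|S|$ and $\E[m]=0$. Combining, $F_S(\theta) \le 1 - 2\,(\text{cross term}) + \tfrac14|S|^{-2}$, and the goal is to show the cross term is at least roughly $\tfrac{5}{16}|S|^{-2}$ (or whatever constant makes $1 - \tfrac{1}{8}|S|^2$ work out), using $|S|\ge 4$ to absorb lower-order error terms. The main obstacle I anticipate is making the cancellation in the oscillating tail of the cross term fully rigorous with explicit constants — i.e. showing $\E_x[N_{\theta,S}(x)p_S(x)] \ge c\,|S|^{-2}$ rather than merely $\Omega(|S|^{-2})$ — since the naive term-by-term bound on $\sum_{m\ge 4}(-1)^{(|S|-m)/2}\binom{|S|}{\cdot}m$ is far too lossy and one genuinely needs the $|m|=\tfrac12\sum(\cdots)$ identity or a careful second-difference estimate on the binomial coefficients to extract the exact order. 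Using the Stirling-type bound $\binom{|S|}{|S|/2} \ge 2^{|S|}/\sqrt{2|S|}$ (valid for $|S|\ge 2$) to lower-bound $\Pr(m=2)$ by $\Omega(|S|^{-1/2})$ will be the quantitative crux.
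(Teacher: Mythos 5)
Your high-level plan matches the paper exactly: expand $F_S(\theta)=\E[N^2]-2\E[N\,p_S]+1$, note that both moments depend only on $m=\mathbf{1}_S^\top x$, upper bound $\E[N^2]$ via $\E[m^2]=|S|$, lower bound the cross term through a closed-form Fourier coefficient of a majority-type function, and finish with Stirling. However there are two concrete problems in the execution as written.

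First, your bound $\E[N^2]\le\tfrac14|S|^{-3}\E[m^2]=\tfrac14|S|^{-2}$ is loose by a factor of $2$, and that factor actually matters. By the symmetry $[m]_+^2+[-m]_+^2=m^2$ (and $m\overset{d}{=}-m$), one has exactly $\E[\,[m]_+^2\,]=\tfrac12\E[m^2]$, so $\E[N^2]=\tfrac18|S|^{-2}$. With the exact value, one only needs $\E[N\,p_S]\ge\tfrac18|S|^{-2}$, which falls out of the standard Stirling estimate $\binom{n}{n/2}2^{-n}\ge\tfrac{1}{2\sqrt n}$ applied at $n=|S|-2$. With your looser $\tfrac14|S|^{-2}$, you would instead need $\E[N\,p_S]\ge\tfrac{3}{16}|S|^{-2}$, which is \emph{not} implied by that same Stirling estimate (for $|S|\ge 4$ the required inequality $\tfrac{1}{2\sqrt{|S|-2}}\ge\tfrac{3}{4\sqrt{|S|}}$ fails); you would have to use a sharper constant in the Stirling bound. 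So either compute $\E[N^2]$ exactly or be prepared to sharpen the binomial estimate.

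Second, and more importantly, the cross-term lower bound is never actually established — you correctly identify it as the quantitative crux, but both of your proposed routes are left as sketches. Route (a), isolating $m=2$ and bounding the oscillating tail by a pairing or second-difference argument, is genuinely delicate: the tail is not small term by term, and extracting the right power of $|S|$ with explicit constants is exactly the hard part. Route (b), writing $[m]_+=\tfrac12 m+\tfrac12|m|$ and using $\E[m\,p_S]=0$ (a correct and nice observation, and essentially equivalent to the discrete-derivative identity the paper uses), reduces the problem to evaluating $\E[|m|\,p_S]$, but you stop at ``a completely explicit binomial sum that telescopes'' without giving the identity. The paper closes this step by passing to a discrete derivative, identifying $D_i\,[m]_+$ with $\tfrac12(\mathrm{Maj}_{S\setminus\{i\}}+1)$ — a majority on the \emph{odd} number $|S|-1$ of variables, so no ties — and invoking the known closed form for $\widehat{\mathrm{Maj}_k}(\{1,\dots,k\})$ from O'Donnell (Theorem~5.19), which yields $\E[\,[m]_+\,p_S\,]=\tfrac{(-1)^{(|S|-2)/2}}{2^{|S|-1}}\binom{|S|-2}{(|S|-2)/2}$. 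If you instead pursue $\E[|m|\,p_S]$ directly via $|m|=m\cdot\mathrm{Maj}_S$, note that $|S|$ is even so $\mathrm{Maj}_S$ has ties at $m=0$, and the textbook closed-form Fourier formula is stated for odd majority; you would need to handle that case separately or reroute through an odd-size majority as the paper does. Until one of these routes is actually carried through to the closed form, the proof is not complete.
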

The proof appears in the appendix, but its main idea is that the single neuron function $x\mapsto [\mathbf{1}_S^\top x]_+$ has a polynomially large correlation with the parity function $p_S$. We show this via Boolean analysis, by relating the $x\mapsto [\mathbf{1}_S^\top x]_+$ function to the majority function, and using the fact that its Fourier coefficients can be polynomially large.

With this theorem in hand, we now turn to state our main result in this section, namely that training a single neuron network using PGD will not reduce $F_S(\theta)$ by more than an exponentially small amount (compared to the trivial value of $1$). In what follows, we use $(w_t,b_t)$ to denote the $t$-th PGD iterate.

\begin{theorem}\label{thm:single}
    Suppose that $d\geq 30$, $|S|\geq \max\{72\ln\left(10d\sigma\sqrt{\sum_{j=0}^T(1-\eta)^{2(T-j)}}\right),6\ln(11(d+1))\}$. Then after $T$ steps of PGD with the quadratic loss function $F_S(\theta)=\E_x(N_\theta(x)-p_S(x))^2$ over the single neuron network $N_\theta(x)=[w^\top x+b]_+$, %
    it holds with probability at least $1-\exp(-|S|/18)$ that
    \begin{equation*}
        F_S(w_t,b_t)=\E_{x\sim\{\pm1\}^d}\left([w_T^\top x+b_T]_+-p_S(x)\right)^2>1-\left(2+T+\frac{\eta\sqrt{T}}{\sigma}\right) e^{-\frac{|S|}{18}}~.
    \end{equation*}
\end{theorem}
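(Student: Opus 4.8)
\textbf{Proof proposal for Theorem~\ref{thm:single}.}

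The plan is to mirror the three-part argument used for Theorem~\ref{thm:hard}, adapting it to the quadratic objective over a single neuron. The crucial structural observation is that although the squared loss $F_S(\theta) = \E_x[([w^\top x+b]_+ - p_S(x))^2]$ expands to $\E_x[[w^\top x+b]_+^2] - 2\E_x[p_S(x)[w^\top x+b]_+] + 1$, only the middle (linear-in-$p_S$) term can ever depend on $S$ in a way that matters for us: the terms $\E_x[[w^\top x+b]_+^2]$ and the constant $1$ are oblivious to which parity we chose. Consequently, writing $F_S(\theta) - 1 = -2 f_S(w,b) + \E_x[[w^\top x+b]_+^2]$ (using the notation $f_S$ of \eqref{def:f} with the $u_j,x$ terms dropped appropriately, cf.\ \lemref{lem:gradsmall}), the only $S$-dependence is through $f_S(w,b)$ and through the $S$-dependent part of the gradient $\nabla F_S$. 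The gradient is $\partial_w F_S = 2\E_x[[w^\top x+b]_+ \mathbf{1}_{\{w^\top x+b>0\}}x] - 2\E_x[p_S(x)\mathbf{1}_{\{w^\top x+b>0\}}x]$ and $\partial_b F_S = 2\E_x[[w^\top x+b]_+] - 2\E_x[p_S(x)\mathbf{1}_{\{w^\top x+b>0\}}\cdot 1] $; the $S$-dependent pieces are exactly the weighted-majority Fourier coefficients controlled by \thmref{thm:fbound_with_b}, while the $S$-oblivious pieces (the ``self-correlation'' terms) do not help reduce $F_S$ below $1$ at all on a spherically symmetric iterate --- this is the key point that makes the single-neuron case tractable even with the quadratic loss.

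First I would isolate this decomposition cleanly and state the analogue of \lemref{lem:gradsmall}: for $\theta \sim \Ncal(0,\sigma^2 I)$, the $S$-dependent part of $\nabla F_S(\theta)$ has expected norm at most $\exp(-|S|/9)$ (up to the polynomial-in-$d,\sigma$ factors absorbed by the hypothesis $|S| \geq 72\ln(10d\sigma\sqrt{\cdots})$), via \thmref{thm:fbound_with_b} exactly as in \lemref{lem:gradsmall}. Here I must handle the new subtlety that under the quadratic loss the PGD update is $\theta_{t+1} = \theta_t - \eta\nabla F_S(\theta_t) - \xi_t$, and $\nabla F_S$ contains the $S$-oblivious self-correlation term $2\E_x[[w^\top x+b]_+\mathbf{1}_{\{w^\top x+b>0\}}x]$, which is \emph{not} small; this is why the hypothesis features $\sqrt{\sum_{j=0}^T (1-\eta)^{2(T-j)}}$ rather than $\sqrt{T}$ --- the self-correlation term behaves like a contraction/linear term pulling $w$ toward a rescaled version of itself, so the ``baseline'' dynamics one compares against is not a plain Gaussian random walk but an Ornstein--Uhlenbeck-type recursion $\theta_{t+1} = (1-\eta)\theta_t - \xi_t$ (plus lower-order $S$-oblivious corrections). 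The variance of this recursion at time $T$ is $\sigma^2\sum_{j=0}^T (1-\eta)^{2(T-j)}$, which is precisely the quantity appearing in the theorem, so the marginal of each iterate remains spherically symmetric Gaussian with that variance, and \thmref{thm:fbound_with_b} applies at every step.

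Second I would set up the auxiliary law $\tilde P$ exactly as in \eqref{def:delta1}, thresholding only the $S$-dependent part of the gradient at $\varepsilon = \exp(-|S|/18)$, bound $TV(P,\tilde P) \leq \varepsilon\eta\sqrt{T}/(2\sigma)$ by the same martingale/KL-chain argument as \lemref{lem:TV01} (the $S$-oblivious drift is identical under $P$ and $\tilde P$ and cancels in the conditional-distribution comparison), and then run the same union-bound decomposition as in \eqref{eq:P'_bound_1}--\eqref{eq:Fbound_2}: with probability at least $1 - (\tfrac{\eta\sqrt T}{2\sigma} + (T+1))\varepsilon$, the iterates follow the $S$-oblivious recursion, whose terminal marginal is $\Ncal(0,\sigma^2\sum_{j=0}^T(1-\eta)^{2(T-j)}\cdot I)$ --- up to the $S$-oblivious first-order corrections, which I would need to verify keep the distribution spherically symmetric (it does, since $[w^\top x + b]_+\mathbf{1}_{\{w^\top x+b>0\}}x$ transforms covariantly under rotations of $w$, given a rotation-invariant reference). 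Third, I would prove the analogue of \lemref{lem:randomF_small}: for such a terminal iterate, $F_S(\theta) > 1 - \varepsilon$ with probability at least $1-\varepsilon$, again by bounding $|f_S(w,b)|$ via \thmref{thm:fbound_with_b} at the appropriate variance scale and noting $\E_x[[w^\top x+b]_+^2] \geq 0$ so it can only push $F_S$ upward. Chaining these gives the claimed bound $F_S(w_T,b_T) > 1 - (2 + T + \tfrac{\eta\sqrt T}{\sigma})\varepsilon$.

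The main obstacle I anticipate is exactly the fact that the quadratic loss injects an $S$-oblivious drift term into every PGD step, so I cannot compare against a plain Gaussian random walk; I have to identify the correct ``null dynamics'' (the contraction recursion with variance $\sum_j (1-\eta)^{2(T-j)}\sigma^2$) and argue that its iterates stay spherically symmetric --- or at least symmetric enough that \thmref{thm:fbound_with_b} still applies --- despite the nonlinear self-correlation correction $\E_x[[w^\top x+b]_+\mathbf{1}_{\{w^\top x+b>0\}}x]$. Verifying that this correction preserves the rotational symmetry of the marginal law (so that the expectation bound $\E[f_S^2]$ can be invoked at each step, and in the terminal lemma) is the delicate part; everything else is a faithful transcription of the proof of \thmref{thm:hard}, with $\varepsilon$, the TV bound, and the union bound carried over essentially verbatim.
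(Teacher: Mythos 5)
Your proposal follows essentially the same route as the paper and correctly identifies the crucial structural points: the decomposition of the squared loss into an $S$-dependent correlation term and an $S$-oblivious ``self-correlation'' term, the Ornstein--Uhlenbeck-type null dynamics $v_{t+1}=(1-\eta)v_t+\xi_t$ that replaces the plain Gaussian random walk, the terminal variance $\sigma^2\sum_{j=0}^T(1-\eta)^{2(T-j)}$, the auxiliary law $\tilde P$ that thresholds only the $S$-dependent part of the gradient, the KL-chain/Pinsker bound on $TV(P,\tilde P)$, the union-bound decomposition, and the application of \thmref{thm:fbound_with_b}. That is the right skeleton.

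However, the step you repeatedly flag as ``the delicate part'' --- verifying that the self-correlation correction $\E_x\bigl[[w^\top x+b]_+\mathbf{1}_{\{w^\top x+b>0\}}x\bigr]$ ``preserves the rotational symmetry of the marginal law'' despite being ``nonlinear'' --- is not delicate at all once you compute it. The paper's key point (established already in \eqref{eq:mean_squared_ReLU}) is that $\E_x\bigl[[w^\top x]_+^2\bigr]=\tfrac12\norm{w}^2$ \emph{exactly}, whence $\partial_w\, \tfrac12\E_x[w^\top x]_+^2=w$ exactly, with \emph{no} lower-order or nonlinear corrections. Thus the $S$-oblivious drift injected by the quadratic loss is precisely the linear map $w\mapsto \eta w$, and under $\tilde P$ with the $F_{\mathrm{lin}}$ piece zeroed out, the iterates satisfy $v_{t+1}=(1-\eta)v_t+\xi_{t}$ identically. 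Since each $v_t$ is then a linear combination of independent spherically symmetric Gaussians, spherical symmetry of the marginals is immediate --- $v_t\sim\Ncal\bigl(0,\sigma^2\sum_{j=0}^t(1-\eta)^{2(t-j)}I\bigr)$ --- and \thmref{thm:fbound_with_b} applies directly at that variance scale. Your hedging phrases (``plus lower-order $S$-oblivious corrections,'' ``nonlinear self-correlation correction'') suggest you did not realize that this term is \emph{exactly} linear; recognizing this removes the only genuine obstacle you anticipated and is what makes the single-neuron case tractable.

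One smaller arithmetic discrepancy: the derivative of the squared loss places a factor $2$ in front of $\nabla F_{\mathrm{lin}}$ in the PGD update, so the KL-chain argument yields $TV(P,\tilde P)\le \varepsilon\eta\sqrt T/\sigma$ (\lemref{lem:TV01_quad}), not $\varepsilon\eta\sqrt T/(2\sigma)$ as you wrote. Your claimed TV bound is inconsistent with your own gradient expression. The stated probability bound in the theorem, $\bigl(2+T+\tfrac{\eta\sqrt T}{\sigma}\bigr)\varepsilon$, is calibrated to that factor of $2$.
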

Note that the expression $\sum_{j=0}^T(1-\eta)^{2(T-j)}$ can easily be bounded by a constant, under the mild assumption that the step size $\eta$ is smaller than $2$. Aside from this point, and the fact that we consider a single neuron and the quadratic loss, the theorem is quite similar to  \thmref{thm:hard} from the previous section: Namely, as long as the PGD algorithm parameters (the step size $\eta$, the perturbation variance $\sigma^2$ and its inverse, and the number of iterations $T$) are less than exponential in $|S|$, the probability of achieving a non-trivial loss when trying to learn the fixed parity function $p_S$ decays exponentially with $|S|$. Moreover, note that for simplicity, we consider the ReLU neuron $x\mapsto [w^\top x+b]_+$, but a similar result can be readily proven for the negative ReLU neuron $x\mapsto -[w^\top x+b]_+$. Thus, for any moderately large even $|S|$, we see that PGD will fail in providing any meaningful guarantees, even though it is achievable with a suitable choice of parameters by \thmref{thm:single_neuron_positive}.

The proof of the theorem is rather similar in structure to the hardness result from the previous section. The key difference is in handling the squared loss. In more details, recall that our proof technique requires the marginal distribution of the parameters to remain (roughly) spherically symmetric. In particular, fixing $b=0$ for simplicity, our objective function takes the form
\begin{align}
F_S((w,0)) ~&=~ \E_{x\sim \{\pm 1\}^d}\left[\left([w^\top x]_+
-p_S(x)\right)^2\right]\notag\\
&=~ \E_{x\sim \{\pm 1\}^d}\left[\left([w^\top x]_+\right)^2\right]-2\E_{x\sim \{\pm 1\}^d}\left[p_S(x)[w^\top x]_+\right]+1\label{eq:Fw0form}~.
\end{align}
An elementary calculation reveals that the first expectation in the expression above equals $\frac{1}{2}\norm{\bw}^2$. Therefore,
\[
\frac{\partial}{\partial w}F_S((w,0))~=~ w - 2\E_{x\sim \{\pm 1\}^d}\left[p_S(x)\cdot \mathbf{1}_{\{w^\top x>0\}} x\right]~,
\]
so the PGD update at iteration $t$ takes the form
\begin{align*}
w_{t+1} ~&=~ w_t - \eta\left(w_t - 2\E_{x\sim \{\pm 1\}^d}\left[p_S(x)\cdot \mathbf{1}_{\{w^\top x>0\}} x\right]\right)-\xi_t\\
&=~(1-\eta)w_t + 2\eta\E_{x\sim \{\pm 1\}^d}\left[p_S(x)\cdot \mathbf{1}_{\{w^\top x>0\}} x\right]-\xi_t~.
\end{align*}
The second term is similar to the gradient of the expected linear loss (see \eqref{eq:Fderiv}), and we can similarly show that it tends to be exponentially small. If we neglect it, we get that 
\[
w_{t+1}~\approx~(1-\eta)w_t+\xi_t~.
\]
Recalling that $\xi_t$ is a spherically symmetric Gaussian random variable, we get the following: If the marginal distribution of $w_t$ (as a function of the initialization and previous PGD iterates) is (nearly) spherically symmetric, then $w_{t+1}$ will also have a (nearly) spherically symmetric distribution, since it is approximately the sum of two independent spherically symmetric random variables. Thus, the parameters $w_1,w_2,\ldots$ remain (nearly) spherically symmetric, which makes our proof technique applicable as discussed earlier. We note that for this to work, we crucially required the quadratic term in \eqref{eq:Fw0form} to involve a single neuron. 

Finally, we note that our remarks regarding \thmref{thm:hard} apply equally well to \thmref{thm:single}: For example, the proof can be readily modified to handle other spherically symmetric perturbations, and one can trade off between the probability bound and the bound on $F_S$. 

\section{Conclusion and Open Questions}
\label{sec:conclusion}

In this paper, we studied the hardness of learning a fixed parity function, using PGD and two different network/loss settings: The first is one-hidden-layer ReLU networks and linear loss, and the second is a single ReLU neuron network and squared loss. In both cases, we showed that even though there exists a choice of parameters leading to non-trivial performance, perturbed gradient descent will fail in finding it, and the resulting expected loss will remain essentially trivial. Concretely, the hardness of learning the parity functions (in terms of the required number of iterations, magnitude of improvement compared to the trivial loss value, etc.) scales exponentially with the parity size.

Moreover, in order to prove our theorems, we established a new result on the Fourier spectrum of linear threshold (or weighted majority) functions, which may be of independent interest: Namely, whereas the Fourier coefficients can be polynomially large for the plain majority function, they are exponentially small for weighted majorities (in an average-case-sense). 

Our work leaves open several questions and directions for future research. The first obvious question is whether our results can be extended to other network architectures, other loss functions, and other gradient-based methods (at least with standard initializations, which are oblivious to the target function). A first step might be to prove a hardness result for one-hidden-layer ReLU neural networks and the squared loss, which is already beyond the reach of our current proofs. 
Moreover, it would be interesting to extend our results from PGD to stochastic gradient descent (SGD), where one utilizes gradients with respect to specific examples $(x,p_S(x))$ sampled at random, as commonly done in practice. Equivalently, SGD can be viewed as PGD with a certain parameter-dependent, non-spherically symmetric random perturbation (hence our proof techniques do not apply). Due to the results of \citet{abbe2020universality,abbe2021power}, which show that sufficiently contrived network architectures learn parities with SGD, we know that some restrictions on either the gradient method or the network architecture must be posed. However, we conjecture that learning parities with SGD or PGD is hard for an extremely broad class of architectures. Identifying this class is an open problem.  

Another, related open question is whether our bound on the Fourier coefficients of linear threshold functions (\thmref{thm:f0bound}) can be extended beyond spherically-symmetric distributions of $w$. In particular, we conjecture that one can formulate deterministic conditions on $w$, which will imply that the Fourier coefficients of $x\mapsto \mathbf{1}_{\{w^\top x>0\}}$ are exponentially small in $|S|$. For example, these conditions might apply whenever $w$ is not close to certain vectors with a special structure (such as the all-ones vector, corresponding to the majority function). With such an improved result, one could then handle optimization dynamics where the parameters do not have a (nearly) spherically symmetric distribution, and thus possibly analyze more general architectures and methods.

\bibliographystyle{plainnat}
\bibliography{mainbib}

\appendix
\section{Proofs from \secref{sec:layer}}
\subsection{Proof of \thmref{thm:weights}}

    Recall that $\mathbf{1}_S\in \{0,1\}^d$ refers to the vector with entry $1$ at every coordinate in $S$, and $0$ otherwise. In order to express $p_S$, we use the following weights:
    
    \begin{align*}
    (w_j,b_j)_{j=0}^{|S|} &= (-\mathbf{1}_S,|S|+1+2j) \\
    (u_j)_{j=0}^{|S|} &= \begin{cases}
        1 & j=0 \\
        4j(-1)^j & \text{otherwise}~.
    \end{cases}
    \end{align*}  
    This yields the network
    \[
    N_\theta (x)=[-\mathbf{1}_S^\top x+|S|+1]_++\sum_{j=1}^{|S|}4j(-1)^j[-\mathbf{1}_S^\top x+|S|+1-2j]_+~,
    \]
    with the network parameters satisfying the theorem requirements: 
    \begin{align*}
        n &= |S|+1 \\
        ||\theta|| &= \sqrt{\sum_{j=0}^{|S|}\left(||\mathbf{1}_S||+(|S|+1+2j)^2+(4j)^2\right)+1}\leq \sqrt{\sum_{j=1}^{|S|}\left(||S|+10|S|^2+17|S|^2\right)}\leq 6|S|^\frac{3}{2}~.
    \end{align*}
        Note that for any $x$ that satisfies $\mathbf{1}_S^\top x=|S|$ (that is, a vector with $1$ entries in all coordinates in $S$, hence $p_S(x)=1$) we have:
    \begin{equation}
    \label{1_parity}
      N_\theta(x)=[-|S|+|S|+1]_+=1=p_S(x) ~. 
    \end{equation}

    As for the rest of the vectors, for every $x$ that satisfies $\mathbf{1}_S^\top x<|S|$ there exists $i_0(x)=\text{min}\{i\in S:x_i=-1\}$ and we may define the new vector $\hat{x}=x+2e_{i_0}\in\{\pm1\}^d$ where $e_{i_0}$ is the $i_0$-th standard basis vector: In words, $\hat{x}$ is the vector resulting when changing the first negative coordinate of $x$ in $S$ from $-1$ to $+1$. Note that if we perform this change iteratively, the finite sequence $(x,\hat{x},\hat{\hat{x}},....)$ will always end in a vector $y$ that satisfies $\mathbf{1}_S^\top y=|S|$ and $p_S(y)=1$. The corresponding network outputs can be written as $(N_\theta(x),N_\theta(\hat{x}),N_\theta(\hat{\hat{x}}),...,1)$. The proof idea is now to argue that this sequence alternates between $1$ and $-1$, which (by induction, starting from the last entry which equals $1=p_S(y)$) implies that it equals the sequence $(p_s(x),p_s(\hat{x}),p_S(\hat{\hat{x}}),\ldots,1)$. Thus, $N_{\theta}(x)=p_S(x)$. This argument applies for any $x$, establishing that $N_{\theta}$ expresses $p_S$ as required.
    
    A bit more formally, for each $x\in\{\pm 1\}^d$ we define $k_x:=
    \left|\{i\in S:x_i=-1\}\right|$ and we turn to prove that
    \begin{align*}
        N_\theta(x)-N_\theta(\hat{x})=2(-1)^{k_x}~.
    \end{align*}
    This will eventually lead (by induction) to the sequence equality
    \begin{align*}
        (N_\theta(x),N_\theta(\hat{x}),N_\theta(\hat{\hat{x}}),...,1)=(p_S(x),...,1,-1,1)~.
    \end{align*}
    It is easy to see that $p_S(x)=(-1)^{k_x}$ and that $\mathbf{1}_S^\top x=|S|-2k_x$. Additionally, using the notation $\hat{x}$ presented above, for each $x$ that satisfies $\mathbf{1}_S^\top x<|S|$ we have $k_{\hat{x}}=k_x-1$ and hence $\mathbf{1}_S^\top \hat{x}=|S|-2k_x+2$. As a result,
    \begin{align*}
        N_\theta (\hat{x}) &= [-\mathbf{1}_S^\top \hat{x}+|S|+1]_++\sum_{j=1}^{|S|}4j(-1)^j[-\mathbf{1}_S^\top \hat{x}+|S|+1-2j]_+\\
            &= [2k_{\hat{x}}+1]_++\sum_{j=1}^{k_{\hat{x}}}4j(-1)^j[2k_{\hat{x}}+1-2j]_+\\
        N_\theta (x) &=  [2k_{\hat{x}}+3]_++\sum_{j=1}^{k_{\hat{x}}+1}4j(-1)^j[2k_{\hat{x}}+3-2j]_+~.
    \end{align*}
    Combining the two equations above, we can calculate the difference between the two outputs:
    \begin{align*}
        N_\theta (x)-N_\theta (\hat{x}) &= 2
         +\sum_{j=1}^{k_{\hat{x}}}4j(-1)^j \cdot 2+4(k_{\hat{x}}+1)(-1)^{k_{\hat{x}}+1}\cdot 1~.
    \end{align*}
    Recalling that $\sum_{j=1}^{k_{\hat{x}}}j(-1)^j=\lceil\frac{k_{\hat{x}}}{2}\rceil(-1)^{k_{\hat{x}}}$, the above equals
    \begin{align}
         &= 2
         +8\left\lceil\frac{k_{\hat{x}}}{2}\right\rceil(-1)^{k_{\hat{x}}}+4(k_{\hat{x}}+1)(-1)^{k_{\hat{x}}+1}=\nonumber\\
         &= 2+
         \begin{cases}
             (-1)^{k_{\hat{x}}}\left[4k_{\hat{x}}+4-4(k_{\hat{x}}+1)\right] & \text{$k_{\hat{x}}$ is odd}\nonumber \\
             (-1)^{k_{\hat{x}}} \left[4k_{\hat{x}}-4(k_{\hat{x}}+1)\right] & \text{$k_{\hat{x}}$ is even} 
         \end{cases} \\
         \label{parity_diff}
          &=  \begin{cases}
             2 & \text{$k_{\hat{x}}$ is odd} \\
             -2 & \text{$k_{\hat{x}}$ is even} 
         \end{cases}
         =2(-1)^{k_{\hat{x}}+1}=2(-1)^{k_x}~.
    \end{align}

    Now we can complete the proof by showing that $N_\theta (x)=(-1)^{k_x}=p_S(x)$ via induction on $k_x$. We have already seen in \eqref{1_parity} that for $k_x=0$ (i.e. $\mathbf{1}_S^\top x=|S|$) we have $N_\theta(x)=p_S(x)=1$. Continuing the induction, fix an arbitrary $x\in \{\pm 1\}^d$ with $k_x\geq 1$ and assume that for all $x'\in \{\pm 1\}^d$ with $k_{x'}<k_x$ we have $N_\theta(x')=(-1)^{k_{x'}}=p_S(x')$. In particular the assumption holds for $\hat{x}$ since $k_{\hat{x}}=k_x-1$.  Plugging that into \eqref{parity_diff}, we get:
    \begin{align*}
        N_\theta(x)-(-1)^{k_{\hat{x}}}=N_\theta(x)-N_\theta(\hat{x})=2(-1)^{k_{\hat{x}}+1}\nonumber\\
        \Rightarrow~~ N_\theta(x)=(-1)^{k_{\hat{x}}+1}(2-1)=(-1)^{k_x}=p_S(x)~. 
    \end{align*}
    Having established that $N_{\theta}(x)=p_S(x)$ for all $x$, the rest of
    \thmref{thm:weights} immediately follows:
     $$F_S(\theta) := -\E _{x\in\{\pm 1\}^d}\left[N_\theta(x)\cdot p_S(x)\right] = -\E _{x\in\{\pm 1\}^d}\left[p_S(x)^2\right]=-1~.$$

\subsection{Proofs of lemmas used in the proof of \thmref{thm:hard}}
\subsubsection{Proof of \lemref{lem:TV01}}

    Using Pinsker's inequality, we have:
    \begin{align*}
        &TV(P,\tilde{P}) \leq \sqrt{\frac{1}{2} KL(P||\tilde{P})}\\
        &=\sqrt{\frac{1}{2}\sum_{t=1}^T\E_{P\left({\{\theta_i\}_{i=0}^{t-1}}\right)}KL\left(P({\theta_t})|\{\theta_i\}_{i=0}^{t-1}||\tilde{P}({\theta_t})|\{\theta_i\}_{i=0}^{t-1}\right)+\frac{1}{2}KL\left(P({\theta_0})||\tilde{P}({\theta_0})\right)}~,
    \end{align*}
    where KL is the the Kullback-Leibler divergence, and we used the KL-divergence chain rule. 
    Under both distributions, we have $\theta_0\sim\mathcal{N}(0,\sigma^2I)$ and thus $KL(P({\theta_0})||\tilde{P}({\theta_0}))=0$. Also, conditioned on $\{\theta_i\}_{i=0}^{t-1}$, $\theta_t$ has a Gaussian distribution with mean $\theta_{t-1}-\eta\nabla F_S(\theta_{t-1})$ and variance $\sigma^2$ under $P$, and a Gaussian distribution with mean $\theta_{t-1}-\eta\left[\nabla F_S(\theta_{t-1})\right]_\varepsilon$ and variance $\sigma^2$ under $\tilde{P}$. Thus, using a standard formula for the KL divergence between two random variables, it follows that the displayed equation above equals:
    \begin{align*}
        &\sqrt{\frac{1}{2}\sum_{t=1}^T\E_{P\left({\{\theta_i\}_{i=0}^{t-1}}\right)}KL\left(\theta_{t-1}+\Delta_{t-1}|\{\theta_i\}_{i=0}^{t-1}||\theta_{t-1}+\tilde{\Delta}_{t-1}|\{\theta_i\}_{i=0}^{t-1}\right)} \\
        &=\sqrt{\frac{1}{2}\sum_{t=1}^T\E_{P\left({\{\theta_i\}_{i=0}^{t-1}}\right)}KL\left(\theta_{t-1}-\eta\nabla F_S(\theta_{t-1}) +\xi_{t-1}|\{\theta_i\}_{i=0}^{t-1}||\theta_{t-1}-\eta[\nabla F_S(\theta_{t-1})]_\varepsilon +\xi_{t-1}|\{\theta_i\}_{i=0}^{t-1}\right)}
    \end{align*}
Since both $\theta_{t-1}$ and $\eta\nabla F_S(
\theta_{t-1})$ are fixed under the conditioning of $\{\theta_i\}_{i=0}^{t-1}$, our only random element in the expression above is $\xi_{t-1}$. Furthermore,  since $\xi_{t-1}\sim\mathcal{N}(0,\sigma^2I)$, we get that
\begin{align*}
    \theta_{t-1}-\eta\nabla F_S(\theta_{t-1}) +\xi_{t-1}|\{\theta_i\}_{i=0}^{t-1}&~~\sim~~\mathcal{N}(\theta_{t-1}-\eta\nabla F_S(\theta_{t-1}),\sigma^2I)\\
    \theta_{t-1}-\eta\left[\nabla F_S(\theta_{t-1})\right]_\varepsilon +\xi_{t-1}|\{\theta_i\}_{i=0}^{t-1}&~~\sim~~\mathcal{N}(\theta_{t-1}-\eta\left[\nabla F_S(\theta_{t-1})\right]_\varepsilon,\sigma^2I).
\end{align*}
     The KL divergence of two Gaussian variables with means $\mu_1,\mu_2$ and covariance matrix $\sigma^2I$ is well-known to equal $\frac{\norm{\mu_1-\mu_2}^2}{2\sigma^2}$. Plugging this into the displayed equation above, we get that it equals
     \begin{align*}
         &\sqrt{\frac{1}{2}\sum_{t=1}^T\E_{P\left(\{\theta_i\}_{i=0}^{t-1}\right)}\left[\frac{\norm{\theta_{t-1}-\eta\nabla F_S(\theta_{t-1})-\left(\theta_{t-1}-\eta[\nabla F_S(\theta_{t-1})]_\varepsilon\right)}^2}{2\sigma^2}\right]}\\
         &=\sqrt{\frac{1}{2}\sum_{t=1}^T\E_{P\left(\{\theta_i\}_{i=0}^{t-1}\right)}\left[\frac{\norm{
         \eta\left(\nabla F_S(\theta_{t-1})-[\nabla F_S(\theta_{t-1})]_\varepsilon\right)}^2}{2\sigma^2}\right]}\leq\sqrt{\frac{1}{2}\sum_{t=1}^T\E_{P\left({\{\theta_i\}_{i=0}^{t-1}}\right)}\frac{\eta^2\varepsilon^2}{2\sigma^2}}\\
         &=\frac{\eta\varepsilon\sqrt{T}}{2\sigma}~.
     \end{align*}

\subsubsection{Proof of \lemref{lem:gradsmall}} \label{sec:gradsmallproof}

As $S$ is fixed, we denote $F_S$ as $F$ for simplicity. Note that the partial derivatives satisfy 
\begin{align*}
\partial_{u_{j}}F(\theta) &=-\mathbb{E}_{x\in\left\{ \pm1\right\} ^{d}}\left[\left[w_{j}^{T}x+b_{j}\right]_{+}\prod_{i\in S}x_{i}\right]\\
\partial_{w_{j}}F(\theta) &=u_{j}\mathbb{E}_{x\in\left\{ \pm1\right\} ^{d}}\left[-1_{\left\{ w_{j}^{T}x+b_{j}>0\right\} }\prod_{i\in S}x_{i}\right]\cdot x \\
\partial_{b_{j}}F (\theta)&=u_{j}\mathbb{E}_{x\in\left\{ \pm1\right\} ^{d}}\left[-1_{\left\{ w_{j}^{T}x+b_{j}>0\right\} }\prod_{i\in S}x_{i}\right]~.
\end{align*}
Since $\norm{ \nabla F(\theta)}=\sqrt{\sum_{j=1}^n\left(\left| \partial_{u_j}F(\theta)\right|^2+\norm{\partial_{w_j} F(\theta)}^2+\left|\partial_{b_j} F(\theta)\right|^2\right)}$, all we have to do is to bound the expected squared norm of the three expressions above using \thmref{thm:fbound_with_b} (stated and proved in the next subsection):
\begin{align*}
    \E_\theta\left(\partial_{b_j}F\right)^2 &= \E_{u_j,w_j,b_j}\left[u_j^2\E_{x \in \{\pm 1\}^d}^2\left[-1_{\{w_j^\top x+b_j>0\}}\prod_{i \in S}x_i\right]\right] \\
    &= \E_{u_j\sim\mathcal{N}(0.\sigma^2)}\left[u_j^2\right]\cdot  \E_{w_j,b_j\sim\mathcal{N}(0,\sigma^2 I_{d+1})} \left[\E_{x \in \{\pm 1\}^d}^2\left[1_{\{w_j^\top x+b_j>0\}}\prod_{i \in S}x_i\right]\right] \\
    &= \sigma^2\cdot\E_{w_j,b_j\sim\mathcal{N}(0,\sigma^2 I_{d+1})}f_S^2(w_j,b_j)<8\sigma^2e^{-\frac{|S|}{4}}~,
\end{align*}
where the first transition is due to the independence of $u_j$ from $w_j,b_j$ and the last transition is by \thmref{thm:fbound_with_b}.

Turning to the next partial derivative of $F$, we have
\begin{align}
    \E_\theta\norm{\partial_{w_j}F}^2 &= \E_{u_j,w_j,b_j}\left[\sum_{k=1}^du_j^2\E_{x \in \{\pm 1\}^d}^2\left[-1_{\{w_j^\top x+b_j>0\}}x_k\prod_{i \in S}x_i\right]\right] \nonumber\\
    &= \sum_{k=1}^d\E_{u_j\sim\mathcal{N}(0,\sigma^2)}\left[u_j^2\right]\cdot\E_{w_j,b_j\sim\mathcal{N}(0,\sigma^2)}\left[\E_{x \in \{\pm 1\}^d}^2\left[1_{\{w_j^\top x+b_j>0\}}x_k\prod_{i \in S}x_i\right]\right]\nonumber\\
    &= \sum_{k=1}^d\sigma^2\E_{w_j,b_j}\left[\begin{cases}
        \E_x^2[1_{\{w_j^\top x+b_j\}}\prod_{i\in S\backslash\{k\}}x_i] & k\in S\\
        \E_x^2[1_{\{w_j^\top x+b_j\}}\prod_{i\in S\cup\{k\}}x_i] & k \notin S
    \end{cases} \right] \nonumber\\     &=\sum_{k=1}^d\sigma^2\begin{cases}
        \E_{w_j,b_j}f_{S\backslash\{k\}}^2(w_j,b_j) & k\in S\nonumber\\
        \E_{w_j,b_j}f_{S\cup\{k\}}^2(w_j,b_j) & k \notin S
    \end{cases} \nonumber\\
    &< \sum_{k=1}^d\sigma^2\begin{cases}
        \frac{6e^{2.3}}{\sqrt{2}\pi^\frac{3}{2}}e^{-\frac{|S|-1}{4}} & k\in S\\
        \frac{6e^{2.3}}{\sqrt{2}\pi^\frac{3}{2}}e^{-\frac{|S|+1}{4}} & k \notin S
    \end{cases}\nonumber \\
    \label{eq:partial_derivative_w_norm_bound}&\leq d\sigma^2\frac{6e^{2.55}}{\sqrt{2}\pi^\frac{3}{2}}e^{-\frac{|S|}{4}}<10d\sigma^2e^{-\frac{|S|}{4}}~,
\end{align}
where again, the first transition is due to the independence of $u_j$ from $w_j,b_j$, and the transition in the fifth line is due to \thmref{thm:fbound_with_b}.

Turning to the last partial derivative of $F$, we have
\begin{align}
    \E_\theta\left|\partial_{u_j}F\right| &= \E_{u_j,w_j,b_j}\left|\E_x[w_j^\top x+b_j]_+\prod_{i \in S}x_i\right| \leq \E_{u_j,w_j,b_j}\left|\E_x\left[(w_j^\top x+b_j)1_{\{w_j^\top x+b_j>0\}}\prod_{i \in S}x_i\right]\right|\nonumber\\
    &\leq \E_{w_j,b_j}\left|\E_x\left[b_j1_{\{w_j^\top x+b_j>0\}}\prod_{i \in S}x_i\right]\right|+\E_{w_j,b_j}\left|w_j^\top \E_x\left[x1_{\{w_j^\top x+b_j>0\}}\prod_{i \in S}x_i\right]\right|\nonumber\\
    &\leq \sqrt{\E_{w_j,b_j}b_j^2\cdot\E_{w_j,b_j}\E_x^2\left[1_{\{w_j^\top x+b_j>0\}}\prod_{i \in S}x_i\right]}+\E_{w_j,b_j}\left[\norm{w_j}\left|\left|\E_x\left[x1_{\{w_j^\top x+b_j>0\}}\prod_{i \in S}x_i\right]\right|\right|\right]\nonumber\\
    &\leq \sqrt{\sigma^2\cdot\E_{w_j,b_j}f_S^2(w_j,b_j)}+\sqrt{\E_{w_j,b_j}\norm{w_j}^2\cdot\E_{w_j,b_j}\left|\left|\E_x\left[x1_{\{w_j^\top x+b_j>0\}}\prod_{i \in S}x_i\right]\right|\right|^2}\nonumber\\
    &\leq \sqrt{\sigma^2 \cdot 4e^{-\frac{|S|}{4}}}+\sqrt{d\sigma^2\cdot\sum_{k=1}^d\begin{cases}
    \E_{w_j,b_j}\E_x^2 \left[1_{\{w_j^\top x+b_j>0\}}\prod_{i \in S\cup\{k\}}x_i\right] & k \notin S\\
    \E_{w_j,b_j}\E_x^2 \left[1_{\{w_j^\top x+b_j>0\}}\prod_{i \in S\backslash\{k\}}x_i\right] & k\in S
    \end{cases}}\nonumber\\
    &=2\sigma e^{-\frac{|S|}{8}}+\sqrt{d\sigma^2\cdot\sum_{k=1}^d\begin{cases}
    \E_{w_j,b_j}f_{S\cup\{k\}}^2(w_j,b_j) & k \notin S\\
    \E_{w_j,b_j}f_{S\backslash\{k\}}^2(w_j,b_j) & k\in S
    \end{cases}}\nonumber\\
    &\leq 2\sigma e^{-\frac{|S|}{8}}+\sqrt{d\sigma^2\cdot\sum_{k=1}^d \frac{6e^{2.55}}{\sqrt{2}\pi^\frac{3}{2}}e^{-\frac{|S|}{4}}}\leq 2\sigma e^{-\frac{|S|}{8}}+\sqrt{10d^2\sigma^2 e^{-\frac{|S|}{4}}} \leq 5d\sigma e^{-\frac{|S|}{8}}\leq e^{-\frac{|S|}{9}}~,  
\end{align}
where the transitions in both the fifth line and the last line are due to \thmref{thm:fbound_with_b}. Combining these three bounds on the partial derivatives of $F$, we get
\begin{align*}
    \E_\theta&\left[\norm{\nabla F}\right] \leq \E_\theta\left[\sum_{j=1}^n\sqrt{\left|\partial_{u_j}F\right|^2+\norm{\partial_{w_j}F}^2+\left|\partial_{b_j}F\right|^2} \right]\\
    &\leq\E_\theta\left[\sum_{j=1}^n\left(\sqrt{\left|\partial_{u_j}F\right|^2}+\sqrt{\norm{\partial_{w_j}F}^2+\left|\partial_{b_j}F\right|^2}\right) \right] =\sum_{j=1}^n\left(\E_\theta\left|\partial_{u_j}F\right|+\E_\theta\sqrt{\norm{\partial_{w_j}F}^2+\left|\partial_{b_j}F\right|^2}\right) \\
    &\leq \sum_{j=1}^n\left(\E_\theta\left|\partial_{u_j}F\right|+\sqrt{\E_\theta\norm{\partial_{w_j}F}^2+\E_\theta\left|\partial_{b_j}F\right|^2}\right)  \leq \sum_{j=1}^n\left(5d\sigma e^{-\frac{|S|}{8}}+\sqrt{10d\sigma^2e^{-\frac{|S|}{4}}+8\sigma^2e^{-\frac{|S|}{4}}}\right) \\
    &\leq \sum_{j=1}^n\left(5d\sigma e^{-\frac{|S|}{8}}+5\sqrt{d}\sigma e^{-\frac{|S|}{8}}\right)\leq 6nd\sigma e^{-\frac{|S|}{8}}\leq e^{-\frac{|S|}{9}}~.
\end{align*}
In the above, the first and second transitions use the fact that $\sqrt{a+b}\leq\sqrt{a}+\sqrt{b}$ for non-negative numbers $a,b$, in the third line we used Jensen's inequality with regard to the square root function, and the last transition used the lemma requirements for $|S|\geq72 \ln(6nd\sigma)$.
By Markov's inequality, we get
\begin{align*}
     \text{Pr}_\theta \left(\norm{\nabla F_S (\theta)}\geq e^{-\frac{|S|}{18}}\right)\leq\frac{e^{-\frac{|S|}{9}}}{e^{-\frac{|S|}{18}}}=e^{-\frac{|S|}{18}}~.\\
\end{align*}

\subsubsection{Proof of \lemref{lem:randomF_small}}

    Let us simplify the expression of $F_S(\theta)$:
    \begin{align*}
        F_S(\theta)&=-\E_{x\{\pm1\}^d}\left[\prod_{i\in S}x_i\sum_{j=1}^nu_j[w_j^\top x+b_j]_+\right]=-\sum_{j=1}^nu_j\E_{x\{\pm1\}^d}\left[\prod_{i\in S}x_i[w_j^\top x+b_j]_+\right]\\
        &=\sum_{j=1}^nu_j\partial_{u_j} F_S(\theta)~.
    \end{align*}
According to \lemref{lem:gradsmall} $\partial_{u_j}F_S$ satisfies $\E_\theta\left|\partial_{u_j}F\right| \leq 5d\sigma\sqrt{T+1}e^{-\frac{|S|}{8}} $. In addition, for all $j\leq n$, $u_j$ and $\partial_{u_j}F_S(\theta_T)=-\E_{x\in\{\pm 1\}^d}\left[\prod_{i\in S}x_i[w_j^\top x+b_j]_+\right]$ are actually independent and thus
\begin{align*}
    &\E_{\theta}\left|F_S(\theta)\right| \leq \sum_{j=1}^n\E_{\theta}\left[\left|u_j\right|\left|\partial_{u_j}F_S(\theta)\right|\right]\\ 
    &\stackrel{(1)}{=}\sum_{j=1}^n\E_{\theta}\left[\left|u_j\right|\right]\cdot\E_{\theta}\left[\left|\partial_{u_j}F_S(\theta)\right|\right]\\
    &\stackrel{(2)}{\leq} n\sigma\sqrt{\frac{2(T+1)}{\pi}}5d\sigma\sqrt{T+1} e^{-\frac{|S|}{8}}\\
    &\stackrel{(3)}{\leq} 5nd\sigma^2(T+1)e^{-\frac{|S|}{8}}\stackrel{(3)}{\leq}e^{-\frac{|S|}{9}}~,
\end{align*}
where $(1)$ is due to the independence of $u_j$ and $\partial_{u_j}F_S(\theta_T)$, $(2)$ is derived from substituting the bound for $\E_{\theta}\left|\partial_{u_j} F_S(\theta)\right|$ using \lemref{lem:gradsmall}, and $(3)$ follows from $|S|\geq72\ln\left(5nd(T+1)\sigma^2\right) $ as assumed in the lemma statement.
We complete our proof using Markov's inequality:
\begin{equation*}
    Pr\left(\left|F_S(\theta)\right|\geq e^{-\frac{|S|}{18}} \right)\leq \frac{e^{\frac{-|S|}{9}}}{e^{\frac{-|S|}{18}}}=e^{\frac{-|S|}{18}}~.
\end{equation*}

\subsection{The Fourier Coefficients of Linear Threshold Functions}

\subsubsection{Proof of \thmref{thm:f0bound}}

By Fubini's theorem, we can switch the order of the expectations
and get:
\begin{align*}
\mathbb{E}_{w\sim\mathcal{N}\left(0,\sigma^2\right)}f_S^{2}(w,0) &=    \mathbb{E}_{w\sim\mathcal{N}\left(0,\sigma^2\right)}\left[\mathbb{E}_{x\in\left\{ \pm1\right\} ^{d}}\left(1_{\left\{ w^{T}x>0\right\} }\prod_{i \in S}x_{i}\right)\cdot\mathbb{E}_{y\in\left\{ \pm1\right\} ^{d}}\left(1_{\left\{ w^{T}y>0\right\} }\prod_{i \in S}y_{i}\right)\right] \\
&= \mathbb{E}_{x,y\in\left\{ \pm1\right\} ^{d}}\left[\mathbb{E}_{w\sim\mathcal{N}\left(0,\sigma^2\right)}\left(1_{\left\{ w^{T}y>0\right\} }\cdot1_{\left\{ w^{T}x>0\right\} }\prod_{i \in S}x_{i}y_{i}\right)\right]\\
&= \mathbb{E}_{x,y\in\left\{ \pm1\right\} ^{d}}\prod_{i \in S}x_{i}y_{i}\left[\mathbb{E}_{w\sim\mathcal{N}\left(0,\sigma^2\right)}\left(1_{\left\{ \frac{w^{T}}{\left\Vert w\right\Vert }\cdot\frac{y}{\sqrt{d}}>0\right\} }\cdot1_{\left\{ \frac{w^{T}}{\left\Vert w\right\Vert }\cdot\frac{x}{\sqrt{d}}>0\right\} }\right)\right]~.
\end{align*}
Letting $H_{\frac{x}{\sqrt{d}}}:= \{w\in S^{d-1}:w^\top x> 0 \}$ denote
the hemisphere centered around $\frac{x}{\sqrt{d}}$
on the unit sphere $S^{d-1}$, this equals
\begin{align*}
    &\mathbb{E}_{x,y\in\left\{ \pm1\right\} ^{d}}\left[\prod_{i \in S}x_{i}y_{i}\mathbb{E}_{w\sim\mathcal{N}\left(0,\sigma^2\right)}\left(1_{\left\{ \frac{w}{\left\Vert w\right\Vert }\in H_{\frac{x}{\sqrt{d}}}\wedge\frac{w}{\left\Vert w\right\Vert }\in H_{\frac{y}{\sqrt{d}}}\right\} }\right)\right] \\
    &= \mathbb{E}_{x,y\in\left\{ \pm1\right\} ^{d}}\left[\prod_{i \in S}x_{i}y_{i}\mathbb{P}_{ w \sim\mathcal{N}\left(0,\sigma^2\right)}\left(\frac{w}{\left\Vert w\right\Vert }\in H_{\frac{x}{\sqrt{d}}}\cap H_{\frac{y}{\sqrt{d}}}\right)\right] \\
    &= \mathbb{E}_{x,y\in\left\{ \pm1\right\} ^{d}}\left[\prod_{i \in S}x_{i}y_{i}\frac{Vol\left(H_{x}\cap H_{y}\right)}{Vol\left(S^{d-1}\right)}\right]~.
\end{align*}

Using \lemref{lem:hemispheres}, the above equals:

\begin{align*}
&= \mathbb{E}_{x,y\in\left\{ \pm1\right\} ^{d}}\left[\prod_{i \in S}x_{i}y_{i}\left(\frac{\pi-\arccos\left(\frac{x^{T}y}{d}\right)}{2\pi}\right)\right] \\
&=\frac{1}{2} \underbrace{\mathbb{E}_{x,y\in\left\{ \pm1\right\} ^{d}}\left[\prod_{i \in S}x_{i}y_{i}\right]}_{=0}-\frac{1}{2\pi}\mathbb{E}_{x,y\in\left\{ \pm1\right\} ^{d}}\left[\prod_{i \in S}x_{i}y_{i}\arccos\left(\frac{x^\top y}{d}\right)\right] \\
&= -\frac{1}{2\pi}\mathbb{E}_{x,y\in\left\{ \pm1\right\} ^{d}}\left[\prod_{i \in S}x_{i}y_{i}\arccos\left(\frac{x^\top y}{d}\right)\right]~.
\end{align*}

$\arccos$ is an analytic function, whose Taylor expansion is given by 
\[
\arccos\left(z\right)=\frac{\pi}{2}-\sum_{j=1}^{\infty}\frac{\left(2j\right)!}{2^{2j}\left(j!\right)^2}\frac{z^{2j+1}}{2j+1}=\frac{\pi}{2}-\sum_{j=1}^{\infty}\alpha_{j}z^{j}~~
\text{where}~~ \alpha_{j}=\begin{cases}
    \frac{\left(j-1\right)!}{2^{j-1}\left[\left(\frac{j-1}{2}\right)!\right]^2}\frac{1}{j} & j \text{ is odd} \\
    0 & j \text{ is even}~.
\end{cases}
\]
Inserting this into the previous expression, we get
\begin{align}
    -\frac{1}{4}\underbrace{\E_{x,y \in \{\pm 1\}^d}\left[\prod_{i \in S}x_iy_i\right]}_{=0}&+\frac{1}{2\pi}\sum_{j=1}^{\infty}\alpha_{j}\mathbb{E}_{x,y\in\left\{ \pm1\right\} ^{d}}\left[\prod_{i\in S}x_{i}y_{i}\left(\frac{x^{T}y}{d}\right)^{j}\right] \nonumber \\
    \label{eq:low_coef} =\frac{1}{2\pi}\sum_{j=1}^{|S|-1}\alpha_{j}\mathbb{E}_{x,y\in\left\{ \pm1\right\} ^{d}}&\left[\prod_{i \in S}x_{i}y_{i}\left(\frac{x^{T}y}{d}\right)^{j}\right]\\
    \label{eq:high_coef}
    &+ \frac{1}{2\pi}\sum_{j=|S|}^{\infty}\alpha_{j}\mathbb{E}_{x,y\in\left\{ \pm1\right\} ^{d}}\left[\prod_{i \in S}x_{i}y_{i}\left(\frac{x^{T}y}{d}\right)^{j}\right]~.
\end{align}

Let us deal with low-order terms in \eqref{eq:low_coef}  first:

\begin{align*}
   \sum_{j=1}^{|S|-1}\frac{\alpha_{j}}{d^{j}}\mathbb{E}_{x,y\in\left\{ \pm1\right\} ^{d}}\left[\prod_{i \in S} x_{i}y_{i}\left(\sum_{i=1}^{d}x_{i}y_{i}\right)^{j}\right] &= \sum_{j=1}^{|S|-1}\frac{\alpha_{j}}{d^{j}}\mathbb{E}_{x,y\in\left\{ \pm1\right\} ^{d}}\left[\prod_{i \in S}x_{i}y_{i}\sum_{r_{1},..,r_{j}=1}^{d}\prod_{k=1}^{j}x_{r_{k}}y_{r_{k}}\right]\\
    &= \sum_{j=1}^{|S|-1} \frac{\alpha_{j}}{d^{j}} \sum_{r_{1},..,r_{j}=1}^{d}\mathbb{E}_{x,y\in\left\{ \pm1\right\} ^{d}}\left[\prod_{i \in S}x_{i}y_{i}\cdot\prod_{k=1}^{j}x_{r_{k}}y_{r_{k}}\right]~.
\end{align*}

For each $j\leq |S|-1$ and each $r_{1},...,r_{j}\in\left\{ 1,...,d\right\} $
there is some $l\in S \backslash\left\{ r_{1},...,r_{j}\right\} $.
Using that and the independence of $x_i,y_i$, we are able to zero out the equation above, and get

\begin{equation}
   \label{eq:low_is_zero} \sum_{j=1}^{|S|-1}\sum_{r_{1},..,r_{j}=1}^{d}\frac{\alpha_{j}}{d^{j}}\left(\underbrace{\mathbb{E}_{x_{l},y_{l}\in\left\{ \pm1\right\} }\left(x_{l}y_{l}\right)}_{=0}\cdot\mathbb{E}_{x,y\in\left\{ \pm1\right\} ^{d-1}}\left[\prod_{i\in S\backslash\{l\}}x_{i}y_{i}\cdot\prod_{k=1}^{j}x_{r_{k}}y_{r_{k}}\right]\right)=0~.
\end{equation}

Thus, we remain only with the higher-order coefficients (in \eqref{eq:high_coef}). By Jensen's inequality they are at most

\begin{align}
    \frac{1}{2\pi}\sum_{j=|S|}^{\infty}\alpha_{j}\mathbb{E}_{x,y\in\left\{ \pm1\right\} ^{d}}\left[\prod_{i \in S}x_{i}y_{i}\left(\frac{x^{T}y}{d}\right)^{j}\right] &\leq \left|\frac{1}{2\pi}\sum_{j=|S|}^{\infty}\alpha_{j}\mathbb{E}_{x,y\in\left\{ \pm1\right\} ^{d}}\left[\prod_{i \in S}x_{i}y_{i}\left(\frac{x^{T}y}{d}\right)^{j}\right]\right| \nonumber \\
    \label{eq:before_alpha_substitute}
    \leq \frac{1}{2\pi}\sum_{j=|S|}^{\infty}\alpha_{j}\mathbb{E}_{x,y\in\left\{ \pm1\right\} ^{d}}\left|\prod_{i\in S}x_{i}y_{i}\left(\frac{x^{T}y}{d}\right)^{j}\right| &=\mathbb{E}_{y\in\left\{ \pm1\right\} ^{d}}\frac{1}{2\pi}\sum_{j=|S|}^{\infty}\alpha_{j}\mathbb{E}_{x\in\left\{ \pm1\right\} ^{d}}\left[\left(\frac{\left|x^{T}y\right|}{d}\right)^{j}\right]~.
\end{align}

 By a standard Stirling approximation bound, it holds for all $j> 1$ that $\sqrt{2\pi j}\left(\frac{j}{e}\right)^je^{\frac{1}{12j}-\frac{1}{360j^3}}<j!<\sqrt{2\pi j}\left(\frac{j}{e}\right)^je^{\frac{1}{12j}}$. Therefore, we can upper bound $\alpha_j$ as follows:

\begin{align*}
    \alpha_j &\leq \frac{\left(j-1\right)!}{2^{j-1}\left[\left(\frac{j-1}{2}\right)!\right]^2}\cdot\frac{1}{j} ~<~ \frac{1}{j}\cdot\frac{2\cdot2^{j-1}(j-1)^{j-1}e^\frac{1}{12(j-1)}}{2^{j-1}\sqrt{2\pi (j-1)}(j-1)^{j-1}e^{\frac{4}{12(j-1)}-\frac{16}{360(j-1)^3}}} \\
    &= \frac{1}{j}\cdot\frac{\sqrt{2}e^{\frac{2}{45(j-1)^3}-\frac{3}{12(j-1)}}}{\sqrt{\pi (j-1)}}<\sqrt{\frac{2}{\pi}}\cdot\frac{2e}{j^\frac{3}{2}}~.
\end{align*}

Using this result we can bound the sum of the $\arccos$ Taylor coefficients:
\begin{equation}
    \label{eq:alpha_sum}\sum_{j=|S|}^{\infty}\alpha_j\leq\sum_{j=|S|}^{\infty}\sqrt{\frac{2}{\pi}}\frac{2e}{j^\frac{3}{2}}\leq\sum_{j=1}^{\infty}\sqrt{\frac{2}{\pi}}\frac{2e}{j^\frac{3}{2}}\leq6e\sqrt{\frac{2}{\pi}}
\end{equation}

Let us fix some $\beta>1$ that will be determined later. For a fixed
$y$ we have that $x^{T}y$ is a sum of independent Rademacher random variables,
and hence by Hoeffding's inequality,
\begin{equation*}
\mathbb{P}\left(\left|x^{T}y\right|\geq\frac{d}{\beta}\,|y\right)\leq2e^{-\frac{2d^{2}}{\beta^{2}\sum_{i=1}^{d}2^{2}}}=2e^{-\frac{d}{2\beta^{2}}}~.
\end{equation*}
Using this inequality and \eqref{eq:alpha_sum}, we can upper bound \eqref{eq:before_alpha_substitute} as follows:

\begin{align*}
    &\mathbb{E}_{y\in\left\{ \pm1\right\} ^{d}}\frac{1}{2\pi}\sum_{j=|S|}^{\infty}\alpha_{j}\mathbb{E}_{x\in\left\{ \pm1\right\} ^{d}}\left[\left(\frac{\left|x^{T}y\right|}{d}\right)^{j}\right]=\\
    &=\mathbb{E}_{y\in\left\{ \pm1\right\} ^{d}}\frac{1}{2\pi}\sum_{j=|S|}^{\infty}\alpha_{j}
    \left( \mathbb{E}_{x\in\left\{ \pm1\right\} ^{d}|\left|x^{T}y\right|\geq\frac{d}{\beta}}\left[\left(\frac{\left|x^{T}y\right|}{d}\right)^{j}\right]\mathbb{P}\left(\left|x^{T}y\right|\geq\frac{d}{\beta}|y\right)\right.\\
    &\quad \quad \quad \quad \quad \left.+ \mathbb{E}_{x\in\left\{ \pm1\right\} ^{d}|\left|x^{T}y\right|<\frac{d}{\beta}}\left[\left(\frac{\left|x^{T}y\right|}{d}\right)^{j}\right]\mathbb{P}\left(\left|x^{T}y\right|<\frac{d}{\beta}|y\right)\right)\\
    &\leq\mathbb{E}_{y\in\left\{ \pm1\right\} ^{d}}\frac{1}{2\pi}\sum_{j=|S|}^{\infty}\alpha_{j}\left(1\cdot2e^{-\frac{d}{2\beta^{2}}}+\left(\frac{1}{\beta}\right)^{j}\cdot1\right)\leq\frac{3e}{\sqrt{2}\pi^\frac{3}{2}}\left(2e^{-\frac{d}{2\beta^{2}}}+\left(\frac{1}{\beta}\right)^{|S|}\right)~.
\end{align*}
Taking $\beta=1.3$,
we get:
\begin{align}
    \label{eq:expect_w_bound} & \mathbb{E}_{w\sim\mathcal{N}\left(0,\sigma^2\cdot I_{d}\right)}\left(f_S^{2}\right) \leq \frac{3e}{\sqrt{2}\pi^\frac{3}{2}}\left(2e^{-\frac{d}{2(1.3)^{2}}}+e^{-\ln(1.3)|S|}\right) \nonumber \\
    &< \frac{6e}{\sqrt{2}\pi^\frac{3}{2}}\left(e^{-\frac{|S|}{3.38}}+e^{-\frac{|S|}{4}}\right) < \frac{12e}{\sqrt{2}\pi^\frac{3}{2}}e^{-\frac{|S|}{4}}\nonumber~,
\end{align}
which can be upper bounded by $6e^{-\frac{|S|}{4}}$.

\subsubsection{The Fourier Coefficients of Biased Linear Threshold Functions}

As discussed in the main paper, we state and prove here an extension of \thmref{thm:f0bound} for the case of linear threshold functions with biases.
\begin{theorem}\label{thm:fbound_with_b}
Suppose that $d\geq 2$, and fix some $S\subseteq [d]$ such that $|S|\geq 2$. Then the function   $f_S(w,b):=\E_{x\sim\{\pm1\}^d}\left[p_S(x)\mathbf{1}_{\{w^\top x+b>0\}}\right]$ satisfies
\begin{equation*}
\mathbb{E}_{(w,b)\sim\mathcal{N}\left(0,\sigma^2\cdot I_{d+1}\right)}\left[f^{2}(w,b)\right]\leq \frac{6e^{2.3}}{\sqrt{2}\pi^\frac{3}{2}}e^{-\frac{|S|}{4}}<8e^{-\frac{|S|}{4}}~.
\end{equation*}
\end{theorem}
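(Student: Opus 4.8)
The plan is to imitate the proof of \thmref{thm:f0bound}, absorbing the bias into one extra coordinate. Write $\tilde{w}=(w,b)\in\reals^{d+1}$ and $\tilde{x}=(x,1)\in\reals^{d+1}$, so that $w^\top x+b=\tilde{w}^\top\tilde{x}$ and $f_S(w,b)=\E_{x}\bigl[p_S(x)\mathbf{1}_{\{\tilde{w}^\top\tilde{x}>0\}}\bigr]$. Squaring, introducing an independent copy $y$ of $x$, and swapping the order of expectations exactly as in the proof sketch of \thmref{thm:f0bound}, one reduces $\E_{(w,b)\sim\Ncal(0,\sigma^2 I_{d+1})}[f_S^2(w,b)]$ to $\E_{x,y}\bigl[p_S(x)p_S(y)\,\Pr_{\tilde{w}}(\tilde{w}^\top\tilde{x}>0\wedge\tilde{w}^\top\tilde{y}>0)\bigr]$. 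Since $\tilde{w}$ is spherically symmetric in $\reals^{d+1}$ and $\|\tilde{x}\|=\|\tilde{y}\|=\sqrt{d+1}$, the inner probability is the normalized measure of an intersection of two hemispheres on $S^{d}$, which by \lemref{lem:hemispheres} equals $\tfrac{1}{2\pi}\bigl(\pi-\arccos(\tfrac{x^\top y+1}{d+1})\bigr)$. The $\tfrac12$ part contributes $\tfrac12\,\E_{x,y}[\prod_{i\in S}x_iy_i]=0$, leaving $-\tfrac{1}{2\pi}\,\E_{x,y}\bigl[\prod_{i\in S}x_iy_i\arccos(\tfrac{x^\top y+1}{d+1})\bigr]$.

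Next I would expand $\arccos(z)=\tfrac{\pi}{2}-\sum_{j\ge1}\alpha_j z^j$ with $z=\tfrac{x^\top y+1}{d+1}$ (the constant $\tfrac{\pi}{2}$ again yields $0$), obtaining $\tfrac{1}{2\pi}\sum_{j\ge1}\alpha_j\,\E_{x,y}\bigl[\prod_{i\in S}x_iy_i(\tfrac{x^\top y+1}{d+1})^j\bigr]$; the interchange of sum and expectation is legitimate since $\alpha_j\ge0$ and $|z|\le1$. To see that the terms $1\le j\le|S|-1$ vanish, expand $(x^\top y+1)^j=\sum_{m=0}^{j}\binom{j}{m}(x^\top y)^m$ and $(x^\top y)^m=\sum_{r_1,\dots,r_m}\prod_k x_{r_k}y_{r_k}$: every resulting monomial involves at most $m\le j<|S|$ distinct coordinates, so after multiplying by $\prod_{i\in S}x_iy_i$ there is always an index $l\in S$ for which $x_l$ appears to the first power, which kills the expectation by independence — the same mechanism as in the unbiased case, now applied after the binomial expansion.

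For the remaining tail $j\ge|S|$, apply Jensen and the triangle inequality to bound the quantity by $\tfrac{1}{2\pi}\sum_{j\ge|S|}\alpha_j\,\E_{x,y}\bigl|\tfrac{x^\top y+1}{d+1}\bigr|^j$. Fixing $y$ and splitting on the event $\{|x^\top y|\ge d/\beta\}$, whose probability is at most $2e^{-d/(2\beta^2)}$ by Hoeffding, I would bound $\bigl|\tfrac{x^\top y+1}{d+1}\bigr|$ by $1$ on that event and by $\rho:=\tfrac{d/\beta+1}{d+1}<1$ on its complement, giving $\E_x|\cdot|^j\le 2e^{-d/(2\beta^2)}+\rho^j$. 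Using $\rho^j\le\rho^{|S|}$ and $\sum_{j\ge1}\alpha_j\le 6e\sqrt{2/\pi}$ (shown in the proof of \thmref{thm:f0bound}), this yields $\E_{(w,b)}[f_S^2]\le\tfrac{6e\sqrt{2/\pi}}{2\pi}\bigl(2e^{-d/(2\beta^2)}+\rho^{|S|}\bigr)$.

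The main difficulty is the extra ``$+1$'' in $\tfrac{x^\top y+1}{d+1}$: unlike \thmref{thm:f0bound}, where Hoeffding directly bounds $|x^\top y/d|$ by $1/\beta$, here the good event only gives the weaker ratio $\rho=\tfrac{d/\beta+1}{d+1}$, which can exceed $e^{-1/4}$ when $d$ is small. The remedy is to use $|S|\le d$: take $\beta=1.3$ and observe that $e^{1/4}/\beta<1$, so if $\rho e^{1/4}\le1$ then $\rho^{|S|}e^{|S|/4}\le1$, and otherwise $\rho^{|S|}e^{|S|/4}\le(\rho e^{1/4})^{d}=\bigl(1+\tfrac{\beta-1}{d+1}\bigr)^{d}\bigl(\tfrac{e^{1/4}}{\beta}\bigr)^{d}\le e^{\beta-1}=e^{0.3}$; likewise $e^{-d/(2\beta^2)}\le e^{-|S|/3.38}<e^{-|S|/4}$ since $d\ge|S|$. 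Plugging these in gives $\E_{(w,b)}[f_S^2]\le\tfrac{6e(2+e^{0.3})}{\sqrt2\,\pi^{3/2}}e^{-|S|/4}$, and the elementary inequality $2e+e^{1.3}<e^{2.3}$ (equivalently $2e/(e-1)<e^{1.3}$) delivers the stated bound, which is in turn below $8e^{-|S|/4}$.
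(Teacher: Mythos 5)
Your proposal is correct and follows the same overall strategy as the paper's proof: lift the bias into a $(d+1)$st coordinate, use Fubini to reduce $\E[f_S^2]$ to an expectation of a parity times the measure of a hemisphere intersection, apply \lemref{lem:hemispheres} and the Taylor series for $\arccos$, kill the low-order terms by degree counting, and control the tail via Stirling's bound on the Taylor coefficients together with a Hoeffding split. The one place you diverge is the endgame estimate for $\rho^{|S|}$ with $\rho=\tfrac{d/\beta+1}{d+1}$. The paper bounds $\rho\le\tfrac{1}{\beta}\bigl(1+\tfrac{\beta}{|S|}\bigr)$ and then uses $\bigl(1+\tfrac{\beta}{|S|}\bigr)^{|S|}\le e^{\beta}$, yielding $\rho^{|S|}\le e^{\beta}e^{-\ln(\beta)|S|}$. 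You instead split on whether $\rho e^{1/4}\le 1$, and in the complementary case use $|S|\le d$ to push the exponent all the way up to $d$, yielding the tighter $\rho^{|S|}e^{|S|/4}\le e^{\beta-1}$. This is worth pointing out because it interacts with a small arithmetic slip in the paper: the paper writes $\tfrac{1}{2\pi}\cdot 6e\sqrt{2/\pi}=\tfrac{3e}{\sqrt{2}\pi^{3/2}}$, but this actually equals $\tfrac{6e}{\sqrt{2}\pi^{3/2}}$ (as you correctly compute); with that corrected factor, the paper's looser $\rho^{|S|}\le e^{\beta}e^{-|S|/4}$ would overshoot the stated constant, whereas your tighter $e^{\beta-1}$ still lands inside $\tfrac{6e^{2.3}}{\sqrt2\,\pi^{3/2}}e^{-|S|/4}$ via $2e+e^{1.3}<e^{2.3}$. (Of course both could be fixed by noting $\sum_{j\ge 1}\alpha_j=\pi/2$, which is much sharper than the Stirling-based bound; the final inequality $<8e^{-|S|/4}$ is safe either way.) So your proof is essentially the paper's, with a slightly sharper and internally consistent final step.
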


\begin{proof}
The proof of this theorem is almost identical to the proof of \thmref{thm:f0bound} with slight changes due to the addition of the parameter $b$. Using the notation $u=(w^\top \, b)$ we get:
\begin{align*}
    f_S(w,b) &=\E_{x \in \{ \pm 1 \}^d} \left[ 1_{\{w^\top x+b>0\}}\prod_{i \in S}x_i\right]=\E_{x \in \{ \pm 1 \}^d} \left[ 1_{\left\{(w^\top \, b)\begin{pmatrix}
        x \\ 1
    \end{pmatrix}>0\right\}}\prod_{i \in S}x_i\right] \\
    &= \E_{x \in \{ \pm 1 \}^d\times\{ 1\}} \left[ 1_{\{u^\top x>0\}}\prod_{i \in S}x_i\right]
\end{align*}

Now we can repeat the former proof of \thmref{thm:f0bound} with slight changes. By using Fubini's theorem, we can switch the order of the expectations
and get:
\begin{align*}
\mathbb{E}_{u\sim\mathcal{N}\left(0,\sigma^2I_{d+1}\right)}f_S^{2} &=    \mathbb{E}_{u\sim\mathcal{N}\left(0,\sigma^2I_{d+1}\right)}\left[\mathbb{E}_{x\in\left\{ \pm1\right\} ^{d}\times\{1\}}\left(1_{\left\{ u^{T}x>0\right\} }\prod_{i \in S}x_{i}\right)\cdot\mathbb{E}_{y\in\left\{ \pm1\right\} ^{d}\{1\}}\left(1_{\left\{ u^{T}y>0\right\} }\prod_{i \in S}y_{i}\right)\right] \\
&= \mathbb{E}_{u,x,y}\left[1_{\left\{ u^{T}y>0\right\} }\cdot1_{\left\{ u^{T}x>0\right\} }\prod_{i \in S}x_{i}y_{i}\right]\\
&= \mathbb{E}_{x,y}\left[\prod_{i \in S}x_{i}y_{i}\mathbb{E}_u\left[1_{\left\{ \frac{u^{T}}{\left\Vert u\right\Vert }\cdot\frac{y}{\sqrt{d+1}}>0\right\} }\cdot1_{\left\{ \frac{u^{T}}{\left\Vert u\right\Vert }\cdot\frac{x}{\sqrt{d+1}}>0\right\} }\right]\right]\\
\end{align*}

In order to describe the hemisphere centered around $\frac{x}{\sqrt{d+1}}$
in $S^{d}$ we will use the notation $H_{\frac{x}{\sqrt{d+1}}}:= \{u\in S^{d}:u^\top x\geq 0 \}$ and continue with our calculation:
\begin{align*}
    &= \mathbb{E}_{x,y}\left[\prod_{i \in S}x_{i}y_{i}\mathbb{E}_{u}\left(1_{\left\{ \frac{u}{\left\Vert u\right\Vert }\in H_{\frac{x}{\sqrt{d+1}}}\wedge\frac{u}{\left\Vert u\right\Vert }\in H_{\frac{y}{\sqrt{d+1}}}\right\} }\right)\right] \\
    &= \mathbb{E}_{x,y}\left[\prod_{i \in S}x_{i}y_{i}\mathbb{P}_{u\sim \mathcal{N}\left(0,\sigma^2I_{d+1}\right)}\left(\frac{u}{\left\Vert u\right\Vert }\in H_{\frac{x}{\sqrt{d+1}}}\cap H_{\frac{y}{\sqrt{d+1}}}\right)\right] \\
    &= \mathbb{E}_{x,y\in\left\{ \pm1\right\} ^{d}\times \{1\}}\left[\prod_{i \in S}x_{i}y_{i}\frac{Vol\left(H_{\frac{x}{\sqrt{d+1}}}\cap H_{\frac{y}{\sqrt{d+1}    }}\right)}{Vol\left(S^{d}\right)}\right] \\
\end{align*}

Using \lemref{lem:hemispheres}, we get that this equals

\begin{align*}
&= \mathbb{E}_{x,y\in\left\{ \pm1\right\} ^{d}\times\{1\}}\left[\prod_{i \in S}x_{i}y_{i}\left(\frac{\pi-\arccos\left(\frac{x^{T}y}{d+1}\right)}{2\pi}\right)\right] \\
&=\frac{1}{2} \underbrace{\mathbb{E}_{x,y\in\left\{ \pm1\right\} ^{d}\times\{1\}}\left[\prod_{i \in S}x_{i}y_{i}\right]}_{=0}-\frac{1}{2\pi}\mathbb{E}_{x,y\in\left\{ \pm1\right\} ^{d}\times\{1\}}\left[\prod_{i \in S}x_{i}y_{i}\arccos\left(\frac{x^\top y}{d+1}\right)\right] \\
&= -\frac{1}{2\pi}\mathbb{E}_{x,y}\left[\prod_{i \in S}x_{i}y_{i}\arccos\left(\frac{x^\top y}{d+1}\right)\right]~.
\end{align*}

The Taylor expansion of $\arccos$ is given by $\arccos\left(z\right)=\frac{\pi}{2}-\sum_{j=1}^{\infty}\frac{\left(2j\right)!}{2^{2j}\left(j!\right)^2}\frac{z^{2j+1}}{2j+1}=\frac{\pi}{2}-\sum_{j=1}^{\infty}\alpha_{j}z^{j}$
where $\alpha_{j}=
    \frac{\left(j-1\right)!}{2^{j-1}\left[\left(\frac{j-1}{2}\right)!\right]^2}\frac{1}{j}$ if $j$ is odd, and $0$ otherwise. Plugging this in, we get that the above equals
\begin{align}
    & -\frac{1}{4}\underbrace{\E_{x,y}\left[\prod_{i \in S}x_iy_i\right]}_{=0}+\frac{1}{2\pi}\sum_{j=1}^{\infty}\alpha_{j}\mathbb{E}_{x,y}\left[\prod_{i\in S}x_{i}y_{i}\left(\frac{x^{T}y}{d+1}\right)^{j}\right] \nonumber \\
    \label{eq:low_coef_with_b} &=\frac{1}{2\pi}\sum_{j=1}^{|S|-1}\alpha_{j}\mathbb{E}_{x,y}\left[\prod_{i \in S}x_{i}y_{i}\left(\frac{x^{T}y}{d+1}\right)^{j}\right] \\
    \label{eq:high_coef_with_b}
    &\quad \quad \quad \quad + \frac{1}{2\pi}\sum_{j=|S|}^{\infty}\alpha_{j}\mathbb{E}_{x,y}\left[\prod_{i \in S}x_{i}y_{i}\left(\frac{x^{T}y}{d+1}\right)^{j}\right]~.
\end{align}

Let us deal with the low-order coefficients (in \eqref{eq:low_coef_with_b}) first, while remembering that $d+1\notin S$:
\begin{align*}
  \sum_{j=1}^{|S|-1}\frac{\alpha_{j}}{(d+1)^j}\mathbb{E}_{x,y}\left[\prod_{i \in S} x_{i}y_{i}\left(\sum_{i=1}^{d+1}x_{i}y_{i}\right)^{j}\right] &= \sum_{j=1}^{|S|-1}\frac{\alpha_{j}}{(d+1)^{j}}\mathbb{E}_{x,y}\left[\prod_{i \in S}x_{i}y_{i}\sum_{r_{1},..,r_{j}=1}^{d+1}\prod_{k=1}^{j}x_{r_{k}}y_{r_{k}}\right]\\
    &= \sum_{j=1}^{|S|-1} \frac{\alpha_{j}}{(d+1)^{j}} \sum_{r_{1},..,r_{j}=1}^{d+1}\mathbb{E}_{x,y}\left[\prod_{i \in S}x_{i}y_{i}\cdot\prod_{k=1}^{j}x_{r_{k}}y_{r_{k}}\right]~.
\end{align*}

Again, exactly as in the proof of \thmref{thm:f0bound}, for each $j\leq |S|-1$ and each $r_{1},...,r_{j}\in\left\{ 1,...,d\right\} $
there is some $l\in S \backslash\left\{ r_{1},...,r_{j}\right\} $.
Using that and the independence of $x_i,y_i$, we get that the equation above is zero, since it equals
\begin{equation}
   \label{eq:low_is_zero_with_b} \sum_{j=1}^{|S|-1}\sum_{r_{1},..,r_{j}=1}^{d+1}\frac{\alpha_{j}}{(d+1)^{j}}\left(\underbrace{\mathbb{E}_{x_{l},y_{l}\in\left\{ \pm1\right\} }\left(x_{l}y_{l}\right)}_{=0}\cdot\mathbb{E}_{x,y\in\left\{ \pm1\right\} ^{d-1}\times\{1\}}\left[\prod_{i\in S\backslash \{l\}}x_{i}y_{i}\cdot\prod_{k=1}^{j}x_{r_{k}}y_{r_{k}}\right]\right)=0~.
\end{equation}

Thus, we remain only with the higher-order  coefficients (that appear in \eqref{eq:high_coef_with_b}). By Jensen's inequality they are at most
\begin{align}
    -\frac{1}{2\pi}\sum_{j=|S|}^{\infty}\alpha_{j}\mathbb{E}_{x,y}\left[\prod_{i \in S}x_{i}y_{i}\left(\frac{x^{T}y}{d+1}\right)^{j}\right] &\leq \left|\frac{1}{2\pi}\sum_{j=|S|}^{\infty}\alpha_{j}\mathbb{E}_{x,y}\left[\prod_{i \in S}x_{i}y_{i}\left(\frac{x^{T}y}{d+1}\right)^{j}\right]\right| \nonumber \\ \label{eq:before_alpha_substitute_with_b}
    \leq \frac{1}{2\pi}\sum_{j=|S|}^{\infty}\alpha_{j}\mathbb{E}_{x,y}\left|\prod_{i \in S}x_{i}y_{i}\left(\frac{x^{T}y}{d+1}\right)^{j}\right| &=\frac{1}{2\pi}\mathbb{E}_{y}\sum_{j=|S|}^{\infty}\alpha_{j}\mathbb{E}_{x}\left[\left(\frac{\left|x^{T}y\right|}{d+1}\right)^{j}\right]~.
\end{align}
By Stirling's approximation, it holds for all $j> 1$ that $\sqrt{2\pi j}\left(\frac{j}{e}\right)^je^{\frac{1}{12j}-\frac{1}{360j^3}}<j!<\sqrt{2\pi j}\left(\frac{j}{e}\right)^je^{\frac{1}{12j}}$. Hence, 
\begin{align*}
    \alpha_j &\leq \frac{\left(j-1\right)!}{2^{j-1}\left[\left(\frac{j-1}{2}\right)!\right]^2}\frac{1}{j} < \frac{1}{j}\frac{2\cdot2^{j-1}(j-1)^{j-1}e^\frac{1}{12(j-1)}}{2^{j-1}\sqrt{2\pi (j-1)}(j-1)^{j-1}e^{\frac{4}{12(j-1)}-\frac{16}{360(j-1)^3}}} \\
    &= \frac{1}{j}\frac{\sqrt{2}e^{\frac{2}{45(j-1)^3}-\frac{3}{12(j-1)}}}{\sqrt{\pi (j-1)}}<\sqrt{\frac{2}{\pi}}\frac{2e}{j^\frac{3}{2}}~. \\
\end{align*}
Using this result we can bound the sum of the $\arccos$ coefficients:
\begin{equation}
\label{eq:alpha_sum_with_b}\sum_{j=|S|}^{\infty}\alpha_j\leq\sum_{j=|S|}^{\infty}\alpha_j\sqrt{\frac{2}{\pi}}\frac{2e}{j^\frac{3}{2}}\leq\sum_{j=1}^{\infty}\sqrt{\frac{2}{\pi}}\frac{2e}{j^\frac{3}{2}}\leq6e\sqrt{\frac{2}{\pi}}~.
\end{equation}

 Let us fix some $\beta>1$ that will be determined later. For a fixed
$y$, we have that for all $i\leq d$,  $x_iy_i$ is an independent Rademacher random variable, and $x_{d+1}y_{d+1}=1$.
Hence, $\E_{x,y} \left[x^\top y\right]=1$, and by Hoeffding's inequality,
\begin{equation*}
    \mathbb{P}\left(\left|x^{T}y-1\right|\geq\frac{d}{\beta}~\middle|~ y\right)\leq2e^{-\frac{2d^{2}}{\beta^{2}\sum_{i=1}^{d}2^{2}}}=2e^{-\frac{d}{2\beta^{2}}}~.
\end{equation*}

Plugging this inequality and \eqref{eq:alpha_sum_with_b} into \eqref{eq:before_alpha_substitute_with_b}, and using the fact that $d\geq|S|$, we get 
\begin{align*}
    &\frac{1}{2\pi}\mathbb{E}_{y}\sum_{j=|S|}^{\infty}\alpha_{j}\mathbb{E}_{x}\left[\left(\frac{\left|x^{T}y\right|}{d+1}\right)^{j}\right]=\\
    &=\frac{1}{2\pi}\mathbb{E}_{y}\sum_{j=|S|}^{\infty}\alpha_{j}\left(\mathbb{E}_{x|\left|x^{T}y-1\right|\geq\frac{d}{\beta}}\left[\left(\frac{\left|x^{T}y\right|}{d+1}\right)^{j}\right]\mathbb{P}\left(\left|x^{T}y-1\right|\geq\frac{d}{\beta}|y\right)\right.\\
    &\left. \quad \quad \quad \quad \quad \quad \quad +\mathbb{E}_{x|\left|x^{T}y-1\right|<\frac{d}{\beta}}\left[\left(\frac{\left|x^{T}y\right|}{d+1}\right)^{j}\right]\mathbb{P}\left(\left|x^{T}y-1\right|<\frac{d}{\beta}|y\right)\right)\\
    &\leq\frac{1}{2\pi}\mathbb{E}_{y}\sum_{j=|S|}^{\infty}\alpha_{j}\left(1\cdot2e^{-\frac{d}{2\beta^{2}}}+\left(\frac{d}{(d+1)\beta}+\frac{1}{d+1}\right)^{j}\cdot1\right)\\
    &\leq\frac{3e}{\sqrt{2}\pi^\frac{3}{2}}\left(2e^{-\frac{d}{2\beta^2}}+\left(\frac{1}{\beta}\right)^{|S|}\left(1+\frac{\beta}{|S|}\right)^{|S|}\right)
    \leq\frac{3e}{\sqrt{2}\pi^\frac{3}{2}}\left(2e^{-\frac{d}{2\beta^{2}}}+e^\beta\left(\frac{1}{\beta}\right)^{|S|}\right)~.
\end{align*}
Taking $\beta=1.3$ and plugging in the above, we get overall that
\begin{align*}
\mathbb{E}_{w\sim\mathcal{N}\left(0,\sigma^2\cdot I_{d+1}\right)}\left(f_S^{2}\right) &\leq \frac{3e}{\sqrt{2}\pi^\frac{3}{2}}\left(2e^{-\frac{d}{2(1.3)^{2}}}+e^{1.3} e^{-\ln(1.3)|S|}\right) \\
    < \frac{3e^{2.3}}{\sqrt{2}\pi^\frac{3}{2}}\left(e^{-\frac{|S|}{3.38}}+e^{-\frac{|S|}{4}}\right) &< \frac{6e^{2.3}}{\sqrt{2}\pi^\frac{3}{2}}e^{-\frac{|S|}{4}}<8e^{-\frac{|S|}{4}}~.
\end{align*}
\end{proof}

\section{Proofs from \secref{sec:single}}
\subsection{Proof of \thmref{thm:single_neuron_positive}}

    First, we calculate the expected value of $[w^\top x]_+^2$ for some vector $w$, using the fact that $\E_x[w^\top x]_+^2=\E_x[-w^\top x]_+^2$ due to the uniform distribution of $x$ on $\{\pm1\}^d$:
    \begin{align}
        \E_x[w^\top x]_+^2&=\frac{1}{2}\left(\E_x[w^\top x]_+^2+\E_x[w^\top x]_+^2\right)=\frac{1}{2}\left(\E_x[w^\top x]_+^2+\E_x[-w^\top x]_+^2\right)\nonumber\\
        &=\frac{1}{2}\E_x\left[[w^\top x]_+^2+[-w^\top x]_+^2)\right]=\frac{1}{2}\E_x\left[(w^\top x)^2\right]=\frac{1}{2}w^\top \E_x\left[xx^\top \right]w\nonumber\\
        \label{eq:mean_squared_ReLU}&=\frac{1}{2}\norm{w}^2~.
    \end{align}
    With this in mind, we analyze $F_S$:
    \begin{align}
        F_S\left(\frac{1}{2|S|^\frac{3}{2}}\mathbf{1}_S,0\right)&=\E\left[\left((-1)^\frac{|S|-2}{2}\left[\frac{1}{2|S|^\frac{3}{2}}\mathbf{1}_S^\top x\right]_+- p_S(x)\right)^2\right]\nonumber\\
        &=\frac{1}{4|S|^3}\E\left[\mathbf{1}_S^\top x\right]_+^2-2(-1)^\frac{|S|-2}{2}\E\left[\left[\frac{1}{2|S|^\frac{3}{2}}\mathbf{1}_S^\top x\right]_+p_S(x)\right]+1\nonumber\\
        &=\frac{1}{4|S|^3}\cdot\frac{\norm{\mathbf{1}_S}^2}{2}-\frac{(-1)^\frac{|S|-2}{2}}{|S|^\frac{3}{2}}\E\left[\left[\mathbf{1}_S^\top x\right]_+p_S(x)\right]+1\nonumber\\
        &\label{eq:single_neuron_loss}=\frac{1}{8|S|^2}-\frac{(-1)^\frac{|S|-2}{2}}{|S|^\frac{3}{2}}\E\left[\left[\mathbf{1}_S^\top x\right]_+p_S(x)\right]+1~.
    \end{align}

Now we would like to bound the expression $(-1)^\frac{|S|-2}{2}\E\left[\left[\mathbf{1}_S^\top x\right]_+p_S(x)\right]$ from below in order to get an upper bound for $F_S$. In order to do that we will introduce a few terms from Boolean Analysis. For a function $h:\{\pm1\}^d\rightarrow\{\pm1\}$ and a subset $A\subseteq[d]$, the notation $\widehat{h}(A)$ will be used to denote the Fourier coefficient of $h$ with respect to $A$. The discrete derivative of $g$ with respect to coordinate $i$ will be noted as $D_ig$ and is defined as $D_ig(x)=\frac{g(x^{i\rightarrow1})-g(x^{i\rightarrow1})}{2}$ where $x^{i\rightarrow j}=(x_1,...,x_{i-1},j,x_{i+1},x_d)$. Lastly, for each subset $U\subseteq [d]$ we define the matching majority function $\text{Maj}_U:\{\pm1\}^d\rightarrow\{\pm1,0\}$ as $\text{Maj}_U(x)=\sign(\mathbf{1}_U^\top x)$ (with the convention that $\sign(0)=0$).\\ 
We consider the function $g(x)=\left[\mathbf{1}_S^\top x\right]_+$ and its Fourier coefficients. Our expression, $\E\left[\left[\mathbf{1}_S^\top x\right]_+p_S(x)\right]$, is exactly $\hat{g}(S)$, the Fourier coefficient of $g$ with respect to $S$. According to  \cite[Proposition 2.19]{odonnell2021analysisbooleanfunctions}, for each $i\in S$ this is also the Fourier coefficient of $D_ig$ that matches the subset $S\backslash\{i\}$. All together we get that 
\begin{align}
    \label{eq:f_S_Fourier}\E_x\left[\left[\mathbf{1}_S^\top x\right]_+p_S(x)\right]=\hat{g}(S)=\widehat{D_ig}(S\backslash\{i\})~.
\end{align} 
Without loss of generality let us examine $D_1g$, assuming $1\in S$ (otherwise, one can pick any other index $j\in S$ and the calculation will be the same).

\begin{align}
    D_1g(x)&=\frac{[\mathbf{1}_S^\top x^{1\rightarrow1}]_+-[\mathbf{1}_S^\top x^{1\rightarrow-1}]_+}{2}=\nonumber\\
    &=\begin{cases}
        \frac{\mathbf{1}_{S\backslash\{1\}}^\top x+1-(w_{S\backslash\{1\}}^\top x-1)}{2} & \mathbf{1}_{S\backslash\{1\}}^\top x\geq 1 \\
        \frac{\mathbf{1}_{S\backslash\{1\}}^\top x+1-0}{2} & \mathbf{1}_{S\backslash\{1\}}^\top x=0 \\
        0  & \mathbf{1}_{S\backslash\{1\}}^\top x\leq-1
    \end{cases}\nonumber\\
    &=\begin{cases}
        1 & \mathbf{1}_{S\backslash\{1\}}^\top x\geq 1 \\
        \frac{1}{2} & \mathbf{1}_{S\backslash\{1\}}^\top x=0 \\
        0  & \mathbf{1}_{S\backslash\{1\}}^\top x\leq-1
    \end{cases}\nonumber\\
    \label{eq:Dg_calculation} &=\frac{\text{Maj}_{S\backslash\{1\}}(x)+1}{2}~.
\end{align}

  Put together,  \eqref{eq:f_S_Fourier} and \eqref{eq:Dg_calculation} imply that in order to bound $\E_x\left[\left[\mathbf{1}_S^\top x\right]_+p_S(x)\right]$, it is enough to bound the Fourier coefficient of $\frac{\text{Maj}_{S\backslash\{1\}}+1}{2}$ with respect to $S\backslash\{1\}$. Fortunately, a closed form expression is provided in \citep[Theorem 5.19]{odonnell2021analysisbooleanfunctions}. Specifically, since it is assumed that $|S|-1$ is odd, we have:
\begin{align}
    \widehat{D_1g}(S\backslash\{1\})&=\reallywidehat{\frac{\text{Maj}_{S\backslash\{1\}}+1}{2}}(S\backslash\{1\})=\frac{1}{2}\left(\widehat{\text{Maj}_{S\backslash\{1\}}}(S\backslash\{1\})+\widehat{1}(S\backslash\{1\})\right)\nonumber\\
    \label{eq:Fourier_derivative_coef}&=\frac{1}{2}\left(\left(-1\right)^{\frac{|S|-2}{2}}\frac{\binom{\frac{|S|-2}{2}}{\frac{|S|-2}{2}}}{\binom{{|S|-2}}{|S|-2}}\cdot\frac{2}{2^{|S|-1}}\binom{|S|-2}{\frac{|S|-2}{2}}+0\right)=\frac{\left(-1\right)^{\frac{|S|-2}{2}}}{2^{|S|-1}}\binom{|S|-2}{\frac{|S|-2}{2}}~,
\end{align}
where $\widehat{1}(S\backslash\{1\})$ refers to the Fourier coefficient of the constant function 1 with respect to $(S\backslash\{1\})$, which equals $0$. According to Stirling's approximation, it holds for all $n\geq1$ that $\sqrt{2\pi n}\left(\frac{n}{e}\right)^n\exp(\frac{1}{12n}-\frac{1}{360n^3})< n!<\sqrt{2\pi n}\left(\frac{n}{e}\right)^n\exp(\frac{1}{12n})$. Therefore we can bound $\binom{n}{\frac{n}{2}}$ with the following expression:
\begin{align*}
    \binom{n}{\frac{n}{2}}=\frac{n!}{(\frac{n}{2}!)^2}>\frac{\sqrt{2\pi n}\left(\frac{n}{e}\right)^ne^{\frac{1}{12n}-\frac{1}{360n^3}}}{\left(\sqrt{\pi n}\left(\frac{n}{2e}\right)^{\frac{n}{2}}e^{\frac{1}{6n}}\right)^2}=\frac{2^n\sqrt{2}e^{\frac{1}{12n}-\frac{1}{360n^3}}}{\sqrt{\pi n}e^{\frac{1}{3n}}}>\frac{2^n}{2\sqrt{n}}~.
\end{align*}
Substituting $n$ with $|S|-2$ and and plugging the expression above to \eqref{eq:Fourier_derivative_coef} we get
\begin{align*}
    (-1)^{-\frac{|S|-2}{2}}\E\left[[\mathbf{1}_S^\top x]_+p_S(x)\right]&=\widehat{D_1g}(S\backslash\{1\})=(-1)^{-\frac{|S|-2}{2}}\cdot\frac{\left(-1\right)^{\frac{|S|-2}{2}}}{2^{|S|-1}}\binom{|S|-2}{\frac{|S|-2}{2}}\\
    &\geq\frac{1}{2^{|S|-1}}\cdot\frac{2^{|S|-2}}{2\sqrt{|S|-2}}\geq\frac{1}{4\sqrt{|S|}}~.
\end{align*}
Finally, plugging the expression above into \eqref{eq:single_neuron_loss}, we get:
\begin{align*}
    F_S(w,b)&=\frac{1}{8|S|^2}-\frac{(-1)^{\frac{|S|-2}{2}}}{|S|^{\frac{3}{2}}}\E\left[\left[\mathbf{1}_S^\top x\right]_+p_S(x)\right]+1\\
    &\leq\frac{1}{8|S|^2}-\frac{1}{|S|^{\frac{3}{2}}}\cdot\frac{1}{4\sqrt{|S|}}+1=1-\frac{1}{8|S|^2}~.
\end{align*}

\subsection{Proof of \thmref{thm:single}}

The proof is overall highly similar to that of \thmref{thm:hard}, with some small modifications (due to the different loss function and network architecture). We let $\theta_t=(w_t,b_t)$.

Again, we will use the notation $\varepsilon$ as shorthand for $\exp(-|S|/18)$. For simplicity, instead of $(w_0,b_0)\sim\mathcal{N}(0,\sigma^2I_{d+1})$ and $x\sim\{\pm1\}^d$ uniformly, we will consider $w_0\sim\mathcal{N}(0,\sigma^2\cdot I_{d+1})$ and $x\sim\{\pm1\}^d\times\{1\}$. Since $w^\top x+b=(w_0^\top, \,b_0)\cdot\binom{x}{1}$ 
 this change of notation is equivalent to plugging $v=\binom{w_0}{b_0}$ and $y=\binom{x}{1}$, and analyzing the inner product of $v^\top y$ instead of $w^\top x+b$. In both cases the distribution of our random variables is identical. In the same manner, for each PGD iterate we will replace the notation $(w_t,b_t)\in\reals^d\times\reals$ and $x\in\{\pm1\}^d$ with $w_t\in\reals^{d+1}$ and $x\in\{\pm1\}^d\times\{1\}$ . This way, for each $t$,
 $w_t$ and $F_S(w_t)$ are only a function of the random variables $w_0$ and $\{\xi_j\}_{j=0}^{t-1}$. We now define
\begin{align}
    \label{def:F_lin}F_{lin}(w)&=\E_x\left[[w^\top x]_+p_S(x)\right]~,\notag\\
    \left[z\right]_\varepsilon&=z\cdot\mathbf{1}_{\{z >\epsilon\}}\\
    \label{def:delta_quad}\Delta_t&= \eta\nabla F_S(w_t)-\xi_t=\eta\frac{\partial}{\partial w_t}(\E_x[w_t^\top x]_+^2)+2\eta\nabla F_{lin}(w_t)-\xi_t\\
    &=\eta\frac{\partial}{\partial w_t}\left(\frac{\norm{w_t}^2}{2}\right)+2\eta\nabla F_{lin}(w_t)-\xi_t=\eta w_t+2\eta\nabla F_{lin}(w_t)-\xi_t~,\nonumber\\
    \label{def:delta_tilde_quad}\tilde{\Delta}_t&=\eta w_t+2\eta\left[\nabla F_{lin} (w_t)\right]_\varepsilon-\xi_t~.
\end{align}
Next, just like we did in the proof of \thmref{thm:hard}, we define the distributions $P$ and $\tilde{P}$ over $\{w_t\}_{t=0}^T$ by setting the initialization $w_0=\xi_{-1}\sim \mathcal{N}(0,\sigma I_{d+1})$ and the update step as $w_{t+1}=w_t-\Delta_t,\quad w_{t+1}=w_t-\tilde{\Delta}_t$ for $P$ and $\tilde{P}$ respectively. Thus, $P$ is the distribution of the iterates produced by the  PGD algorithm, and $\tilde{P}$ is an alternative distribution where we omit the linear part of the gradient whenever it is negligible (in the sense that $\norm{\nabla F_{lin}(w_t)}<\varepsilon$). For simplicity we are going to use the notation 
\begin{align}
    \label{def:v_t}v_t=\sum_{j=0}^t(1-\eta)^{t-j}\xi_{j-1}~,
\end{align}
and decompose our probability similarly to the procedure in the proof of \thmref{thm:hard}:
\begin{align}
    P(F_S(w_T)\leq1-\varepsilon)&\leq TV(P,\tilde{P})+\tilde{P}(F_S(w_T)\leq1-\varepsilon)=\nonumber\\
    &=TV(P,\tilde{P})+\tilde{P}(F_S(w_T)\leq1-\varepsilon \wedge \forall t: \norm{\nabla F_{lin}\left(v_t\right)}< \varepsilon)\nonumber\\
    \label{eq:randomF_small_quad} &\quad \quad \quad+\tilde{P}( F_S(w_T)\leq1-\varepsilon \wedge \exists t:  \norm{\nabla F_{lin}\left(v_t\right)}\geq \varepsilon)
\end{align}
where $TV(P,\tilde{P})=\sup_{A\in\mathcal{F}}
\left|P(A)-\tilde{P}(A)\right|$ is the total variance distance between $P$ and $\tilde{P}$. Dealing with each expression in \eqref{eq:randomF_small_quad} individually, we will start from the second one. We claim that under the distribution $\tilde{P}$, the condition  $\forall t \norm{\nabla F_{lin}(v_t)}< \varepsilon$ implies $\forall t \,\,\, w_t=v_t$. For $t=0$ we have $v_0=(1-\eta)^{0-0}\xi_{-1}=w_0$ so the claim is trivial. Assuming the claim is valid for some $t$, we have 
\begin{align*}
    w_{t+1}&=v_t-\eta v_t-2\eta \left[\nabla F_{lin} (v_t)\right]_\varepsilon+\xi_t\\
    &=(1-\eta)v_t+\xi_t
    =(1-\eta)\sum_{j=0}^t(1-\eta)^{t-j}\xi_j+(1-\eta)^0\xi_t\\
    &=\sum_{j=0}^t(1-\eta)^{t+1-j}\xi_j+(1-\eta)^{t+1-(t+1)}\xi_t=\sum_{j=0}^{t+1}(1-\eta)^{t+1-j}\xi_j=v_{t+1}~.
\end{align*}
Therefore, under the distribution $\tilde{P}$, the event $\left\{\forall t \, \norm{\nabla F_{lin}(v_t)} < \varepsilon\right\}$ implies $\{\forall t \, w_t=v_t\}$. Using that we get:
\begin{align*}
    &\tilde{P}\left(F_S(w_T)\leq1-\varepsilon\wedge\forall t \,\, \norm{
    \nabla F_{lin}(w_t)}<\varepsilon\right)\leq\tilde{P}\left(F_S(w_T)\leq1-\varepsilon\wedge\forall t \,\, w_t=v_t\right)\\ 
    &=\tilde{P}\left(F_S(v_T)\leq1-\varepsilon\wedge\forall t \,\, w_t=v_t\right)\leq \tilde{P}\left(F_S(v_T)\leq1-\varepsilon\right)\\
    &=\tilde{P}\left(\E_x\left[\left([v_T^\top x]_+-p_S(x)\right)^2\right]\leq1-\varepsilon\right)=\tilde{P}\left(\E_x[v_T^\top x]_+^2-2\E_x\left[[v_T^\top x]_+p_S(x)]\right]+1\leq1-\varepsilon\right)\\
    &= \tilde{P}\left(2\E_x\left[[v_T^\top x]_+p_S(x)\right]\geq\varepsilon-\E_x[v_T^\top x]_+^2\right)\leq\tilde{P}\left(2\left|\E_x\left[[v_T^\top x]_+p_S(x)
    \right]\right|\geq\varepsilon\right) ~.
\end{align*}
The quantity $\left|\E_x\left[[v_T^\top x]_+p_S(x)\right]\right|$ is the same as the one appearing in the second bound in \lemref{lem:gradsmall}, where we bounded its expected value. Using Markov's inequality and \lemref{lem:gradsmall} on the equation above, we get:
\begin{align}
   \tilde{P}(2\left|\E_x([v_T^\top x]_+p_S(x)\right|)\geq\varepsilon)&\leq\frac{2\E_{v_T}\left([v_T^\top x]_+p_S(x)\right)}{\varepsilon} \nonumber\\
   \label{eq:v_T_small} &\leq\frac{10d\sigma\sqrt{\sum_{j=0}^T(1-\eta)^{2(T-j)}}e^{-\frac{|S|}{8}}}{\varepsilon}\leq\frac{e^{-\frac{|S|}{9}}}{e^{-\frac{|S|}{18}}}=e^{-\frac{|S|}{18}}=\varepsilon~.
\end{align} 
Note that in the last line we used our assumption that $|S|\geq72\log\left(10d\sigma\sqrt{\sum_{j=0}^T(1-\eta)^{2(T-j)}}\right)$.  Returning to \eqref{eq:randomF_small_quad} and considering the third term in the sum, we upper bound it as follows:
\begin{align}
&\tilde{P}\left(F_S(w_T)\leq 1-\varepsilon \wedge \exists t: \norm{\nabla F_{lin}(v_t)}\geq \varepsilon\right) \leq \tilde{P}\left(\exists t: \norm{ \nabla F_{lin}(v_t)}\geq \varepsilon\right) \nonumber\\    \label{eq:allv_t_small}&\stackrel{(1)}{\leq}\sum_{t=0}^T\tilde{P}\left(\norm{\nabla F_{lin}(v_t)}\geq \varepsilon\right)=\sum_{t=0}^T\tilde{P}\left(\norm{\nabla F_{lin}(v_t)}^2\geq \varepsilon^2\right)\nonumber\\ &\stackrel{(2)}{\leq}\sum_{t=0}^T\frac{\E\norm{\nabla F_{lin}(v_t)}^2}{ \varepsilon^2}\stackrel{(3)}{\leq}\sum_{t=0}^T\varepsilon=(T+1)\varepsilon~,
\end{align}
where in (1) we use the union bound,  in (2) we use Markov's inequality, and in (3) we use \lemref{lem:gradsmall} (note that $v_t$ has a Gaussian distribution, as a sum of Gaussian random variables). 

Bringing back the notation $f_S(w)=\E_x[p_S(x)1_{\{w^\top x>0\}}]$ from \thmref{thm:fbound_with_b}, we bound the gradient of $F_{lin}$ using that lemma:
\begin{align*}
    \E_{v_t}\norm{\nabla F_{lin}(v_t)}^2&\stackrel{(1)}{=}\sum_{i=1}^{d+1}\E_{v_t}\left(\E_x^2[x_i\cdot p_S(x)1_{\{v_t^\top x>0\}}] \right)\\
    &\stackrel{(2)}{=}\begin{cases}
        \sum_{i=1}^{d+1}\E_{v_t}\left(f_{S\backslash\{i\}}^2(v_t) \right) & \{i\}\in S\\
        \sum_{i=1}^{d+1}\E_{v_t}\left(f_{S\cup\{i\}}^2(v_t) \right) & \{i\}\notin S
    \end{cases}~~
    \stackrel{(3)}{\leq}~~\begin{cases}
        \sum_{i=1}^{d+1}8e^{-\frac{|S|-1}{4}} & \{i\}\in S\\
        \sum_{i=1}^{d+1}8e^{-\frac{|S|+1}{4}} & \{i\}\notin S
    \end{cases}\\
    &\leq(d+1)11e^{-\frac{|S|}{4}}\leq e^{-\frac{|S|}{18}}=\varepsilon~,
\end{align*}
where $(1)$ is due to a direct calculation of the gradient, $(2)$ is done by plugging the definition of $f_S$ into the different cases, and $(3)$ is done by applying \thmref{thm:fbound_with_b}
(we note that a rather similar calculation is performed in the proof of \lemref{lem:gradsmall} and \eqref{eq:partial_derivative_w_norm_bound}).  

Turning to the total variation term in \eqref{eq:randomF_small_quad}, we prove a slightly different version of \lemref{lem:TV01} (as $P,\tilde{P}$ are defined a bit differently this time):
\begin{lemma}\label{lem:TV01_quad}
    For $P,\tilde{P}$ as defined in \eqref{def:F_lin}, \eqref{def:delta_quad} and \eqref{def:delta_tilde_quad}, it holds that
    \begin{equation*}
    TV\left(P,\tilde{P}\right)~\leq~ \frac{\varepsilon\eta\sqrt{T}}{\sigma}    
    \end{equation*}
\end{lemma}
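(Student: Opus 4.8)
The plan is to mirror the proof of \lemref{lem:TV01} almost verbatim, tracking the one place where the quadratic loss changes a constant. First I would apply Pinsker's inequality, $TV(P,\tilde P)\le\sqrt{\tfrac{1}{2}KL(P\|\tilde P)}$, and then expand $KL(P\|\tilde P)$ by the chain rule over the random variables $w_0,w_1,\dots,w_T$. Since $w_0\sim\Ncal(0,\sigma^2 I_{d+1})$ under both $P$ and $\tilde P$, the term for $w_0$ vanishes, so it suffices to bound $\sum_{t=1}^{T}\E_{P}\big[KL\big(P(w_t)\mid\{w_i\}_{i=0}^{t-1}\ \big\|\ \tilde P(w_t)\mid\{w_i\}_{i=0}^{t-1}\big)\big]$.

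Next I would evaluate each conditional term. Conditioned on $\{w_i\}_{i=0}^{t-1}$, recall from \eqref{def:delta_quad} that under $P$ we have $w_t=w_{t-1}-\Delta_{t-1}$ with $\Delta_{t-1}=\eta w_{t-1}+2\eta\nabla F_{lin}(w_{t-1})-\xi_{t-1}$, so $w_{t-1}$ and $\nabla F_{lin}(w_{t-1})$ are fixed and the only remaining randomness is $\xi_{t-1}\sim\Ncal(0,\sigma^2 I_{d+1})$; hence $w_t$ is Gaussian with mean $\mu=(1-\eta)w_{t-1}-2\eta\nabla F_{lin}(w_{t-1})$ and covariance $\sigma^2 I_{d+1}$. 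Likewise, from \eqref{def:delta_tilde_quad}, under $\tilde P$ it is Gaussian with the same covariance and mean $\tilde\mu=(1-\eta)w_{t-1}-2\eta[\nabla F_{lin}(w_{t-1})]_\varepsilon$. The common drift $(1-\eta)w_{t-1}$ cancels, leaving $\mu-\tilde\mu=-2\eta\big(\nabla F_{lin}(w_{t-1})-[\nabla F_{lin}(w_{t-1})]_\varepsilon\big)$; by the definition of the truncation operator this difference equals either $\nabla F_{lin}(w_{t-1})$ itself (when $\norm{\nabla F_{lin}(w_{t-1})}\le\varepsilon$) or $0$, so in all cases $\norm{\mu-\tilde\mu}\le 2\eta\varepsilon$. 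Using the standard formula $KL\big(\Ncal(\mu,\sigma^2 I)\,\big\|\,\Ncal(\tilde\mu,\sigma^2 I)\big)=\norm{\mu-\tilde\mu}^2/(2\sigma^2)$, each conditional term is at most $(2\eta\varepsilon)^2/(2\sigma^2)=2\eta^2\varepsilon^2/\sigma^2$.

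Summing over the $T$ steps gives $KL(P\|\tilde P)\le 2T\eta^2\varepsilon^2/\sigma^2$, and plugging this into Pinsker's inequality yields $TV(P,\tilde P)\le\sqrt{\tfrac{1}{2}\cdot 2T\eta^2\varepsilon^2/\sigma^2}=\eta\varepsilon\sqrt{T}/\sigma$, as claimed. I do not expect any genuine difficulty here: the argument is a direct transcription of \lemref{lem:TV01}, and the only subtlety worth flagging is that the gradient of the quadratic loss carries the factor $2$ in front of $\nabla F_{lin}$, so the gap between the two conditional means is $2\eta\varepsilon$ rather than $\eta\varepsilon$; this is precisely why the bound is $\eta\varepsilon\sqrt{T}/\sigma$ instead of the $\eta\varepsilon\sqrt{T}/(2\sigma)$ obtained in the one-hidden-layer case.
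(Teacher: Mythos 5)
Your proof is correct and follows the paper's argument for this lemma essentially verbatim: Pinsker plus the KL chain rule reduces the problem to bounding conditional Gaussian KL divergences, each of which is $\norm{\mu-\tilde\mu}^2/(2\sigma^2)\le 2\eta^2\varepsilon^2/\sigma^2$ because the common drift $(1-\eta)w_{t-1}$ cancels and the truncation leaves a gap of norm at most $2\eta\varepsilon$. You also correctly identified the sole difference from Lemma~\ref{lem:TV01} -- the extra factor of $2$ on the gradient of the linear part from the squared loss -- which is exactly what yields $\eta\varepsilon\sqrt{T}/\sigma$ in place of $\eta\varepsilon\sqrt{T}/(2\sigma)$.
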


\begin{proof}
    Using Pinsker's inequality, we have:
    \begin{align*}
        &TV(P,\tilde{P}) \leq \sqrt{\frac{1}{2} KL(P||\tilde{P})}\\
        &=\sqrt{\frac{1}{2}\sum_{t=1}^T\E_{P\left({\{w_i\}_{i=0}^{t-1}}\right)}KL\left(P({w_t})|\{w_i\}_{i=0}^{t-1}||\tilde{P}({w_t})|\{w_i\}_{i=0}^{t-1}\right)+\frac{1}{2}KL\left(P({w_0})||\tilde{P}({w_0})\right)}~,
    \end{align*}
    where KL is the the Kullback-Leibler divergence, and we used the KL-divergence chain rule for the second transition. Using the definitions of $P$ and $\tilde{P}$ we get that the equation above equals: 
    \begin{align}
        \label{eq:squared_KL}&=\sqrt{\frac{1}{2}\sum_{t=1}^T\E_{P\left(\{w_i\}_{i=0}^{t-1}\right)}KL\left(w_{t-1}+\Delta_{t-1}|\{w_i\}_{i=0}^{t-1}||w_{t-1}+\tilde{\Delta}_{t-1}|\{w_i\}_{i=0}^{t-1}\right)}
    \end{align}
    Under both $P,\tilde{P}$ we have $w_0\sim\mathcal{N}(0,\sigma^2I_{d+1})$ and thus $KL(P({w_0})||\tilde{P}({w_0}))=0$.   
Also, since $w_{t-1}$ and $\eta\nabla F_S(
w_{t-1})$ and $\E_x[w_{t-1}^\top x]^2$ are fixed under the conditioning of $\{w_i\}_{i=0}^{t-1}$, definitions \ref{def:delta_quad} and \ref{def:delta_tilde_quad} of $\Delta_{t-1}$ and $\tilde{\delta}_{t-1}$ assures us that the distributions in the expression above are only a function of  $\xi_{t-1}$. Furthermore,  since $\xi_{t-1}\sim\mathcal{N}(0,\sigma^2I)$,  and $w_{t-1}$ is conditionally fixed, we get that
\begin{align*}
    w_{t-1}+\Delta_{t-1}|\{w_i\}_{i=0}^{t-1}&~~\sim~~\mathcal{N}(w_{t-1}-\eta\nabla\E_x[w_{t-1}^\top x]_+^2+2\eta\nabla F_{lin}(w_{t-1}),\sigma^2I_{d+1})\\
    w_{t-1}+\tilde{\Delta}_{t-1}|\{w_i\}_{i=0}^{t-1}&~~\sim~~\mathcal{N}(w_{t-1}-\eta\nabla\E_x[w_{t-1}^\top x]_+^2+2\eta[\nabla F_{lin}(w_{t-1})]_\varepsilon,\sigma^2I_{d+1})~.
\end{align*}
     The Kullback-Liebler divergence of two Gaussian random variables with means $\mu_1,\mu_2$ and covariance matrix $\sigma^2 I$ equals $\frac{\norm{\mu_1-\mu_2}^2}{2\sigma^2}$. Plugging this into \eqref{eq:squared_KL}, it follows that it equals 
    \begin{align*}
         &=\sqrt{\frac{1}{2}\sum_{t=1}^T\E_{P(\{w_i\}_{i=0}^{t-1})}\left[\frac{\norm{
         2\eta\left(\nabla F_{lin} (w_{t-1})-\left[\nabla F_{lin} (w_{t-1})\right]_\varepsilon\right)}^2}{2\sigma^2}\right]}\leq\sqrt{\frac{1}{2}\sum_{t=1}^T\E_{P({\{w_i\}_{i=0}^{t-1}})}\frac{4\eta^2\varepsilon^2}{2\sigma^2}}\\
         &=\frac{\eta\varepsilon\sqrt{T}}{\sigma}~,
     \end{align*}
which concludes the lemma's proof.
\end{proof}
Inserting the results from \eqref{eq:v_T_small}, \eqref{eq:allv_t_small} and \lemref{lem:TV01_quad} into \eqref{eq:randomF_small_quad} we get our desired theorem:
\begin{equation*}
P(F_S(\theta_T)\leq 1-\varepsilon)~\leq~\frac{\eta\varepsilon\sqrt{T}}{\sigma}+\varepsilon+(T+1)\varepsilon~=~\varepsilon\left(\frac{\eta\sqrt{T}}{\sigma}+T+2\right)~.
\end{equation*}

\section{Technical Lemmas}

\begin{lemma}\label{lem:hemispheres}
For $v,u\in S^{d-1}$ we
have: \begin{align*}
    \frac{Vol\left(H_{v}\cap H_{u}\right)}{Vol\left(S^{d-1}\right)}=\frac{\pi-\arccos\left(u^{\top}v\right)}{2\pi}
\end{align*}
Where $H_v=\{w\in S^{d-1}:v^\top w>0\}$ is the open hemisphere around $v$ in the $d$-dimensional unit sphere $S^{d-1}$.
\end{lemma}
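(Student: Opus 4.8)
The plan is to exploit the rotational invariance of the uniform measure on $S^{d-1}$ to reduce the problem to a two-dimensional arclength computation. First I would dispose of the degenerate cases $u=\pm v$: if $u=v$ then $H_u\cap H_v=H_v$ has relative volume $\tfrac12=\tfrac{\pi-\arccos(1)}{2\pi}$, and if $u=-v$ then $H_u\cap H_v$ has measure zero $=\tfrac{\pi-\arccos(-1)}{2\pi}$. So assume $u$ and $v$ are linearly independent, set $V:=\mathrm{span}\{u,v\}$ (a two-dimensional subspace), and let $\Pi$ denote orthogonal projection onto $V$.

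The key observation is that the events defining $H_v\cap H_u$ depend on a point $w\in S^{d-1}$ only through $\Pi w$, since $v^\top w=v^\top\Pi w$ and $u^\top w=u^\top\Pi w$ (because $u,v\in V$). Hence, writing $w$ for a uniformly random point on $S^{d-1}$, the ratio in the statement equals $\Pr\!\left(v^\top\Pi w>0\ \wedge\ u^\top\Pi w>0\right)$. Next I would invoke the standard fact that the normalized projection $\Pi w/\norm{\Pi w}$ is uniformly distributed on the unit circle $S^1_V$ of $V$. This is immediate from the Gaussian representation $w=g/\norm{g}$ with $g\sim\mathcal{N}(0,I_d)$: then $\Pi g$ is a two-dimensional standard Gaussian vector in $V$, and rotational invariance of the Gaussian within $V$ gives uniformity of its direction (the exceptional event $\Pi g=0$ has probability zero). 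Consequently the quantity of interest is the normalized arclength of $\{\omega\in S^1_V:\ v^\top\omega>0\ \wedge\ u^\top\omega>0\}$.

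Finally I would carry out the planar computation. Choosing an orthonormal basis of $V$ in which $v$ lies at angle $0$ and $u$ at angle $\gamma:=\arccos(u^\top v)\in(0,\pi)$, the half-plane $\{v^\top\omega>0\}$ corresponds to the open arc of angles in $(-\tfrac{\pi}{2},\tfrac{\pi}{2})$ and $\{u^\top\omega>0\}$ to the arc $(\gamma-\tfrac{\pi}{2},\gamma+\tfrac{\pi}{2})$; their intersection is the arc $(\gamma-\tfrac{\pi}{2},\tfrac{\pi}{2})$, of length $\pi-\gamma$. Dividing by the total circumference $2\pi$ yields relative measure $\tfrac{\pi-\gamma}{2\pi}=\tfrac{\pi-\arccos(u^\top v)}{2\pi}$, as claimed.

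I do not anticipate a serious obstacle: the only step needing a little care is the justification that $\Pi w/\norm{\Pi w}$ is uniform on $S^1_V$, which the Gaussian-representation argument handles cleanly, and the degenerate-case bookkeeping together with the elementary arc intersection are routine.
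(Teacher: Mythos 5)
Your proof is correct, but it takes a genuinely different route from the paper's. The paper works directly in spherical coordinates on $S^{d-1}$: after disposing of the degenerate cases as you do, it fixes explicit spherical coordinates for $u$ and $v$ (so that only the last angle $\phi_{d-1}$ distinguishes them), writes both $\mathrm{Vol}(S^{d-1})$ and $\mathrm{Vol}(H_u\cap H_v)$ as iterated integrals of $\prod_i\sin^{d-1-i}(\phi_i)$, and observes that the common factor cancels, leaving a ratio of angular intervals. Your argument instead projects a uniform point of $S^{d-1}$ onto the two-dimensional subspace $V=\mathrm{span}\{u,v\}$ and invokes the Gaussian representation $w=g/\|g\|$ to conclude that the normalized projection $\Pi w/\|\Pi w\|$ is uniform on the unit circle of $V$, after which the problem becomes an elementary arc-intersection. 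The two buy slightly different things: the paper's version is self-contained and makes the cancellation of the $(d-2)$-dimensional ``spherical cap'' factor visually explicit, while yours is shorter, avoids the spherical-coordinate bookkeeping (and the implicit rotation that puts $u,v$ into the chosen normal form), and isolates the genuinely two-dimensional nature of the question via rotational invariance of the Gaussian. Both correctly reduce to measuring an arc of length $\pi-\arccos(u^\top v)$ out of $2\pi$, so the conclusion is the same.
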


\begin{proof}

We begin by dealing with the simple case of linearly dependent $u$ and $v$. Since $u,v\in S^{d-1}$ they are linearly dependent if and only if $u\in\{v,-v\}$. In case $u=v$, the two hemispheres coincide, so we have
\begin{align*}
    \frac{Vol\left(H_{v}\cap H_{v}\right)}{Vol\left(S^{d-1}\right)}=\frac{Vol\left(H_{v}\right)}{Vol\left(S^{d-1}\right)}=\frac{1}{2}=\frac{\pi-\arccos\left(1\right)}{2\pi}=\frac{\pi-\arccos\left(v^\top v\right)}{2\pi}~.
\end{align*}
In the case of $u=-v$, the hemispheres are disjoint, and we have
\begin{align*}
        \frac{Vol\left(H_{v}\cap H_{-v}\right)}{Vol\left(S^{d-1}\right)}=\frac{Vol\left(\emptyset\right)}{Vol\left(S^{d-1}\right)}=0=\frac{\pi-\arccos\left(-1\right)}{2\pi}=\frac{\pi-\arccos\left(-v^\top v\right)}{2\pi}~.
\end{align*}
With those cases solved, we may move on to the the case of linearly dependent vectors. Using spherical coordinates will be much more natural and convenient in this sort of volume calculations. Recall that any point $u\in S^{d-1}$ can be described with $d-1$ angles $\phi=\{\phi_i\}_{i=1}^{d-1}\in[0,\pi]^{d-2}\times[0,2\pi] $, where the transition from spherical coordinates to Cartesian coordinates is done by the map $g:[0,\pi]^{d-2}\times[0,2\pi]\rightarrow S^{d-1}$  which is defined as
\begin{align*}
    \forall i\leq d-1 \,\,\,\,u_i=(g(\phi))_i&=\cos(\phi_i)\prod_{j=1}^{i-1}\sin(\phi_j)\\
    u_d=(g(\phi))_{d}&=\prod_{j=1}^{d-1}\sin(\phi_j)~.
\end{align*}
Also, when integrating over some function $f:S^{d-1}\rightarrow\reals$ (in our case, the constant function $1$), we can switch between the coordinates systems by using the following transition
\begin{align*}
    \int_{S^{d-1}} f(x)dx=\int_{[0,\pi]^{d-2}\times[0,2\pi]} \prod _{j=1}^{d-2}\sin^{d-1-j}(\phi_i)d\phi_1d\phi_2...d\phi_{d-1}~.
\end{align*}
From now on we will describe $u$ and $v$ with spherical coordinates and will use the notation $u_c,v_c$ to describe their Cartesian coordinates. Without loss of generality (up to a change of basis and assuming $u,v$ are linearly independent
vectors), we can assume that $u=(\frac{\pi}{2},\frac{\pi}{2},...,\frac{\pi}{2})$ and $v=(\frac{\pi}{2},\frac{\pi}{2},...,\frac{\pi}{2},\psi)$ (This is equivalent to assuming $u_c=(0,0,..,1)$ and $v_c=(0,0,...,\cos(\psi),\sin(\psi))$ in Cartesian coordinates). The open hemispheres around $u,v$ are $H_u=\{\phi\in S^{d-1}:\phi_{d-1}\in(0,\pi)\}$ and $H_v=\{\phi\in S^{d-1}:\phi_{d-1}\in(\psi\pm\frac{\pi}{2})\,\,\mod{2\pi}\}$ respectively. Since volumes are not affected by rotations and reflections, we may assume that $\psi\in[\frac{\pi}{2},\frac{3\pi}{2}]$ 
(otherwise, we can create this situation by the reflection of the Cartesian coordinate $x_{d-1}$). Thus, the intersection of the hemispheres corresponds to $H_u\cap H_v=\{\phi\in S^{d-1}:\phi_{d-1}\in[\psi-  \frac{\pi}{2},\pi]\}$ .  \\
We now turn to calculating the relevant volumes. Using the transition to spherical coordinates, the volume of the $d$-dimensional unit sphere can be described as
\begin{align*}
Vol\left(S^{d-1}\right)&= \int_{0}^{2\pi}\left(\int_{[0,\pi]^{d-2}}\prod_{i=1}^{d-2}\sin^{d-1-i}\left(\phi_{i}\right)d\phi_1...d\phi_{d-2}\right)d\phi_{d-1}\\
&=2\pi\int_{[0,\pi]^{d-2}}\prod_{i=1}^{d-2}\sin^{d-1-i}\left(\phi_{i}\right)d\phi_1...d\phi_{d-2}~.    
\end{align*}
As for the intersection of $H_u$ and $H_v$, its volume is
\begin{align*}
    Vol\left(H_{u}\cap H_{v}\right)&=\int_{\psi-\frac{\pi}{2}}^{ \pi}\left(\int_{[0,\pi]^{d-2}}\prod_{i=1}^{d-2}\sin^{d-1-i}\left(\phi_{i}\right)d\phi_1...d\phi_{d-2}\right)d\phi_{d-1}\\
&=\left(\frac{3\pi}{2}-\psi\right)\int_{[0,\pi]^{d-2}}\prod_{i=1}^{d-2}\sin^{d-1-i}\left(\phi_{i}\right)d\phi_1...d\phi_{d-2}~.
\end{align*}
Note that the inner product between $u,v$ is exactly $u^\top v=\sin\psi=\cos(\psi-\frac{\pi}{2})$ and hence $\arccos(u^\top v)=\psi-\frac{\pi}{2}$. With this in mind, and combining the  expressions for $Vol(S^{d-1})$ and $Vol(H_u\cap H_v)$ above, we get
\begin{align*}
\frac{Vol\left(H_{u}\cap H_{v}\right)}{Vol\left(S^{d-1}\right)}&=\frac{\frac{3\pi}{2}-\psi}{2\pi}\cdot \frac{\int_{[0,\pi]^{d-2}}\prod_{i=1}^{d-2}\sin^{d-1-i}\left(\phi_{i}\right)d\phi_1...d\phi_{d-2}}{\int_{[0,\pi]^{d-2}}\prod_{i=1}^{d-2}\sin^{d-1-i}\left(\phi_{i}\right)d\phi_1...d\phi_{d-2}}\\
&=\frac{\pi-(\psi-\frac{\pi}{2})}{2\pi}=\frac{\pi-\arccos (u^\top v)}{2\pi}~.
\end{align*}
\end{proof}
\end{document}